\documentclass{article}

\usepackage{microtype}
\usepackage{graphicx}
\usepackage{subcaption}
\usepackage{booktabs} 

\usepackage{hyperref}

\usepackage[utf8]{inputenc} 
\usepackage[T1]{fontenc}    
 \usepackage{url}            
\usepackage{booktabs}       
\usepackage{amsfonts}       
\usepackage{nicefrac}       
\usepackage{microtype} 
\usepackage{mathtools}
\usepackage{tikz}
\usetikzlibrary{arrows.meta}
\usetikzlibrary{decorations.pathreplacing}

\usepackage[dvipsnames]{xcolor}

\usepackage{subfiles}
\usepackage{times}
\usepackage{helvet}
\usepackage{courier}
\usepackage{graphicx}
\usepackage{natbib}
\usepackage{enumerate}
\usepackage[margin=1in]{geometry}
\usepackage{caption}
\usepackage{soul}
\usepackage[utf8]{inputenc}
\usepackage{amsmath}
\usepackage{amssymb}
\usepackage{amsthm}
\newtheorem{theorem}{Theorem}
\newtheorem{lemma}{Lemma}

\newtheorem{definition}{Definition}
\newtheorem{assumption}{Assumption}

\usepackage{algorithm}
\usepackage{algorithmic}
\usepackage{float}
\usepackage{enumitem}
\usepackage{array}
\usepackage{tabularx}

\usepackage{nicefrac}
\usepackage{xspace}

\newcommand{\mS}{\mathcal{S}}

\newcommand{\mA}{\mathcal{A}}

\newcommand{\argmax}{\text{argmax}}

\newcommand{\vect}{\mathbf}
\newcommand{\GPO}{\texttt{Greedy Policy Optimization}\xspace}

\newcommand{\mE}{\mathbb{E}}

\begin{document}


    
\title{Multi-Agent Reinforcement Learning with Submodular Reward}

\author{
Wenjing Chen$^{1}$ \and
Chengyuan Qian$^{1}$ \and
Shuo Xing$^{1}$ \and
Yi Zhou$^{1}$ \and
Victoria G. Crawford$^{1}$ \\
$^{1}$Department of Computer Science and Engineering, Texas A\&M University \\
\texttt{\{jj9754, cyqian, shuoxing, yi.zhou, vcrawford\}@tamu.edu}
}

\date{}

\maketitle

\begin{abstract}
In this paper, we study cooperative multi-agent reinforcement learning (MARL) where 
the joint reward exhibits submodularity, which is a natural property capturing 
diminishing marginal returns when adding agents to a team. Unlike standard 
MARL with additive rewards, submodular rewards model realistic scenarios 
where agent contributions overlap (e.g., multi-drone surveillance, 
collaborative exploration). We provide the first formal framework for this 
setting and develop algorithms with provable guarantees on sample efficiency and regret bound. For known dynamics, 
our greedy policy optimization achieves a $1/2$-approximation with polynomial 
complexity in the number of agents $K$, overcoming the exponential curse of 
dimensionality inherent in joint policy optimization. For unknown dynamics, 
we propose a UCB-based learning algorithm achieving a $1/2$-regret of $O(H^2KS\sqrt{AT})$ over 
$T$ episodes. 
\end{abstract}

\section{Introduction}
Cooperative multi-agent reinforcement learning (MARL) has emerged as a powerful paradigm for coordinating teams of interactive agents in complex environments, with applications ranging from robotic systems~\citep{huttenrauch2017guided,ismail2018survey} to network optimization~\citep{zhang2011coordinated} and resource allocation~\citep{rauch2024cooperative,tilahun2023multi}. In cooperative MARL, agents coordinate their actions in a stochastic environment to maximize the expected cumulative reward, which is formalized as a multi-agent Markov decision process (MDP).


A common simplifying assumption is that the joint reward observed at each time step is a \emph{linear} (additive) function of individual agent contributions~\citep{li2021celebrating,hsu2024randomized}. While this assumption facilitates decomposition and analysis, it fundamentally limits the expressiveness of the reward model. In many real-world collaborative tasks, an agent’s contribution to the global objective depends critically on the actions of other agents, leading to interaction effects such as overlap, redundancy, or saturation that cannot be adequately captured by additive reward structures.

Such phenomena arise naturally in a variety of collaborative decision-making problems. In multi-robot map-exploration, for instance, a team of robots explores an unknown environment to build a comprehensive map with the goal of maximizing information gain~\citep{ma2020hierarchical}. The information gained by each robot may overlap with that of others and exhibits a diminishing returns property as more robots are deployed to the search task. Similarly, in multi-drone surveillance and tracking, a team of drones monitors a large geographic area with the objective of maximizing the number of distinct objects under surveillance \citep{zhang2011coordinated}. Here, the coverage achieved by different drones often overlaps, again resulting in diminishing returns as more agents are assigned to the task. Ignoring such overlap effects can cause agents to converge to redundant behaviors, preventing efficient exploration and undermining effective cooperation (see Figure~\ref{fig:diminish}).

To model such settings, we study a natural class of cooperative MARL problems where the joint reward exhibits \textit{submodularity}, which is a fundamental property from combinatorial optimization that captures diminishing marginal returns \citep{hassidim2017submodular,iyer2021submodular}. 
Submodular functions are ubiquitous in machine learning and optimization~\cite{chen2024fair, yue2011linear}, encompassing important cases such as set cover, entropy, mutual information, and facility location objectives. This property naturally describes cooperative MARL tasks where agents’ contributions overlap, the diversity of the agents' behavior is important, or where coverage gains saturate, and the marginal benefit of adding an agent to a smaller team is at least as large as adding it to a larger team~\citep{li2021celebrating}. Notably, the coverage objective in multi-drone surveillance is monotone submodular \citep{zhang2011coordinated}, and information gain in collaborative exploration can be quantified via entropy or mutual information, both of which are submodular functions \citep{ma2020hierarchical}.

Motivated by these considerations, we introduce Multi-Agent Reinforcement Learning with Submodular Rewards (MARLS), a novel cooperative MARL framework in which agents jointly influence a monotone submodular reward function at each time step. We consider an episodic setting with shared state and action spaces, where the goal is to maximize the total expected cumulative return. Despite the structural regularity imposed by monotonicity and submodularity, MARLS presents significant algorithmic and computational challenges. First, even with the monotonicity and submodularity assumptions, finding optimal policies in MARLS remains computationally intractable: As we illustrate in Section \ref{sec:computational_challenges}, the single-step case reduces to the NP-hard problem of submodular maximization under partition matroid constraints. Second, although the classical Bellman optimality equation (Lemma~\ref{lemma:bellman})~\citep{bellman1957markovian,sutton1998reinforcement} provides a recursive characterization of optimal policies, directly evaluating the value function of a given stochastic policy requires time and memory exponential in the number of agents $K$.  Finally,  while submodular maximization has been extensively studied in static settings, extending it to sequential decision-making settings introduces substantial difficulties stemming from stochastic transition dynamics and the dependence of each agent’s marginal reward contribution on the policies and randomness of previously optimized agents. This is further amplified when the transition dynamics are unknown.
















\begin{figure}[t]
    \centering
    \includegraphics[width=\linewidth]{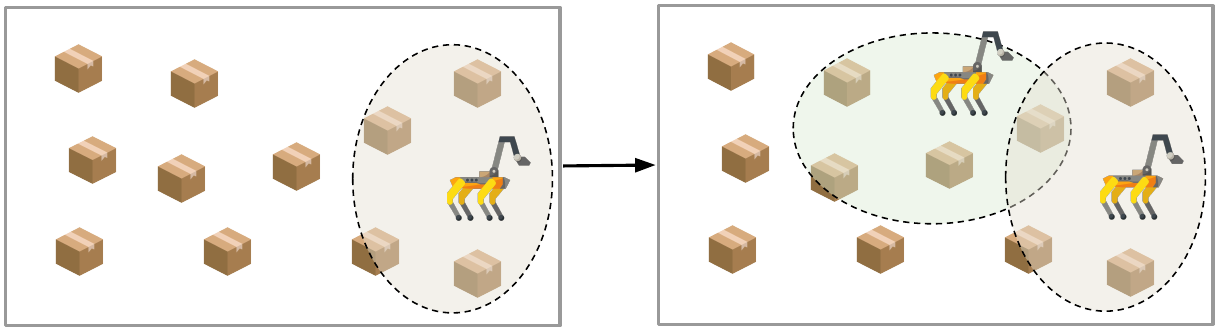}
    \vspace{-0.2in}
    \caption{{Demonstration of multi-agent collaboration.}
    }
    \label{fig:diminish}
    \vspace{-0.17in}
\end{figure}

In particular, the contributions of the paper are as follows:
\begin{enumerate}[leftmargin=*, nosep]
    \item We propose the Multi-Agent Reinforcement Learning with Submodular Rewards (MARLS) setting in Section~\ref{sec:probdefn}. We characterize the inherent computational challenges of MARL and establish that even the single-step MARLS problem is NP-hard (Section~\ref{sec:computational_challenges}). To address these challenges, we consider the factorized policies and propose the marginal value decomposition method. This motivates our development of tractable approximation algorithms for
MARLS.
    \item We propose \GPO in Section~\ref{sec:known_P} for the case where the transition dynamics are known. \GPO  leverages the monotonicity and submodularity of the reward function as a tractable solution for MARLS. Our key insight is that the submodular structure enables efficient greedy sequential optimization to find a \textit{decomposable} policy, which admits polynomial-size representations, and enables distributed execution (see Section~\ref{sec:decomposable}). Despite this structural restriction, we prove that \GPO{} achieves a $1/2$
-approximation guarantee with respect to the optimal (potentially non-decomposable) joint policy.
    \item For the case where the transition dynamics are unknown, in Section~\ref{sec:unknown_P}, we present and analyze an upper confidence bound-based reinforcement learning algorithm, \texttt{UCB-GVI}, that combines optimistic exploration with greedy submodular maximization. \texttt{UCB-GVI} achieves a $\tfrac{1}{2}$-approximation regret of $O(S^2AH^3K^2\log T + H^2KS\sqrt{AT})$ over $T$ episodes (Theorem~\ref{thm:UCbalg}), which is the first sublinear regret guarantee for MARLS. We present the major technical novelty and contributions of our analysis in Section \ref{sec:unknown_P}.
\end{enumerate}

\section{Related Work}
\subsection{Multi-Agent Reinforcement Learning}
A fundamental challenge in cooperative MARL is the exponential growth of the joint state-action space with the number of agents $K$, which often renders traditional single-agent RL methods impractical. To address such complexity, decentralized policies are typically employed~\citep{sukhbaatar2016learning,foerster2016learning,chen2022sample}, where each agent selects its action conditioned only on its local action-observation history. While policy execution is decentralized, training may be centralized. Decentralized approaches focus on communication efficiency and learning coordination protocols. Another line of work addresses this scalability challenge through value decomposition methods, which impose structural constraints on the joint action-value function.~\citep{sunehag2017value,rashid2020monotonic,ma2021modeling,dou2022understanding} address scalability by imposing structural constraints such that local optima on per-agent action-values correspond to the global optimum on the joint action-value.

Some existing work in multi-agent reinforcement learning has addressed the need to capture dependency structures in real-world collaborative tasks by encouraging diverse behavior between agents~\citep{mahajan2019maven,wang2020influence,li2021celebrating,jiang2021emergence}. \citeauthor{li2021celebrating} proposed a novel information-theoretic objective based on the mutual information between agents' identities and trajectories, which is a special case of ours.  
However, most existing methods implicitly assume that agent contributions combine in simple ways, either additively or through monotonic mixing, and do not exploit the intrinsic structure of reward functions that exhibit submodularity.

\subsection{Submodular Maximization under Matroid Constraint}
Monotone submodular maximization with a matroid constraint (MSMM) is an NP-hard combinatorial optimization problem\footnote{MSMM is proven to be NP-hard unless NP has $n^{\mathcal{O}(\log(\log(n)))}$-time deterministic algorithms \citep{wolsey1982analysis,feige1998}.} and represents one of the most extensively studied problems in submodular optimization \citep{feldman2025streaming,calinescu2011maximizing,banihashem2024dynamic}. Given the computational complexity of MSMM, research efforts have focused on developing and analyzing approximation algorithms. The standard greedy algorithm achieves a $\tfrac{1}{2}$-approximation ratio \citep{fisher1978analysis}, while more sophisticated algorithms using continuous extensions (multilinear relaxations) attain a $\left(1-\tfrac{1}{e}\right)$-approximation \citep{calinescu2007maximizing,badanidiyuru2014fast}. However, these latter methods typically incur significantly higher query complexity than the comparatively simple greedy approach. As detailed in Section \ref{sec:prob_setup}, the inherent difficulty of MSMM contributes substantially to the computational challenges of the setting considered in this work. Specifically, our problem reduces to MSMM when the Markov decision process is restricted to a single step. 

Additional related lines of work include combinatorial multi-armed bandits with submodular reward structures, as well as reinforcement learning methods with provable guarantees and regret analysis. A more detailed discussion of these works is provided in Appendix~\ref{appdx:related_work}.

\section{Preliminaries}
 \label{sec:prob_setup}
 In this section, we introduce the background and formulation of Multi-Agent Reinforcement Learning with Submodular Rewards (MARLS). We begin with key notation and then present the core problem formulation.
 
 \subsection{Multi-Agent MDP with Submodular Reward}
 \label{sec:probdefn}
We consider a cooperative episodic multi-agent reinforcement learning setting with $K$ agents. The environment is modeled as a tabular Markov Decision Process (MDP) where agents interact over $H$ discrete time steps per episode. Agents share the same local state space \( \mathcal{S} \) and action space \( \mathcal{A} \). Their interaction is modeled by a Multi-Agent Markov Decision Process with a \emph{submodular} global reward function. 
Formally, we define a Multi-Agent MDP with Submodular Reward (MAMDP-SR) as follows:

\begin{definition}[Multi-Agent Markov Decision Processes with Submodular Reward (MAMDP-SR)]
MAMDP-SR is characterized by a tuple
$M$=($\mathcal{S}$, $\mathcal{A}$, $\{P_{i,h}\}_{i \in [K],h \in [H]}$, $r$, $K$, $H$)
where:
\vspace{-10pt}
\begin{enumerate}[noitemsep]
    \item $H$ is the episode length,
    \item $K$ is the number of agents, 
    \item $\mathcal{S}$ is a finite state space common to all agents,
    \item $\mathcal{A}$ is a finite action space common to all agents,
    \item $P_{i,h}: \mathcal{S} \times \mathcal{A} \times \mathcal{S} \to [0,1]$ is the transition probability function for agent $i$ at time step $h$, with $P_{i,h}(s' \mid s, a)$ denoting the probability that agent $i$ transitions to state $s'$ after taking action $a$ in state $s$ at step $h$.
    \item $r:(\mathcal{S}\times\mA)^K\rightarrow[0,1]$ is the global reward function dependent on the joint state space and action space of all agents. Further, $r$ is assumed to be monotone and submodular (see Section \ref{sec:submodular_rewards}).
\end{enumerate}
\end{definition}

 \subsection{Episode Dynamics}
 At the start of an episode ($h=1$), the agents are initialized at joint state
\[
\bar{\vect{s}}_1 = (\bar{s}_1^1, \dots, \bar{s}_1^K).
\] Then at each time step $h\in[H]$, each agent $i\in[K]$ observes its state $s_h^i\in\mathcal{S}$, and chooses an action $a_h^i$ from the action space $\mathcal{A}$. Let us denote the joint state vector and action vector as $\vect{s}_h = (s_h^1, \dots, s_h^K)$ and $ \vect{a}_h = (a_h^1, \dots, a_h^K)$ where $s_h^i\in\mS$ is the state of the $i$-th agent, and $a_h^i$ is the action chosen by the agent $i$ at time step $h$. The environment then emits a global reward \( r(\mathbf{s}_h, \mathbf{a}_h) \) and each agent transitions independently to its next state \( s_{h+1}^i \) which is drawn from the distribution $P_{i,h}(\cdot|s_h^i,a_h^i)$ at time step $h+1$.  For notation simplicity, we use $P_{h}(\vect{s}_{h+1}|\vect{s}_h,\vect{a}_h)$ to denote the probability of transition to joint state $\vect{s}_{h+1}$ from joint state-action pair $(\vect{s}_h,\vect{a}_h)$ at time step $h$, and $P$ to denote the overall transition model of all the agents at all time steps. 
We consider the setting where the transition dynamics of each agent are independent. i.e., $P_{h}(\vect{s}_{h+1}|\vect{s}_h,\vect{a}_h)=\prod_{i=1}^KP_{i,h}(s^i_{h+1}|s_h^i,a_h^i)$  where $s_h^i\in\mS$ is the state of the $i$-th agent, and $a_h^i$ is the action chosen by the agent $i$ at time step $h$. 
  
  A (stochastic) policy at step \( h \) is a mapping \( \pi_h: \mathcal{S}^K \times \mathcal{A}^K \to [0,1] \) where \( \pi_h(\mathbf{s}, \mathbf{a}) \) is the probability of taking joint action \( \mathbf{a} \) in joint state \( \mathbf{s} \). It then follows that $\sum_{\vect{a}:\vect{a}\in\mA^K}\pi_{h}(\vect{s},\vect{a})=1$ for any $h\in[H]$ and $\vect{s}\in \mathcal{S}^K$. For notation simplicity, we use $\pi:= \{\pi_h\}_{h\in[H]}$ to denote the policy over all time steps.

\subsection{Submodular Rewards}
\label{sec:submodular_rewards}

The key characteristic of our model is that the global reward is not additive across agents, but instead reflects diminishing returns---a property captured by submodularity. 
Formally, we assume the reward is determined by a monotone submodular set function \( f: 2^{\mathcal{S} \times \mathcal{A}} \to [0,1] \) evaluated on the set of current agent state-action pairs:
\begin{align}\label{eq:reward-set}
    r(\mathbf{s}, \mathbf{a}) = f\bigl(\{ (s^1, a^1), (s^2, a^2), \dots, (s^K, a^K) \} \bigr).
\end{align}

where $\vect{s}=(s^1,s^2,...,s^K)$ is the joint state, and $\vect{a}=(a^1,a^2,...,a^K)$ is the joint action. Here we use the notation $\{x^1,x^2,...x^n\}$ to denote the set of the elements of $x^1,x^2,...,x^n$. The definition indicates that the reward is determined by the state-action pair of each agent. Furthermore, here we assume that there exists an oracle access $f$ for any subset of state-action pairs. i.e., for any $S\in2^{\mathcal{S}\times\mA}$, we have oracle access to $f(S)$.
  \begin{assumption}[Monotone Submodular Reward]
        We assume that for any subsets of the state-action pairs $A\subseteq B\subseteq 2^{\mS\times\mA}$, and state-action pair $x$ such that $x\in 2^{\mS\times\mA}/B$, it follows that 
      \begin{align*}
    f(A\cup\{x\})-f(A)&\geq f(B\cup\{x\})-f(B)\\
    f(A)&\leq f(B).
      \end{align*}
  \end{assumption}
We define the marginal gain of adding a new state-action pair $x$ to a state-action pair set $A$ as $\Delta f(A,x)$, i.e., $\Delta f(A,x)= f(A\cup x)-f(A)$. 
  

Next, we define the objective of the problem, which is to maximize the total reward obtained in an episode. To do that, we define value function $V$ and $Q$ as follows. $V_h^\pi(\vect{s},\vect{a})$ is defined as the expected total reward from step $h$ to the final time step $H$ given that $\vect{s}_h=\vect{s}$, and $\vect{a}_h=\vect{a}$.
\begin{align*}
    V_h^\pi(\vect{s})=\mE_\pi \Big[\sum_{t=h}^H r(\vect{s}_t,\vect{a}_t)|\vect{s}_h=\vect{s}\Big].
\end{align*}
The action-value function \( Q_h^\pi: \mathcal{S}^K \times \mathcal{A}^K \to \mathbb{R} \) is defined analogously conditioned on taking joint action \( \mathbf{a}_h = \mathbf{a} \):
\[
Q_h^\pi(\mathbf{s}, \mathbf{a}) \coloneqq \mathbb{E}_\pi\Bigl[ \sum_{t=h}^{H} r(\mathbf{s}_t, \mathbf{a}_t) \,\Big|\, \mathbf{s}_h = \mathbf{s}, \mathbf{a}_h = \mathbf{a} \Bigr].
\]
The goal is to find an optimal policy \( \pi^* \) that maximizes the expected total reward from the initial state:
\[
\pi^* \in \argmax_{\pi} \; V_1^{\pi}(\bar{\mathbf{s}}_1).
\]
Next, we illustrate the above problem setup in the example of multi-robot coordination and tracking \cite{xu2023bandit}. In this setting, a team of $K$ drones operates cooperatively, with each drone modeled as an individual agent. The object is to determine a global strategy for these drones to capture as many objects as possible. The state space $\mS$ is the spatial location of the drones, and the action space $\mA$ specifies the movement actions available to each drone. Therefore, the transition dynamics of different drones are independent. The reward at time \( t \) is the number of distinct objects covered by the drone team. If \( M \) objects exist, we can define
 $r_t(\vect{s},\vect{a})=\sum_{i=1}^M \mathbb{I}(\text{object } i \text{ is captured by the agents in state $\vect{s}$})$, where $\mathbb{I}(\text{object }i \text{ is captured by the agents in state $\vect{s}$})$ is an indicator function about whether the object $i$ is covered by the drones in the joint state $\vect{s}$. This is a submodular set-cover function bounded by $[0,M]$.\footnote{In this paper, we assume the function $f$ to be bounded in $[0,1]$, and our results can be extended to the $[0,M]$ case by scaling.}

\section{Main Results}

In this section, we present our main algorithmic and theoretical contributions for MARLS. We begin by establishing the computational barriers inherent in the problem,  then propose tractable solutions based on sequential greedy policy optimization that leverage submodularity under two settings: the planning setting (known transition dynamics) and the online learning setting (unknown transition dynamics).

\subsection{Computational Challenges and Value Function Decomposition}
\label{sec:cc_vfd}

\subsubsection{Computational Challenges}
\label{sec:computational_challenges}
The classical Bellman optimality equation for multi-agent MDPs provides a recursive characterization of general and optimal policies (See Lemma \ref{lemma:bellman} in Appendix \ref{appdx:bellman_equation}). 
However, directly solving the Bellman equation and the optimal policy, as is formulated in Equation \eqref{eqn:bellman_eqn} and \eqref{eqn:bellman_opt} from Lemma \ref{lemma:bellman}, 
are computationally infeasible: 
  evaluating the value functions $V$ and $Q$ of any given stochastic policy $\pi$ as in \eqref{eqn:bellman_eqn} requires $O(|\mS|^{2K}|\mA|^K H)$ time and $O(|\mS|^K |\mA|^K H)$ memory, both exponential in the number of agents $K$. This exponential complexity arises from two fundamental challenges:
  \begin{enumerate}
      \item A general joint policy $\pi = \{\pi_h(\vect{s},\vect{a})\}_{h\in[H]}$ requires $O(|\mS|^K |\mA|^K H)$ space to store, which is exponential in $K$.\label{item:exp_policy} 
      \item The Bellman equation update also requires computation and memory complexities that are exponential to the agent number $K$.
  \end{enumerate}
  To circumvent this curse of dimensionality, we exploit the submodularity structure of the reward function. Our goal is to propose an algorithm that outputs a policy with high expected total reward that runs in computation complexity and memory complexity both polynomial in $K$.  
  
  However, below we explain that even with the submodularity assumption, finding optimal policies in MARLS with polynomial complexity in $K$ remains computationally intractable. To illustrate this, consider the single-step case ($H=1$). The value function reduces to
  \begin{align*}
    Q_1^\pi(\mathbf{s},\mathbf{a})&=\mathbb{E}_\pi[ r(\mathbf{s}_1,\mathbf{a}_1)|\mathbf{s}_1=\mathbf{s}, \mathbf{a}_1=\mathbf{a}]\\
    &= r(\mathbf{s},\mathbf{a}).
\end{align*}
By the Bellman optimality equation, the optimal policy satisfies
\begin{align*}
\pi^*_1(\bar{\mathbf{s}}_1)&=\arg\max_{\mathbf{a}\in\mathcal{A}^K}r(\bar{\mathbf{s}}_1,\mathbf{a})\\
&=\arg\max_{a^1,\ldots,a^K} f\bigl(\{ (\bar{s}^1, a^1), \ldots, (\bar{s}^K, a^K) \} \bigr).
\end{align*}
This is equivalent to submodular maximization under a partition matroid constraint, where each agent's action space forms a partition and the constraint requires selecting at most one action per partition (see Appendix~\ref{appdx:complexity}). Since partition matroids are a special case of general matroid constraints, and submodular maximization under matroid constraints is NP-hard unless $\text{NP} \subseteq n^{\mathcal{O}(\log\log n)}$-time \citep{feige1998}, finding optimal policies in MARLS is computationally intractable even for $H=1$.

Fortunately, the classical greedy algorithm for monotone submodular maximization achieves a $1/2$-approximation \citep{wolsey1982analysis}. Motivated by this, we develop a greedy sequential policy optimization approach that leverages submodular structure to achieve polynomial complexity with provable approximation guarantees.

  \subsubsection{Joint Policies and Reward Decomposition}
  
  \label{sec:decomposable}
  To overcome the challenge of the exponential memory complexity as illustrated in \ref{item:exp_policy}, we define another class of policies that can be factorized and decomposed into local agent policies, which are also referred to as decomposable policies: 
  \begin{definition} A policy $\pi$ is called decomposable i.f.f. there exists local policies $\{\pi_{i,h}\}_{i\in[K], h\in[H]}$
such that  $$\pi_{h}(\vect{s},\vect{a})=\prod_i \pi_{i,h}(s^i,a^i).$$ 
Here $ \pi_{i,h}(s,a)$ is the probability of the $i$-th agent choosing action $a$ given state $s$ at time step $h$, with $\sum_{a}\pi_{i,h}(s,a)=1$ for any $i\in[K]$, $h\in[H]$ and $s\in \mathcal{S}$.
  \end{definition} 

Following the previous convention, we use $\pi_i:=\{\pi_{i,h}\}_{h\in[H]}$ to denote the $i$-th agent's policy over all time steps.\footnote{Note that $\pi_i$ and $\pi_h$ may appear in similar forms. The subscript distinguishes their meaning: letters $i$ or $j$ denote an agent's policy over all steps, and $h$ denote the joint policy at a certain step.}
  The joint policies of the agents are indexed by $[i]=\{1,\dots,i\}$ as $\pi_{[i]}$. More specifically, $\pi_{[i],h}(s^{[i]},a^{[i]})=\prod_{l=1}^i\pi_{l,h}(s^l,a^l)$, where $s^{[i]}=(s^1,s^2,...,s^i),a^{[i]}=(a^1,a^2,...a^i)$ denotes the partial state vector and action vector. 
  Notice that decomposable policies admit polynomial-size representations (requiring only $O(K|\mS||\mA|H)$ memory) and enable distributed execution, as each agent can independently select actions based on its local state. However, restricting to decomposable policies introduces approximation error, since the optimal policy $\pi^*$ may not be decomposable. 

 To tractably optimize the global reward, we decompose the submodular function $f$ into marginal contributions. Define the partial reward induced from the agents in the index set $[i]$ for the state $\vect{s}$ and action $\vect{a}$ as
\[
r(\vect{s},\vect{a};[i]) \triangleq f(\{(s^l,a^l)\}_{l\in[i]}),
\]
and the marginal gain of adding agent $i$ as
\[
\Delta r_i(\vect{s},\vect{a}) \triangleq r(\vect{s},\vect{a};[i]) - r(\vect{s},\vect{a};[i-1]).
\]
Then, by telescoping, the total reward decomposes as
\[
r(\vect{s},\vect{a}) = \sum_{i=1}^K \Delta r_i(\vect{s},\vect{a}).
\]
\subsubsection{Marginal Value and $Q$-Functions}\label{sec:mv_qf}
Given fixed policies $\pi_{[i-1]} = \{\pi_{j,h}\}_{ j\in[i-1],h\in[H]}$ for the first $i-1$ agents and the initial state $\bar{\vect{s}}_1$, the state-action trajectory distribution $\{(s_h^j, a_h^j)\}_{h\in[H], j\in[i-1]}$ for the first $i-1$ agents is determined. 
Consequently, the expected marginal gain from adding the $i$-th agent is also fixed. 
From this perspective, agent $i$ can be viewed as operating in an environment induced by the fixed behavior of agents $1,\dots,i-1$, where its reward corresponds to the marginal gain it contributes relative to the existing team. 
Thus, fixing the policies of the first $i-1$ agents induces a well-defined reward function for the $i$-th agent.
  For a given policy $\pi_{[i-1]}$, we define the \emph{expected marginal reward} for agent $i$ executing action $a$ at state $s$ and step $h$ as 
\begin{multline}
\label{eq:marginal_reward}
R_{i,h}(s,a \mid \pi_{[i-1]}) \\
=  \mathbb{E}\Big[\Delta r_i(\mathbf{s}_h,\mathbf{a}_h)\Big| s_h^i = s, a_h^i=a,\pi_{[i-1]}\Big],
\end{multline}  
  where the expectation is taken over the distribution of the first $i-1$ agents' state-action pairs $(s_h^{[i-1]},a_h^{[i-1]})$ induced by $\pi_{[i-1]}$.  
  For fixed policies \( \pi_{[i-1]} \) of the first \( i-1 \) agents, we define agent \( i \)'s \emph{marginal value function} with policy $\pi_i$ as: 
  \begin{align*}
      &V_h^{\pi_i}(s;\pi_{[i-1]})=\mE_{\pi_{i}}\Big[{\sum_{t=h}^H \Delta r_i(\vect{s}_t,\vect{a}_t)}|s_h^i=s,\pi_{[i-1]}\Big]\\
       &Q_h^{\pi_i}(s,a;\pi_{[i-1]})&\\
       &\qquad\qquad=\mE_{\pi_{i}}\Big[{\sum_{t=h}^H \Delta r_i(\vect{s}_t,\vect{a}_t)}|s_h^i=s,a_h^i=a,\pi_{[i-1]}\Big].
  \end{align*}
  These marginal value functions measure the expected total marginal reward contributed by agent $i$ when added to the team $[i-1]$. Critically, the following lemma demonstrates that once the policies of preceding agents are fixed, agent $i$ faces a single-agent MDP with a time-varying reward $R_{i,h}(s, a \mid \pi_{[i-1]}) $.

  \begin{lemma}[Bellman Equation for Marginal Gains]
\label{lem:bellman_marginal_gain}
For any policy $\pi_i$, $h\in[H]$, $s\in\mathcal{S}$, and $a\in\mathcal{A}$,
\begin{align*}
    V_h^{\pi_i}(s;\pi_{[i-1]}) &= \sum_{a\in\mathcal{A}}\pi_{i,h}(s,a)Q_h^{\pi_i}(s,a;\pi_{[i-1]})\\
    Q_h^{\pi_i}(s,a;\pi_{[i-1]}) &= R_{i,h}(s,a|\pi_{[i-1]})\\
    &+\sum_{s'\in\mathcal{S}}P_{i,h}(s'|s,a)V_{h+1}^{\pi_i}(s';\pi_{[i-1]}).
\end{align*}
Moreover, let $\widetilde{\pi}_i=\arg\max_{\pi_i}\mathbb{E}_{\pi_i}[\sum_{t=1}^H \Delta r_i(\mathbf{s}_t,\mathbf{a}_t)|s_1^i=\bar{s}_1^i,\pi_{[i-1]}]$. Then
\begin{align*}
    V_h^{\widetilde{\pi}_i}(s;\pi_{[i-1]}) &= \max_a Q_h^{\widetilde{\pi}_i}(s,a;\pi_{[i-1]})\\
    Q_h^{\widetilde{\pi}_i}(s,a;\pi_{[i-1]}) &= R_{i,h}(s,a|\pi_{[i-1]})\\
    &+\sum_{s'\in\mathcal{S}}P_{i,h}(s'|s,a)V_{h+1}^{\widetilde{\pi}_i}(s';\pi_{[i-1]}).
\end{align*}
\end{lemma}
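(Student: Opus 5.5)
The plan is to show that, once $\pi_{[i-1]}$ is fixed, agent $i$ faces an ordinary single-agent finite-horizon MDP. That MDP has state space $\mS$, action space $\mA$, transitions $P_{i,h}$, and time-varying reward $R_{i,h}(\cdot,\cdot\mid\pi_{[i-1]})$. Both parts of the lemma then reduce to the standard Bellman evaluation and optimality equations for such an MDP.

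The key structural fact is the independence of transitions. Under a decomposable policy, $P_h(\vect{s}_{h+1}\mid\vect{s}_h,\vect{a}_h)=\prod_j P_{j,h}(s^j_{h+1}\mid s^j_h,a^j_h)$, and $\pi_{[i-1]}$ depends only on the first $i-1$ agents' own states. Hence the process $\{(s_t^{[i-1]},a_t^{[i-1]})\}_{t}$ of the first $i-1$ agents evolves as a Markov chain that does not depend on agent $i$'s states or actions. Its law is determined by $\bar{\vect{s}}_1^{[i-1]}$ and $\pi_{[i-1]}$ alone. Consequently, for every $t$ and every history of agent $i$, the conditional distribution of $(s_t^{[i-1]},a_t^{[i-1]})$ given agent $i$'s trajectory equals its marginal distribution. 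It follows that
\[
\mathbb{E}\bigl[\Delta r_i(\vect{s}_t,\vect{a}_t)\mid s^i_{1:t},a^i_{1:t},\pi_{[i-1]}\bigr]=R_{i,h}(s^i_t,a^i_t\mid\pi_{[i-1]})\big|_{h=t},
\]
which matches the definition \eqref{eq:marginal_reward}. I expect this conditional-independence step to be the main obstacle. To handle it, I would write the joint trajectory law explicitly as a product of the agent-$i$ factor and the factor for agents $[i-1]$, using that $\pi_{[i-1],h}$ and the transitions factorize, and then sum out the $[i-1]$ coordinates.

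With this in hand, apply the tower property to rewrite the marginal value functions:
\[
Q_h^{\pi_i}(s,a;\pi_{[i-1]})=\mathbb{E}_{\pi_i}\Bigl[\textstyle\sum_{t=h}^H R_{i,t}(s^i_t,a^i_t\mid\pi_{[i-1]})\Bigm| s_h^i=s,a_h^i=a\Bigr],
\]
and analogously for $V_h^{\pi_i}$. The expectation now involves only agent $i$'s Markov chain with kernel $P_{i,t}$ and policy $\pi_{i,t}$. Two steps then give the evaluation equations. Splitting off the $t=h$ term and conditioning on $s^i_{h+1}$ via the Markov property yields the $Q$ recursion. Conditioning on $a_h^i\sim\pi_{i,h}(s,\cdot)$ yields the $V$ relation, with boundary $V_{H+1}\equiv 0$.

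For the optimality part, the same reduction shows that $\widetilde{\pi}_i$ maximizes the expected total reward $\sum_t R_{i,t}$ from $\bar s^i_1$ in this single-agent MDP. I would invoke the standard finite-horizon Bellman optimality result for a single-agent MDP, i.e. the single-agent case of Lemma~\ref{lemma:bellman} with $K=1$. This requires interpreting $\widetilde{\pi}_i$ as the optimal policy obtained by backward induction, which is optimal simultaneously from every state. Under that interpretation, $V_h^{\widetilde\pi_i}=\max_a Q_h^{\widetilde\pi_i}$ holds, and the $Q$ recursion is inherited from the evaluation part. If $\widetilde{\pi}_i$ is only optimal from $\bar s_1^i$, the identity holds on reachable states, and I would choose the backward-induction maximizer as the canonical argmax.
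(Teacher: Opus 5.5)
Your proposal is correct and follows essentially the same route as the paper's proof sketch: once $\pi_{[i-1]}$ is fixed, agent $i$ faces a single-agent MDP with kernel $P_{i,h}$ and a deterministic time-varying reward $R_{i,h}(\cdot,\cdot\mid\pi_{[i-1]})$, and both parts of the lemma follow from standard single-agent Bellman theory. Your version is more careful than the paper's, which leaves two points implicit: the independence of agent $i$'s trajectory from that of agents $[i-1]$ (which is what justifies the tower-property step), and the need to take $\widetilde{\pi}_i$ to be the backward-induction maximizer for the optimality identity to hold at every state. One small nit: cite the time-inhomogeneous single-agent Bellman result rather than Lemma~\ref{lemma:bellman} with $K=1$, because the reward in that lemma does not depend on $h$.
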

  
  The proof sketch is deferred to Appendix \ref{appdx:Bellman_marginal_gain_proof}. Next, we discuss the algorithm of policy optimization in the case where the transition matrix $P$ is known.

  \subsection{Greedy Policy Optimization for known $P$}
  \label{sec:known_P}
With a known transition dynamic $P$, we propose the algorithm \GPO based on running policy optimization greedily, as described in Algorithm \ref{alg:greedypo}. The algorithm determines the policy of the agents in an iterative, greedy fashion. During the outer loop from Line \ref{line:outer_loop_st} to Line \ref{line:outer_loop_ed}, the algorithm greedily determines the policy for different agents sequentially by leveraging the submodular property of the reward function. For the inner loop from Line \ref{line:inner_loop_st} to Line\ref{line:inner_loop_ed}, the algorithm finds the policy of agent $i$ by recursively optimizing the policy via backward induction from step $H$ to $1$. After determine the policy of the $i$-th agent, the algorithm samples $N:=\frac{1}{2\epsilon^2}\log\frac{2K|\mS||\mA|H}{\delta}$ trajectories of this agent from Line \ref{line:sampling_R_st} to Line \ref{line:sampling_R_ed}, which are used for estimating the marginal reward ${R}_{i,h}(s,a|\pi_{[i-1]})$ as in Line \ref{line:estimate_R}. This step is very important in achieving the computation complexity polynomial in $K$ since computing $R_{i,h}(s,a|\pi_{[i-1]})$ exactly requires marginalizing over all joint configurations of agents $1,\ldots,i$, incurring exponential complexity $O(|\mS|^i|\mA|^i)$. Instead, Line \ref{line:estimate_R} uses sampling for estimation. This reduces computation complexity from exponential to polynomial: $O(K \cdot H \cdot |\mS||\mA| \cdot \max\{|\mS|, \epsilon^{-2}\log(K|\mS||\mA|H/\delta)\})$, which is polynomial in all parameters. Below, we present the theoretical guarantees of the proposed algorithm.
  \begin{algorithm}[t]
\caption{\texttt{Greedy Policy Optimization}}

\label{alg:greedypo}
\begin{algorithmic}[1]
\STATE $N:=\frac{1}{2\epsilon^2}\log\frac{2K|\mS||\mA|H}{\delta}$
   \FOR{$i=1$ \textbf{to} $K$}\label{line:outer_loop_st}
   \FOR{$h=H$ \textbf{to} $1$}\label{line:inner_loop_st}
   \FOR{$(s,a)$ \textbf{in} $\mathcal{S}\times\mA$}
   \IF{$i=1$}
        \STATE $\widehat{R}_{1,h}(s,a)=f((s,a))$
    \ELSE
        \STATE Compute average marginal reward using sampled trajectories of first $i-1$ agents: $\widehat{R}_{i,h}(s,a|\pi_{[i-1]})=\frac{1}{N}\sum_{l=1}^N (f(\{(s_{h,l}^j, a_{h,l}^j)\}_{j\in[i-1]}\cup\{(s,a)\})\allowbreak-f(\{(s_{h,l}^j, a_{h,l}^j)_{j\in[i-1]})\})$ \label{line:estimate_R}
   \ENDIF
   \STATE $\widehat{Q}_h^i(s,a) = \widehat{R}_{i,h}(s,a|\pi_{[i-1]})+\sum_{s'\in\mS}P_{i,h}(s'|s,a)\widehat{V}_{i,h+1}(s')$\label{line:Q_update}
   \ENDFOR
   \FOR{$s$ \textbf{in} $\mathcal{S}$}
   \STATE $\widehat{V}_h^i(s) = \max_{a\in\mA} \widehat{Q}_{h}^{i}(s,a)$
   \STATE $\pi_{i,h}(s) = \arg\max_{a\in\mA} \widehat{Q}_h^{i}(s,a)$
   \ENDFOR
   \ENDFOR\label{line:inner_loop_ed}
   \FOR{$l=1$ \textbf{to} $N$
   }\label{line:sampling_R_st}
   \STATE Sample the $l$-th trajectory of agent $i$: $\tau_{i,l}=\{(s_{1,l}^i, a_{1,l}^i), (s_{2,l}^i, a_{2,l}^i),...(s_{H,l}^i, a_{H,l}^i)\}$
   \ENDFOR\label{line:sampling_R_ed}
   \ENDFOR\label{line:outer_loop_ed}
   \STATE \textbf{return} policy $\pi$
\end{algorithmic}
\end{algorithm}
  \begin{theorem}
  \label{thm:knownP}
      Let us denote the optimal policy as $\pi^*$. When the number of samples $N$ in Algorithm~\ref{alg:greedypo} is set to $N=\frac{1}{2\epsilon^2}\log\frac{2K|\mS||\mA|H}{\delta}$, with probability at least $1-\delta$, the policy $\pi$ produced by Algorithm \ref{alg:greedypo} satisfies that 
      \begin{align*}
          V_1^{\pi}(\bar{\vect{s}}_1)\geq \frac{V_1^{\pi^*}(\bar{\vect{s}}_1)}{2}-\epsilon KH.
      \end{align*}
  \end{theorem}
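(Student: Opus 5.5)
We adapt the classical Fisher--Nemhauser--Wolsey argument for greedy maximization under a partition matroid to the sequential, stochastic setting. Here agent $i$'s policy plays the role of the element chosen from partition $i$. Three facts drive the argument. First, telescoping: $V_1^{\pi}(\bar{\vect{s}}_1)=\sum_{i=1}^K V_1^{\pi_i}(\bar s_1^i;\pi_{[i-1]})$, which follows from $r=\sum_i\Delta r_i$ and linearity of expectation. Second, near-optimality of each greedy step: by Lemma~\ref{lem:bellman_marginal_gain}, agent $i$ faces a single-agent MDP with reward $R_{i,h}(\cdot|\pi_{[i-1]})$. Backward induction on the estimated rewards $\widehat R_{i,h}$ then yields a policy within $2H\epsilon'$ of $\max_{\pi_i'}V_1^{\pi_i'}(\bar s_1^i;\pi_{[i-1]})$ whenever $\|\widehat R_{i,h}-R_{i,h}\|_\infty\le\epsilon'$. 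This is the standard simulation lemma: the value of any policy under $\widehat R$ differs from its true value by at most $H\epsilon'$. Hoeffding's inequality with $N$ samples, bounded marginals in $[0,1]$, and a union bound over $K|\mS||\mA|H$ entries give $\epsilon'=\epsilon$ with probability at least $1-\delta$. The sampled trajectories for agents $j<i$ are independent across agents, and their joint law matches that induced by $\pi_{[i-1]}$, because transitions factorize. Third, a comparison with the optimum, described next.

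For the comparison, fix any (possibly non-decomposable) joint policy $\pi^*$. Let $(\vect{s}^*_h,\vect{a}^*_h)$ denote its joint trajectory, and let $(\vect{s}_h,\vect{a}_h)$ be the greedy team's trajectory, drawn independently. Monotonicity and submodularity give, pointwise,
\[
\sum_h r(\vect{s}^*_h,\vect{a}^*_h)\le \sum_h f\bigl(X_h\cup X_h^*\bigr)\le \sum_h r(\vect{s}_h,\vect{a}_h)+\sum_h\sum_{i=1}^K \Delta f\bigl(X_h^{[i-1]},(s_h^{*i},a_h^{*i})\bigr).
\]
Here $X_h$ is the greedy team's set at step $h$ and $X_h^{[i-1]}$ is the set of its first $i-1$ members. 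The second inequality uses submodularity to replace $X_h\cup\{\text{earlier }*\text{-elements}\}$ by the smaller set $X_h^{[i-1]}$. The key observation is that, because transitions factorize, the marginal law of the $i$-th coordinate $(s_h^{*i},a_h^{*i})_h$ of $\pi^*$ is realizable by some (possibly history-dependent) single-agent policy $\sigma_i$ for agent $i$ alone. Since $\sigma_i$ is independent of agents $1,\dots,i-1$, taking expectations shows that the $i$-th double-sum term equals $V_1^{\sigma_i}(\bar s_1^i;\pi_{[i-1]})$. In a single-agent MDP with reward $R_{i,h}$, Markov policies are optimal, so this is at most $\max_{\pi'_i}V_1^{\pi'_i}(\bar s_1^i;\pi_{[i-1]})$, which by the second fact is at most $V_1^{\pi_i}(\bar s_1^i;\pi_{[i-1]})+2H\epsilon$.

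Summing over $i$ and applying the telescoping identity gives
\[
V_1^{\pi^*}\le V_1^{\pi}+V_1^{\pi}+2KH\epsilon,
\]
that is, $V_1^\pi\ge V_1^{\pi^*}/2-\epsilon KH$, which is Theorem~\ref{thm:knownP}. The main obstacle is the third fact. Agent $i$'s coordinate under $\pi^*$ is correlated with the other optimal agents but not with the greedy agents, and the argument needs both the independence of the two trajectories and the realizability of that coordinate's marginal by a single-agent policy; I would construct $\sigma_i$ explicitly from the conditional action law given agent $i$'s own history. A secondary concern is the constant in the additive error. The simulation-lemma step naturally gives $2H\epsilon$ per agent, so hitting exactly $\epsilon KH$ likely requires either Hoeffding at accuracy $\epsilon/2$ or a one-sided comparison that loses only $H\epsilon$ per agent.
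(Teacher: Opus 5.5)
Your proposal is correct and is essentially the paper's proof. It uses the same two independent teams, monotonicity to get $V_1^{\pi^*}\le V_1^{\pi}+\sum_i G_i$, submodularity to shrink each increment to $\Delta f(X_h^{[i-1]},\cdot)$, a bound on each term by agent $i$'s greedy marginal value plus $2H\epsilon$, Hoeffding with a union bound, and telescoping. The paper differs only in two places. It proves the per-agent bound by the sandwich induction of Lemma~\ref{lem:value_f_approx_knownP} rather than a simulation-lemma argument. It also handles the non-decomposable $\pi^*$ through a ``marginal policy'' $\pi^*_{i,h}$; your $\sigma_i$, built from the conditional action law given agent $i$'s own history (or its occupancy-weighted Markov version), is the more careful construction of that step.

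Your ``secondary concern'' about constants is unfounded. Your own chain gives $V_1^{\pi^*}\le 2V_1^{\pi}+2KH\epsilon$, which is exactly $V_1^{\pi}\ge V_1^{\pi^*}/2-\epsilon KH$, as in the paper, so Hoeffding at accuracy $\epsilon$ with the stated $N$ already suffices.
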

Theorem~\ref{thm:knownP} guarantees a $1/2$-approximation with additive error $\epsilon KH$, matching the approximation ratio of greedy algorithms for monotone submodular maximization \citep{wolsey1982analysis}. The parameters $\epsilon$ and $\delta$ control the accuracy-complexity trade-off through the value of $N$. It is worth noting that although we restrict our policy to being decomposable, which indicates that the execution of individual agents doesn't depend on other agents, the output policy is competitive with respect to the optimal joint policy, which may be non-decomposable and generally requires exponential memory to represent and execute. This result demonstrates that decomposability does not inherently limit approximation quality in MARLS, and that near-optimal performance can be achieved within a tractable and practically implementable policy class.

\subsection{UCB-Based Greedy Value Iteration for Unknown Transition Dynamics}
\label{sec:unknown_P}

In Section \ref{sec:known_P}, we developed the \GPO  algorithm under the assumption of known transition dynamics. However, in many practical applications, the transition matrix $P$ is unknown and must be learned through interaction with the environment. In this setting, our objective shifts to minimizing regret, which is the cumulative difference between the optimal policy's performance and that of the learned policies during the exploration process.

To address this challenge, we propose \texttt{UCB-GVI} (\texttt{Upper Confidence Bound Greedy Value Iteration}), a model-based reinforcement learning algorithm that combines optimistic exploration with greedy submodular maximization. The complete procedure is described in Algorithm~\ref{alg:greedyUCB}.
\begin{algorithm}
\caption{\texttt{Upper Confidence Bound Greedy Value Iteration (UCB-GVI)}}
\label{alg:greedyUCB}
\begin{algorithmic}[1]
\STATE \textbf{Input:} Episodes $T$, confidence $\delta$, approximation parameter $\epsilon$
\STATE \textbf{Initialize:} $\mathcal{D}\gets\emptyset$, $N_{i,h}(s,a)\gets 0$, $N_{i,h}(s,a,s')\gets 0$ for all $i,h,s,a,s'$
\STATE Set $N\gets\frac{K^2H^2}{2\epsilon^2}\log\frac{6K|\mS|\mA|H}{\delta}$
\FOR{episode $k=1$ to $T$}
    \FOR{agent $i=1$ to $K$}
        \FOR{step $h=H$ down to $1$}
            \FOR{$(s,a)\in\mathcal{S}\times\mathcal{A}$}
                \IF{$N_{i,h}(s,a)>0$}
                    \STATE $\widehat{P}_{i,h}(s'|s,a)\gets\frac{N_{i,h}(s,a,s')}{N_{i,h}(s,a)}$ for all $s'\in\mathcal{S}$
                    \STATE $\widehat{R}_{i,h}(s,a|\pi_{[i-1]})\gets\frac{1}{N}\sum_{l=1}^N \Delta f(\{(\tilde{s}_{h,l}^m, \tilde{a}_{h,l}^m)\}_{m\in[i-1]},(s,a))$ \label{line:estimateR_UCB}
                    \STATE $\widehat{Q}_{i,h}(s,a) \gets \widehat{R}_{i,h}(s,a|\pi_{[i-1]})+\sum_{s'\in\mathcal{S}}\widehat{P}_{i,h}(s'|s,a)\widehat{V}_{i,h+1}(s')+b(N_{i,h}(s,a))+\frac{\epsilon}{KH}$
                \ELSE
                    \STATE $\widehat{Q}_{i,h}(s,a)\gets H$
                \ENDIF
            \ENDFOR
            \FOR{$s\in\mathcal{S}$}
                \STATE $\widehat{V}_{i,h}(s) \gets \max_{a\in\mathcal{A}} \widehat{Q}_{i,h}(s,a)$
                \STATE $\pi_{i,h}(s) \gets \arg\max_{a\in\mathcal{A}} \widehat{Q}_{i,h}(s,a)$
            \ENDFOR
        \ENDFOR
        \FOR{$l=1$ to $N$}
            \STATE Sample trajectory $\tau_{i,l}=\{(\tilde{s}_{h,l}^i, \tilde{a}_{h,l}^i)\}_{h=1}^H$ using $\pi_i$ and $\widehat{P}_i$
        \ENDFOR
    \ENDFOR
    \STATE Starting from initial state $\bar{s}_{1}^1,\ldots,\bar{s}_{1}^K$
    \FOR{step $h=1$ to $H$}\label{line:execute_policy_st}
        \FOR{agent $i=1$ to $K$}
            \STATE Execute $a_{h,k}^i\gets\pi_{i,h}(s_{h,k}^i)$; observe $s_{h+1,k}^i\sim P_{i,h}(\cdot|s_{h,k}^i,a_{h,k}^i)$
            \STATE $N_{i,h}(s_{h,k}^i,a_{h,k}^i) \gets N_{i,h}(s_{h,k}^i,a_{h,k}^i)+1$
            \STATE $N_{i,h}(s_{h,k}^i,a_{h,k}^i,s_{h+1,k}^i) \gets N_{i,h}(s_{h,k}^i,a_{h,k}^i,s_{h+1,k}^i)+1$
            \STATE $\mathcal{D}\gets\mathcal{D}\cup\{(h,s_{h,k}^i,a_{h,k}^i,s_{h+1,k}^i)\}$
        \ENDFOR
    \ENDFOR \label{line:execute_policy_ed}
\ENDFOR
\STATE \textbf{Return:} Policies $\{\pi^k\}_{k=1}^T$
\end{algorithmic}
\end{algorithm}

\paragraph{Algorithm description.} 
\texttt{UCB-GVI} operates over $T$ episodes, with each episode $k$ consisting of three phases: policy computation via optimistic value iteration, trajectory sampling for marginal reward estimation, and policy execution in the true environment.

First of all, the policy computation phase proceeds sequentially for each agent $i=1,\ldots,K$ using backward value iteration from step $h=H$ to $h=1$. For each state-action pair $(s,a)$ that has been visited in previous episodes (i.e., $N_{i,h}(s,a)>0$), the algorithm first constructs an empirical transition model $\widehat{P}_{i,h}(s'|s,a)$ based on the observed transitions. The marginal reward $\widehat{R}_{i,h}(s,a|\pi_{[i-1]})$ is then estimated by sampling $N$ trajectories under the empirical dynamics $\widehat{P}_i$ and the previously computed policies $\pi_{[i-1]}$ of agents $1,\ldots,i-1$ (Line~\ref{line:estimateR_UCB}). This sampling-based approach captures the expected marginal contribution of agent $i$ selecting action $a$ in state $s$, accounting for the stochastic behavior induced by both the transition dynamics and the policies of earlier agents in the sequential ordering. 

Using these estimates, the algorithm computes optimistic Q-value estimates that incorporate exploration bonuses:
\begin{align*}
    \widehat{Q}_{i,h}(s,a) &= \widehat{R}_{i,h}(s,a|\pi_{[i-1]})\\
    &\qquad+\sum_{s'\in\mathcal{S}}\widehat{P}_{i,h}(s'|s,a)\widehat{V}_{i,h+1}(s')\\
    &\qquad+b(N_{i,h}(s,a))+\frac{\epsilon}{KH},
\end{align*}
where the exploration bonus $b(N) = H\sqrt{\frac{2S\iota}{N}}+\frac{3HS\iota}{N}$ with $\iota=\log(6S^2ATHK/\delta)$ provides an upper confidence bound on the estimation error. $N_{i,h}(s,a,s')$ denotes the number of times agent $i$ has transitioned from $(s,a)$ to $s'$ at step $h$ across all the previous episodes. For state-action pairs that have not been visited, the algorithm employs optimistic initialization $\widehat{Q}_{h}^i(s,a)=H$. 

Next, the trajectory sampling phase generates synthetic data for marginal reward estimation in subsequent episodes. For each agent $i$, the algorithm samples $N=\frac{K^2H^2}{2\epsilon^2}\log\frac{6K|\mS|\mA|H}{\delta}$ simulated trajectories by executing policy $\pi_i$ under the empirical transition model $\widehat{P}_i$. Finally, in the policy execution phase from Line \ref{line:execute_policy_st} to Line \ref{line:execute_policy_ed}, the algorithm deploys the computed joint policy $\pi = (\pi_1,\ldots,\pi_K)$ in the true environment.

\paragraph{Regret definition.}
To formalize the algorithm's performance guarantee, we introduce the notion of $\alpha$-regret, which accounts for the approximation factor inherent in polynomial-time greedy algorithms for submodular maximization.

\begin{definition}[$\alpha$-regret]
\label{def:alpha_regret}
Let $\pi^*=\arg\max_{\pi} V^{\pi}_1(\bar{\vect{s}}_1)$ denote the globally optimal policy and $\pi^k$ denote the policy executed in episode $k$. The $\alpha$-regret of an algorithm over $T$ episodes is
\begin{align*}
    \mathcal{R}_{T,\alpha}=\sum_{k=1}^T \left[\alpha V^{\pi^*}_1(\bar{\vect{s}}_1)-V^{\pi^k}_1(\bar{\vect{s}}_1)\right].
\end{align*}
\end{definition}

The parameter $\alpha\in(0,1]$ characterizes the approximation guarantee: $\alpha=1$ recovers the standard regret notion, while $\alpha<1$ reflects computational constraints. For our problem, the greedy algorithm achieves a $1/2$-approximation to the optimal policy under the empirical model. In particular, our main theoretical contribution establishes a near-optimal regret bound for \texttt{UCB-GVI}. Before we present the results in Theorem \ref{thm:UCbalg}, below we illustrate the technical novelty of our proof.

\paragraph{Technical Overview} {
The proof of our regret bound (Appendix \ref{appdx:unknown_P}) requires fundamentally new techniques beyond standard UCB-type analyses for single-agent reinforcement learning \citep{azar2017minimax,jin2018q}. While our approach builds on optimistic value estimation, the multi-agent and submodular structure of MARLS introduces several fundamental challenges that necessitate new analytical tools. In particular, our analysis is different from existing work along three dimensions:
(i) establishing approximation guarantees under empirical transition dynamics,
(ii) controlling multi-agent transition estimation errors without incurring exponential dependence on the number of agents $K$ and
(iii) estimating marginal rewards through an additional sampling procedure within the learning loop.}

Our analysis consists of two main components.

{
\textbf{Part I: Approximation Ratio under Empirical Dynamics}. We prove that the greedy sequential policy construction achieves a $\frac{1}{2}$-approximation on the auxiliary MDP $\widehat{M}$ with transition dynamics $\widehat{\mathbf{P}}^k$ and augmented reward $r(\mathbf{s},\mathbf{a})+\sum_{i=1}^K b(N_{i,h}^k(s^i,a^i))$ (Lemma~\ref{lem:approx_ratio}). The central challenge is that the optimal policy $\pi^*$ may be \textbf{non-decomposable}, which means that its value function cannot be expressed as a sum of marginal gains from individual agents. To overcome this, we introduce a novel proof technique comparing two hypothetical agent groups: one following the greedy policy $\pi^k$ and another following the optimal policy $\pi^*$. We carefully define the quantity $G_i^k$
which captures the marginal contribution of adding agent $i$ from the $\pi^*$ group (see the definition in \eqref{eqn:define_G}). By exploiting submodularity and monotonicity, we establish $\sum_{i=1}^K G_i^k \geq \bar{V}_1^{\pi^*,k}(\bar{\mathbf{s}}_1) - \bar{V}_1^{\pi^k,k}(\bar{\mathbf{s}}_1)$ while simultaneously bounding each $G_i^k \leq \bar{V}_1^{\pi_i^k,k}(\bar{s}_1^i;\pi_{[i-1]}^k) + O(\epsilon/K)$. Telescoping over agents then yields the desired approximation ratio.}

\textbf{Part II: Regret Decomposition via Multi-Agent Telescoping and Error Isolation}. We obtain the final regret bound by recursively expanding the value gap $\bar{V}_h^{\pi^k,k}(\mathbf{s}_h^k) - V_h^{\pi^k}(\mathbf{s}_h^k)$ backward from $h=H$ to $h=1$. The critical technical challenge is bounding the difference between expected values under empirical versus true transitions for non-negative functions as given in Lemma \ref{lem:bound_P_hatV-PV} and Lemma~\ref{lem:PhatV-PV_tight}. Both bounds exploit the {product structure} $\mathbf{P}_h = \prod_{i=1}^K P_{i,h}$ via telescoping decomposition, isolating individual agent errors to avoid exponential dependence on $K$. Lemma~\ref{lem:PhatV-PV_tight} employs a more complex and refined analysis: it first establishes a coarse recursive relation $Y_i - Y_{i-1} \leq \frac{Y_{i-1}}{2HK} + O(\frac{\iota}{N})$ through calibrated concentration inequalities, then refines to $Y_i \leq (1+\frac{1}{K})Y_{i-1} + O(\frac{\iota}{N})$ for the tight bound. The interplay between these two bounds---using Lemma~\ref{lem:bound_P_hatV-PV} to control deterministic exploration costs and Lemma~\ref{lem:PhatV-PV_tight} to enable self-normalized recursive analysis---ultimately yields the $\tilde{O}(S^2AH^3K^2\sqrt{T})$ regret rate.

 The theoretical guarantee of \GPO is provided in Theorem \ref{thm:UCbalg}.

\begin{theorem}[Regret bound for \texttt{UCB-GVI}]
\label{thm:UCbalg}
With probability at least $1-\delta$, Algorithm~\ref{alg:greedyUCB} achieves a regret bound of
\begin{align*}
    \mathcal{R}_{T,1/2} &= O\Big(S^2AH^3K^2\log(SATHK/\delta)\log T 
    \\
    &\qquad+ H^2KS\sqrt{AT\log(SATHK/\delta)}\Big),
\end{align*}
where $O(\cdot)$ suppresses poly-logarithmic factors.
\end{theorem}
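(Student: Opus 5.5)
The proof splits the $1/2$-regret into an approximation term and an estimation term, following the two-part outline in the Technical Overview. Fix episode $k$ and let $\bar V^{\pi,k}_h$ be the value of policy $\pi$ in the auxiliary MDP $\widehat M^k$. This MDP has product transitions $\widehat{\mathbf P}^k_h=\prod_i \widehat P^k_{i,h}$ and reward $r(\mathbf s,\mathbf a)+\sum_i b(N^k_{i,h}(s^i,a^i))$, plus the constant $\epsilon/(KH)$ per agent-step. We write
\begin{align*}
\tfrac12 V_1^{\pi^*}(\bar{\vect s}_1)-V_1^{\pi^k}(\bar{\vect s}_1)
=\Big[\tfrac12 V_1^{\pi^*}-\tfrac12\bar V_1^{\pi^*,k}\Big]+\Big[\tfrac12\bar V_1^{\pi^*,k}-\bar V_1^{\pi^k,k}\Big]+\Big[\bar V_1^{\pi^k,k}-V_1^{\pi^k}\Big].
\end{align*}
The three brackets are the optimism term, the approximation term, and the estimation term.

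\textbf{Optimism.} On a good event of probability $1-\delta/3$, Bernstein/Hoeffding with union bound over $i,h,s,a,k$ gives $|(\widehat P_{i,h}-P_{i,h})V|\le b(N_{i,h}(s,a))$ for $[0,H]$-valued $V$. The factor $S$ inside $b$ handles data-dependent $V$ via an $\ell_1$ bound. Telescoping the product over agents, $\widehat{\mathbf P}-\mathbf P=\sum_i \widehat P_{[i-1]}(\widehat P_i-P_i)P_{[i+1:K]}$, so the one-step joint error is at most $\sum_i b(N_{i,h})$, which the augmented reward covers. A backward induction on $h$, applying this to $V^{\pi^*}_{h+1}$ and using that the bonuses are nonnegative, gives $\bar V^{\pi^*,k}_1\ge V^{\pi^*}_1$. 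The first bracket is therefore nonpositive. Lemma \ref{lem:bound_P_hatV-PV} is the form of this estimate I would use.

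\textbf{Approximation (Lemma \ref{lem:approx_ratio}).} The aim is $\bar V^{\pi^k,k}_1\ge\frac12\bar V^{\pi^*,k}_1-O(\epsilon)$. Run two independent copies under $\widehat{\mathbf P}^k$: the greedy group following $\pi^k$ and a group following $\pi^*$. Let $G_i^k$ be the expected marginal gain, over all $H$ steps, of adding the $i$-th $\pi^*$-agent's state-action pair to the union of the greedy group and the first $i-1$ $\pi^*$-agents. Monotonicity and telescoping give $\sum_i G_i^k\ge\bar V^{\pi^*,k}_1-\bar V^{\pi^k,k}_1$. Submodularity gives that $G_i^k$ is at most the gain of the same pair added to the greedy agents alone.

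Conditioned on the greedy group, the $i$-th $\pi^*$-agent's marginal is a function of its own trajectory. We may replace it by a single-agent (possibly history-dependent) policy whose marginal occupancy matches it, because transitions factor. Such a policy is feasible for agent $i$'s single-agent MDP with reward $R_{i,h}(\cdot\mid\pi^k_{[i-1]})$ from Lemma \ref{lem:bellman_marginal_gain}. Its value is therefore bounded by the optimistic greedy value $\bar V^{\pi_i^k,k}_1(\bar s_1^i;\pi^k_{[i-1]})$. The $N$-sample estimate of $\widehat R$ has error $\epsilon/(KH)$ per entry w.p. $1-\delta/3$, and the added $\epsilon/(KH)$ absorbs it. Summing over $i$ and using the decomposition $\bar V^{\pi^k,k}_1=\sum_i\bar V^{\pi^k_i,k}_1(\cdot;\pi^k_{[i-1]})$ gives the $1/2$ ratio.

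\textbf{Estimation (main obstacle).} Set $\Delta_h^k=\bar V^{\pi^k,k}_h(\mathbf s_h^k)-V^{\pi^k}_h(\mathbf s_h^k)$ along the executed trajectory. One step of the recursion gives
\[
\Delta_h^k\le\sum_i b(N^k_{i,h})+\tfrac{\epsilon}{H}+(\widehat{\mathbf P}^k_h-\mathbf P_h)\bar V^{\pi^k,k}_{h+1}+\Delta^k_{h+1}+\xi^k_h,
\]
with $\xi^k_h$ a martingale difference. The cross term involves the data-dependent $\bar V$, so I would not bound it crudely by $H\sum_i\|\widehat P_i-P_i\|_1$. That bound loses too much: it gives $H^2S\sqrt A$ per agent instead of the leading rate. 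Instead I would apply Lemma \ref{lem:PhatV-PV_tight}. Peel one agent at a time and let $Y_i$ be the mixed-expectation quantity with the first $i$ factors empirical. Using Bernstein with the variance proxy $\widehat P V\le$ mean times $H$ gives $Y_i\le(1+1/K)Y_{i-1}+O(HS\iota/N_{i,h})$. Since $(1+1/K)^K\le e$, the cross term becomes $\frac{1}{H}\mathbb E[\Delta_{h+1}]$ plus lower-order $O(HS\iota\sum_i 1/N_{i,h})$ terms. Unrolling over $h$ costs a factor $(1+1/H)^H\le e$.

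Finally, sum over $k$. Azuma on $\sum\xi$ gives $O(H\sqrt{HT\iota})$. The pigeonhole bound $\sum_k 1/\sqrt{N^k_{i,h}}\le O(\sqrt{SAT})$ per $(i,h)$, summed over $i,h$, gives $H^2KS\sqrt{AT\iota}$. The bound $\sum_k 1/N\le SA\log T$ gives $S^2AH^3K^2\iota\log T$ from the $1/N$ terms, which are multiplied by the $HK$ factors incurred in the peeling. Choosing $\epsilon\sim1/\sqrt T$ keeps $\epsilon T$ lower order. A union bound over the three events completes the proof. I expect the self-normalized peeling in Lemma \ref{lem:PhatV-PV_tight} to be the delicate step, since that is where the dependence on $K$ must stay polynomial rather than exponential.
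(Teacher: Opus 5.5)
Your overall architecture matches the paper's: the optimism, approximation and estimation split, the two-group $G_i^k$ argument, agent-by-agent telescoping of $\widehat{\mathbf P}^k_h-\mathbf P_h$, the $(1+1/H)^H$ unrolling, Azuma, and pigeonhole. Your occupancy-matching replacement of agent $i$'s behaviour under $\pi^*$ is actually a cleaner justification than the paper's ``marginal policy'' $\sum_{s^i=s,a^i=a}\pi^*_h(\mathbf s,\mathbf a)$, which is not normalized.

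\textbf{Gap in the estimation step.} Lemma~\ref{lem:PhatV-PV_tight} gives $(\widehat{\mathbf P}^k_h-\mathbf P_h)G\le\frac1H\mathbf P_hG+\sum_i 10SH^2K\iota/N^k_{i,h}$ for $0\le G\le H$. You propose applying it to $G=\bar V^{\pi^k,k}_{h+1}$. Its leading term is then $\frac1H\mathbf P_h\bar V^{\pi^k,k}_{h+1}\ge\frac1H\mathbf P_hV^{\pi^k}_{h+1}$, which can be of order $1$ per step, not $\frac1H\mathbb E[\Delta^k_{h+1}]$. Summed over $HT$ steps, this gives linear regret. The paper instead splits
\[
(\widehat{\mathbf P}^k_h-\mathbf P_h)\bar V^{\pi^k,k}_{h+1}=(\widehat{\mathbf P}^k_h-\mathbf P_h)V^{\pi^k}_{h+1}+(\widehat{\mathbf P}^k_h-\mathbf P_h)\bigl(\bar V^{\pi^k,k}_{h+1}-V^{\pi^k}_{h+1}\bigr).
\]
\begin{itemize}
\item The first piece is bounded by $\sum_i b(N^k_{i,h})$ via Lemma~\ref{lem:bound_P_hatV-PV}, which is exactly the $\ell_1$ bound you dismissed. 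It loses nothing. Summed over $k,h,i$ it is $O(H^2KS\sqrt{AT\iota})$: your ``$H^2S\sqrt{AT}$ per agent'' \emph{is} the leading rate, because $b$ already carries the $\sqrt S$. This piece is why the paper's recursion has $2\sum_i b$ where yours has only one.
\item The tight lemma is used only on the second piece. That piece must be nonnegative, which requires $\bar V^{\pi^k,k}\ge V^{\pi^k}$. So your optimism induction has to be stated for an arbitrary policy, not only for $\pi^*$.
\end{itemize}

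\textbf{Smaller point on your sketch of that lemma.} The recursion $Y_i\le(1+\frac1K)Y_{i-1}+O(\cdot)$ together with $(1+\frac1K)^K\le e$ only yields $Y_K-Y_0\le(e-1)Y_0+\dots$. That constant factor would compound to $e^{\Theta(H)}$ when you unroll over $h$. The $\frac1H$ instead comes from the separately calibrated step
\[
Y_i-Y_{i-1}\le\frac{Y_{i-1}}{2HK}+O\bigl(H^2SK\iota/N^k_{i,h}\bigr),
\]
which is AM--GM with weight proportional to $1/(HK)$. The $(1+\frac1K)$ recursion is used only to show $\sum_iY_{i-1}\le(e-1)KY_0+\dots$. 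With that, the calibrated terms sum to at most $\frac{e-1}{2H}Y_0$ plus the $1/N$ terms.
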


The proof of the theorem is deferred to Appendix \ref{appdx:unknown_P}. From the result, we have that when $K=1$, our bound reduces to $O(H^2S\sqrt{AT})$ up to logarithmic factors, which recovers the single-agent setting. Compared to the lower bound and the best known upper bounds of order $\Omega(H\sqrt{SAHT})$ for standard episodic reinforcement learning \citep{domingues2021episodic}, \cite{azar2017minimax}, our bound exhibits an additional $\sqrt{HS}$ factor, reflecting the challenges of multi-agent coordination and marginal reward estimation.

Importantly, the regret scales only polynomially with the number of agents 
$K$, demonstrating that meaningful learning is possible in this setting despite the exponential size of the joint action space. In particular, the dominant term $O(H^2KS\sqrt{AT})$ scales linearly with $K$, indicating that the learning cost per agent is comparable to that of independent single-agent learning, even though agents interact through a shared submodular reward.  The additional logarithmic term with quadratic dependence on $K$ further reflects the computational-approximation tradeoff inherent in polynomial-time algorithms for submodular maximization under matroid constraints.

\newpage
\clearpage
\section*{Impact Statement}
This paper presents work whose goal is to advance the field of machine learning. There are many potential societal consequences of our work, none of which we feel must be specifically highlighted here.

\bibliography{reference}

@inproceedings{ma2021modeling,
  title={Modeling the Interaction between Agents in Cooperative Multi-Agent Reinforcement Learning},
  author={Ma, Xiaoteng and Yang, Yiqin and Li, Chenghao and Lu, Yiwen and Zhao, Qianchuan and Yang, Jun},
  booktitle={Proceedings of the 20th International Conference on Autonomous Agents and MultiAgent Systems},
  pages={853--861},
  year={2021}
}

@article{dou2022understanding,
      title={Understanding Value Decomposition Algorithms in Deep Cooperative Multi-Agent Reinforcement Learning}, 
      author={Zehao Dou and Jakub Grudzien Kuba and Yaodong Yang},
      year={2022},
      journal = {ArXiv:2202.04868}, 
}

@inproceedings{wang2020influence,
  title={INFLUENCE-BASED MULTI-AGENT EXPLORATION},
  author={Wang, Tonghan and Wang, Jianhao and Wu, Yi and Zhang, Chongjie},
  booktitle={8th International Conference on Learning Representations, ICLR 2020},
  year={2020}
}

@inproceedings{jiang2021emergence,
  title={The emergence of individuality},
  author={Jiang, Jiechuan and Lu, Zongqing},
  booktitle={International conference on machine learning},
  pages={4992--5001},
  year={2021},
  organization={PMLR}
}

@article{mahajan2019maven,
  title={Maven: Multi-agent variational exploration},
  author={Mahajan, Anuj and Rashid, Tabish and Samvelyan, Mikayel and Whiteson, Shimon},
  journal={Advances in neural information processing systems},
  volume={32},
  year={2019}
}

@article{xu2023bandit,
  title={Bandit submodular maximization for multi-robot coordination in unpredictable and partially observable environments},
  author={Xu, Zirui and Lin, Xiaofeng and Tzoumas, Vasileios},
  journal={arXiv preprint arXiv:2305.12795},
  year={2023}
}

@inproceedings{chen2024fair,
  title={Fair Submodular Cover},
  author={Chen, Wenjing and Xing, Shuo and Zhou, Samson and Crawford, Victoria G},
  booktitle={The Thirteenth International Conference on Learning Representations},
  year={2025}
}

@article{chen2024linear,
  title={Linear Submodular Maximization with Bandit Feedback},
  author={Chen, Wenjing and Crawford, Victoria G},
  journal={arXiv preprint arXiv:2407.02601},
  year={2024}
}

@inproceedings{chenadaptive,
  title={Adaptive Threshold Sampling for Pure Exploration in Submodular Bandits},
  author={Chen, Wenjing and Xing, Shuo and Crawford, Victoria G},
  booktitle={The 41st Conference on Uncertainty in Artificial Intelligence},
  year={2025}
}

@article{bellman1957markovian,
  title={A Markovian decision process},
  author={Bellman, Richard},
  journal={Journal of mathematics and mechanics},
  pages={679--684},
  year={1957},
  publisher={JSTOR}
}

@book{sutton1998reinforcement,
  title={Reinforcement learning: An introduction},
  author={Sutton, Richard S and Barto, Andrew G and others},
  volume={1},
  year={1998},
  publisher={MIT press Cambridge}
}

@article{huttenrauch2017guided,
  title={Guided deep reinforcement learning for swarm systems},
  author={H{\"u}ttenrauch, Maximilian and {\v{S}}o{\v{s}}i{\'c}, Adrian and Neumann, Gerhard},
  journal={arXiv preprint arXiv:1709.06011},
  year={2017}
}

@inproceedings{zhang2011coordinated,
  title={Coordinated multi-agent reinforcement learning in networked distributed POMDPs},
  author={Zhang, Chongjie and Lesser, Victor},
  booktitle={Proceedings of the AAAI conference on artificial intelligence},
  volume={25},
  pages={764--770},
  year={2011}
}

@article{li2021celebrating,
  title={Celebrating diversity in shared multi-agent reinforcement learning},
  author={Li, Chenghao and Wang, Tonghan and Wu, Chengjie and Zhao, Qianchuan and Yang, Jun and Zhang, Chongjie},
  journal={Advances in Neural Information Processing Systems},
  volume={34},
  pages={3991--4002},
  year={2021}
}

@article{ma2020hierarchical,
  title={Hierarchical reinforcement learning via dynamic subspace search for multi-agent planning},
  author={Ma, Aaron and Ouimet, Michael and Cort{\'e}s, Jorge},
  journal={Autonomous Robots},
  volume={44},
  number={3},
  pages={485--503},
  year={2020},
  publisher={Springer}
}

@article{yue2011linear,
  title={Linear submodular bandits and their application to diversified retrieval},
  author={Yue, Yisong and Guestrin, Carlos},
  journal={Advances in Neural Information Processing Systems},
  volume={24},
  year={2011}
}

@article{rauch2024cooperative,
  title={Cooperative multi-agent deep reinforcement learning for dynamic task execution and resource allocation in vehicular edge computing},
  author={Rauch, Robert and Becvar, Zdenek and Mach, Pavel and Gazda, Juraj},
  journal={IEEE Transactions on Vehicular Technology},
  year={2024},
  publisher={IEEE}
}

@article{tilahun2023multi,
  title={Multi-agent reinforcement learning for distributed resource allocation in cell-free massive MIMO-enabled mobile edge computing network},
  author={Tilahun, Fitsum Debebe and Abebe, Ameha Tsegaye and Kang, Chung G},
  journal={IEEE Transactions on Vehicular Technology},
  volume={72},
  number={12},
  pages={16454--16468},
  year={2023},
  publisher={IEEE}
}

@article{hsu2024randomized,
  title={Randomized exploration in cooperative multi-agent reinforcement learning},
  author={Hsu, Hao-Lun and Wang, Weixin and Pajic, Miroslav and Xu, Pan},
  journal={Advances in Neural Information Processing Systems},
  volume={37},
  pages={74617--74689},
  year={2024}
}

@article{sunehag2017value,
  title={Value-decomposition networks for cooperative multi-agent learning},
  author={Sunehag, Peter and Lever, Guy and Gruslys, Audrunas and Czarnecki, Wojciech Marian and Zambaldi, Vinicius and Jaderberg, Max and Lanctot, Marc and Sonnerat, Nicolas and Leibo, Joel Z and Tuyls, Karl and others},
  journal={arXiv preprint arXiv:1706.05296},
  year={2017}
}

@article{rashid2020monotonic,
  title={Monotonic value function factorisation for deep multi-agent reinforcement learning},
  author={Rashid, Tabish and Samvelyan, Mikayel and De Witt, Christian Schroeder and Farquhar, Gregory and Foerster, Jakob and Whiteson, Shimon},
  journal={Journal of Machine Learning Research},
  volume={21},
  number={178},
  pages={1--51},
  year={2020}
}

@article{sukhbaatar2016learning,
  title={Learning multiagent communication with backpropagation},
  author={Sukhbaatar, Sainbayar and Fergus, Rob and others},
  journal={Advances in neural information processing systems},
  volume={29},
  year={2016}
}

@inproceedings{singla2016noisy,
  title={Noisy submodular maximization via adaptive sampling with applications to crowdsourced image collection summarization},
  author={Singla, Adish and Tschiatschek, Sebastian and Krause, Andreas},
  booktitle={Proceedings of the AAAI Conference on Artificial Intelligence},
  volume={30},
  year={2016}
}

@inproceedings{takemori2020submodular,
  title={Submodular bandit problem under multiple constraints},
  author={Takemori, Sho and Sato, Masahiro and Sonoda, Takashi and Singh, Janmajay and Ohkuma, Tomoko},
  booktitle={Conference on Uncertainty in Artificial Intelligence},
  pages={191--200},
  year={2020},
  organization={PMLR}
}

@inproceedings{domingues2021episodic,
  title={Episodic reinforcement learning in finite mdps: Minimax lower bounds revisited},
  author={Domingues, Omar Darwiche and M{\'e}nard, Pierre and Kaufmann, Emilie and Valko, Michal},
  booktitle={Algorithmic Learning Theory},
  pages={578--598},
  year={2021},
  organization={PMLR}
}

@article{wolsey1982analysis,
  title={An analysis of the greedy algorithm for the submodular set covering problem},
  author={Wolsey, Laurence A},
  journal={Combinatorica},
  volume={2},
  number={4},
  pages={385--393},
  year={1982},
  publisher={Springer}
}

@book{fisher1978analysis,
  title={An analysis of approximations for maximizing submodular set functions—II},
  author={Fisher, Marshall L and Nemhauser, George L and Wolsey, Laurence A},
  year={1978},
  publisher={Springer}
}

@inproceedings{calinescu2007maximizing,
  title={Maximizing a submodular set function subject to a matroid constraint},
  author={Calinescu, Gruia and Chekuri, Chandra and P{\'a}l, Martin and Vondr{\'a}k, Jan},
  booktitle={International Conference on Integer Programming and Combinatorial Optimization},
  pages={182--196},
  year={2007},
  organization={Springer}
}

@inproceedings{badanidiyuru2014fast,
  title={Fast algorithms for maximizing submodular functions},
  author={Badanidiyuru, Ashwinkumar and Vondr{\'a}k, Jan},
  booktitle={Proceedings of the twenty-fifth annual ACM-SIAM symposium on Discrete algorithms},
  pages={1497--1514},
  year={2014},
  organization={SIAM}
}

@article{feige1998,
  title={A threshold of $\ln(n)$ for approximating set cover},
  author={Feige, Uriel},
  journal={Journal of the ACM (JACM)},
  volume={45},
  number={4},
  pages={634--652},
  year={1998},
  publisher={ACM}
}

@article{calinescu2011maximizing,
  title={Maximizing a monotone submodular function subject to a matroid constraint},
  author={Calinescu, Gruia and Chekuri, Chandra and Pal, Martin and Vondr{\'a}k, Jan},
  journal={SIAM Journal on Computing},
  volume={40},
  number={6},
  pages={1740--1766},
  year={2011},
  publisher={SIAM}
}

@article{feldman2025streaming,
  title={Streaming submodular maximization under matroid constraints},
  author={Feldman, Moran and Liu, Paul and Norouzi-Fard, Ashkan and Svensson, Ola and Zenklusen, Rico},
  journal={Mathematics of Operations Research},
  year={2025},
  publisher={INFORMS}
}

@inproceedings{banihashem2024dynamic,
  title={Dynamic algorithms for matroid submodular maximization},
  author={Banihashem, Kiarash and Biabani, Leyla and Goudarzi, Samira and Hajiaghayi, MohammadTaghi and Jabbarzade, Peyman and Monemizadeh, Morteza},
  booktitle={Proceedings of the 2024 Annual ACM-SIAM Symposium on Discrete Algorithms (SODA)},
  pages={3485--3533},
  year={2024},
  organization={SIAM}
}

@article{foerster2016learning,
  title={Learning to communicate with deep multi-agent reinforcement learning},
  author={Foerster, Jakob and Assael, Ioannis Alexandros and De Freitas, Nando and Whiteson, Shimon},
  journal={Advances in neural information processing systems},
  volume={29},
  year={2016}
}

@inproceedings{chen2022sample,
  title={Sample and communication-efficient decentralized actor-critic algorithms with finite-time analysis},
  author={Chen, Ziyi and Zhou, Yi and Chen, Rong-Rong and Zou, Shaofeng},
  booktitle={International Conference on Machine Learning},
  pages={3794--3834},
  year={2022},
  organization={PMLR}
}

@article{jin2018q,
  title={Is Q-learning provably efficient?},
  author={Jin, Chi and Allen-Zhu, Zeyuan and Bubeck, Sebastien and Jordan, Michael I},
  journal={Advances in neural information processing systems},
  volume={31},
  year={2018}
}

@inproceedings{hassidim2017submodular,
  title={Submodular optimization under noise},
  author={Hassidim, Avinatan and Singer, Yaron},
  booktitle={Conference on Learning Theory},
  pages={1069--1122},
  year={2017},
  organization={PMLR}
}

@inproceedings{iyer2021submodular,
  title={Submodular combinatorial information measures with applications in machine learning},
  author={Iyer, Rishabh and Khargoankar, Ninad and Bilmes, Jeff and Asanani, Himanshu},
  booktitle={Algorithmic Learning Theory},
  pages={722--754},
  year={2021},
  organization={PMLR}
}

@inproceedings{fourati2024combinatorial,
  title={Combinatorial stochastic-greedy bandit},
  author={Fourati, Fares and Quinn, Christopher John and Alouini, Mohamed-Slim and Aggarwal, Vaneet},
  booktitle={Proceedings of the AAAI Conference on Artificial Intelligence},
  volume={38},
  number={11},
  pages={12052--12060},
  year={2024}
}

@inproceedings{fourati2023randomized,
  title={Randomized greedy learning for non-monotone stochastic submodular maximization under full-bandit feedback},
  author={Fourati, Fares and Aggarwal, Vaneet and Quinn, Christopher and Alouini, Mohamed-Slim},
  booktitle={International Conference on Artificial Intelligence and Statistics},
  pages={7455--7471},
  year={2023},
  organization={PMLR}
}

@article{ismail2018survey,
  title={A survey and analysis of cooperative multi-agent robot systems: challenges and directions},
  author={Ismail, Zool Hilmi and Sariff, Nohaidda and Hurtado, E Gorrostieta},
  journal={Applications of Mobile Robots},
  volume={5},
  pages={8--14},
  year={2018},
  publisher={IntechOpen London, UK}
}

@inproceedings{azar2017minimax,
  title     = {Minimax Regret Bounds for Reinforcement Learning},
  author    = {Azar, Mohammad Ghavamzadeh and Osband, Ian and Munos, R{\'e}mi},
  booktitle = {International Conference on Machine Learning (ICML)},
  pages     = {263--272},
  year      = {2017}
}

@inproceedings{zanette2019tighter,
  title     = {Tighter Problem-Dependent Regret Bounds in Reinforcement Learning
               without Domain Knowledge using Value Function Bounds},
  author    = {Zanette, Andrea and Brunskill, Emma},
  booktitle = {International Conference on Machine Learning (ICML)},
  pages     = {7304--7313},
  year      = {2019}
}

@inproceedings{osband2016generalization,
  title     = {Generalization and Exploration via Randomized Value Functions},
  author    = {Osband, Ian and Van Roy, Benjamin and Wen, Zheng},
  booktitle = {International Conference on Machine Learning (ICML)},
  pages     = {2377--2386},
  year      = {2016}
}

@inproceedings{zhang2021reinforcement,
  title={Is reinforcement learning more difficult than bandits? a near-optimal algorithm escaping the curse of horizon},
  author={Zhang, Zihan and Ji, Xiangyang and Du, Simon},
  booktitle={Conference on Learning Theory},
  pages={4528--4531},
  year={2021},
  organization={PMLR}
}

@article{dann2021beyond,
  title={Beyond value-function gaps: Improved instance-dependent regret bounds for episodic reinforcement learning},
  author={Dann, Christoph and Marinov, Teodor Vanislavov and Mohri, Mehryar and Zimmert, Julian},
  journal={Advances in Neural Information Processing Systems},
  volume={34},
  pages={1--12},
  year={2021}
}

@inproceedings{agrawal2025optimistic,
  title={Optimistic Q-learning for average reward and episodic reinforcement learning extended abstract},
  author={Agrawal, Priyank and Agrawal, Shipra},
  booktitle={The Thirty Eighth Annual Conference on Learning Theory},
  pages={1--1},
  year={2025},
  organization={PMLR}
}

@article{neu2020unifying,
  title={A unifying view of optimism in episodic reinforcement learning},
  author={Neu, Gergely and Pike-Burke, Ciara},
  journal={Advances in Neural Information Processing Systems},
  volume={33},
  pages={1392--1403},
  year={2020}
}

@article{zhang2020almost,
  title={Almost optimal model-free reinforcement learningvia reference-advantage decomposition},
  author={Zhang, Zihan and Zhou, Yuan and Ji, Xiangyang},
  journal={Advances in Neural Information Processing Systems},
  volume={33},
  pages={15198--15207},
  year={2020}
}
  \appendix
  \onecolumn
\clearpage
\begin{center}
	 {\LARGE \textbf{Appendix}}
\end{center}
\section{Additional Related Work }
\label{appdx:related_work}
\subsection{Combinatorial Multi-Armed Bandit Problem with Submodular Reward}
Submodular rewards have been introduced in the multi-armed combinatorial bandit setting \citep{chenadaptive,takemori2020submodular,chen2024linear,singla2016noisy}. There are two primary lines of research in this area. The first focuses on the online regret-minimization setting, where the objective is to interact sequentially with an unknown environment and minimize cumulative regret \cite{takemori2020submodular,fourati2023randomized,fourati2024combinatorial,yue2011linear}. The second line of work studies the pure-exploration setting, where the objective is to find a high-quality solution in as few noisy samples as possible\cite{singla2016noisy, chen2024linear}. While these problem settings are related, they differ fundamentally in problem complexity: bandit settings lack Markovian state dynamics and consider only single-step episodes in which agents select subsets of arms and immediately observe stochastic rewards. In contrast, our framework incorporates full MDP dynamics with state transitions, multi-step planning horizons of length $H$, and coordination among multiple agents whose joint actions influence both immediate rewards and future state distributions. 

\subsection{Regret Analysis in Tabular RL}

The study of regret minimization in episodic Markov decision processes (MDPs) has a long and rich history in reinforcement learning theory \cite{osband2016generalization, jin2018q, zhang2021reinforcement, dann2021beyond,agrawal2025optimistic,neu2020unifying}. For episodic MDPs with horizon $H$, $S$ states, $A$ actions, and $T$ total interaction steps, \citet{azar2017minimax} established a near-minimax regret lower bound of $\Omega(\sqrt{H^2 S A T})$ and proposed the model-based algorithm UCBVI, which achieves a matching upper bound up to logarithmic factors. \citet{zanette2019tighter} proposed the EULER algorithm, which matches the 
minimax rate and yields tighter instance-dependent bounds by exploiting 
Bernstein-type concentration and the variance structure of the value function. Complementary to these model-based approaches, \citet{jin2018q} developed the 
first provably efficient model-free algorithm via Q-learning with UCB exploration 
bonuses, achieving $\tilde{O}(\sqrt{H^3SAT})$ regret. Subsequent work reduced 
the horizon dependence through refined exploration 
strategies \citep{zhang2020almost} and obtained a bound that matches the near-minimax lower bound.

\section{Appendix for Section \ref{sec:cc_vfd}}
In this portion of the appendix, we present missing details and proofs from Section \ref{sec:cc_vfd} in the main
paper. 
We first present missing content from Section \ref{sec:computational_challenges} in Section \ref{appdx:bellman_equation} and \ref{appdx:complexity}, 
then provide the omitted proof for Lemma \ref{lem:bellman_marginal_gain} in Section \ref{appdx:Bellman_marginal_gain_proof}.

\subsection{Bellman Optimality}\label{appdx:bellman_equation}
  \begin{lemma}[Bellman Optimality~\cite{bellman1957markovian,sutton1998reinforcement}]
  \label{lemma:bellman}
  For any policy $\pi$, we have that
      \begin{align}
      \label{eqn:bellman_eqn}
      V_h^\pi(\vect{s}) &= \sum_{a^1,a^2,...,a^K\in\mA}\pi_{h}(\vect{s},\vect{a})Q_{h}^\pi(\vect{s},\vect{a}),\notag\\
      Q_h^\pi(\vect{s},\vect{a}) &= r(\vect{s},\vect{a})+\sum_{s'^1,s'^2,...,s'^K\in\mS}\prod_{i=1}^KP_{i,h}(s'^i|s^i,a^i)V_{h+1}^\pi(\vect{s}',\vect{a}).
  \end{align}
 Besides, there exists a deterministic optimal policy $\pi^*$ such that $\pi^*_{h}(\vect{s},\vect{a})=1$ or $0$. In particular, 
    We denote the optimal value function and Q function as $V^*$ and $Q^*$. Then the optimal solution satisfies
   \begin{align}
   \label{eqn:bellman_opt}
      V_h^*(\vect{s}) &= \max_{\vect{a}\in\mA^K}Q_{h}^*(\vect{s},\vect{a}),\notag\\
      Q_h^*(\vect{s},\vect{a}) &= r(\vect{s},\vect{a})+\sum_{s'^1,s'^2,...,s'^K\in\mS}\prod_{i=1}^KP_{i,h}(s'^i|s^i,a^i)V_{h+1}^*(\vect{s}').
  \end{align}
  \end{lemma}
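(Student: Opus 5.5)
The plan is to reduce the statement to the classical single-agent finite-horizon theory. I would view the MAMDP-SR as a single-agent episodic MDP with state space $\mS^K$, action space $\mA^K$, reward $r(\vect{s},\vect{a})$, and transition kernel
\[
P_h(\vect{s}'\mid\vect{s},\vect{a})=\prod_{i=1}^K P_{i,h}(s'^i\mid s^i,a^i),
\]
which is a valid kernel because it is a product of probability distributions. Throughout I adopt the convention $V_{H+1}^\pi\equiv 0$. I also read the term $V_{h+1}^\pi(\vect{s}',\vect{a})$ in \eqref{eqn:bellman_eqn} as $V_{h+1}^\pi(\vect{s}')$: the value at step $h+1$ depends only on the next state under a Markov policy.

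\textbf{Policy evaluation identities \eqref{eqn:bellman_eqn}.} I would derive both from the tower property of conditional expectation.
\begin{enumerate}
\item For $V_h^\pi$, condition on $\vect{a}_h$. Under $\pi$ the action $\vect{a}_h$ is drawn from $\pi_h(\vect{s},\cdot)$, and the conditional expectation given $\vect{s}_h=\vect{s},\vect{a}_h=\vect{a}$ is $Q_h^\pi(\vect{s},\vect{a})$ by definition. This gives $V_h^\pi(\vect{s})=\sum_{\vect{a}}\pi_h(\vect{s},\vect{a})Q_h^\pi(\vect{s},\vect{a})$.
\item For $Q_h^\pi$, split off the deterministic term $r(\vect{s},\vect{a})$ and condition the remaining sum $\sum_{t=h+1}^H r(\vect{s}_t,\vect{a}_t)$ on $\vect{s}_{h+1}$.
\item By the Markov property of the dynamics, and because $\pi_{t}$ for $t>h$ depends only on the current joint state, the conditional law of the future trajectory given $(\vect{s}_h,\vect{a}_h,\vect{s}_{h+1}=\vect{s}')$ equals its law given $\vect{s}_{h+1}=\vect{s}'$ alone. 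Hence that conditional expectation is $V_{h+1}^\pi(\vect{s}')$.
\item Averaging over $\vect{s}'$ with the product kernel yields the stated formula.
\end{enumerate}

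\textbf{Optimality.} I would argue by backward induction on $h$ from $H$ down to $1$.
\begin{enumerate}
\item Define $V^*_{H+1}\equiv 0$, and recursively
\[
Q_h^*(\vect{s},\vect{a})=r(\vect{s},\vect{a})+\sum_{\vect{s}'}P_h(\vect{s}'\mid\vect{s},\vect{a})V_{h+1}^*(\vect{s}'),\qquad V_h^*(\vect{s})=\max_{\vect{a}}Q_h^*(\vect{s},\vect{a}).
\]
The maximum exists because $\mA^K$ is finite.
\item Define the deterministic policy $\pi_h^*(\vect{s},\vect{a})=1$ for a single fixed maximizer $\vect{a}\in\arg\max Q_h^*(\vect{s},\cdot)$ (ties broken arbitrarily), and $0$ otherwise.
\item \emph{Achievability.} Substituting $\pi^*$ into \eqref{eqn:bellman_eqn} and inducting shows $V_h^{\pi^*}=V_h^*$ and $Q_h^{\pi^*}=Q_h^*$ for all $h$.
\item \emph{Upper bound.} For an arbitrary policy $\pi$, I show $V_h^\pi\le V_h^*$ pointwise, again by backward induction. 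If $V_{h+1}^\pi\le V_{h+1}^*$, then \eqref{eqn:bellman_eqn} and nonnegativity of the transition probabilities give $Q_h^\pi\le Q_h^*$. Then $V_h^\pi(\vect{s})$ is a convex combination of the values $Q_h^\pi(\vect{s},\cdot)\le Q_h^*(\vect{s},\cdot)$, so it is at most $\max_{\vect{a}}Q_h^*(\vect{s},\vect{a})=V_h^*(\vect{s})$.
\item Consequently $\pi^*$ is optimal, it is deterministic, and $V^*$ and $Q^*$ satisfy \eqref{eqn:bellman_opt}.
\end{enumerate}

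\textbf{Main obstacle.} The only step requiring care is the Markov-property justification in the evaluation step. I must make explicit that policies are Markov in the joint state, so the future reward stream after step $h+1$ is conditionally independent of $(\vect{s}_h,\vect{a}_h)$ given $\vect{s}_{h+1}$. I would make this explicit by writing the trajectory law as the product
\[
\prod_t \pi_t(\vect{s}_t,\vect{a}_t)\,P_t(\vect{s}_{t+1}\mid \vect{s}_t,\vect{a}_t)
\]
and factoring it at $t=h+1$. Everything else is routine induction. Submodularity and agent independence play no role beyond guaranteeing that $P_h$ is a legitimate kernel and that $r$ is bounded.
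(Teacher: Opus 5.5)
Your proposal is correct and matches the paper's approach. The paper gives no proof of its own and simply cites the classical single-agent result; your steps are the standard argument behind that citation: the reduction to one agent on $\mS^K\times\mA^K$ with the product kernel, the tower-property derivation of the evaluation equations, and backward induction showing the greedy deterministic policy attains the pointwise optimum (the same pattern as the paper's sketch of Lemma~\ref{lem:bellman_marginal_gain}). You also interpret the statement correctly on two points: reading $V_{h+1}^\pi(\vect{s}',\vect{a})$ as $V_{h+1}^\pi(\vect{s}')$, and taking $V^*$ to be the pointwise optimum over Markov policies, so the optimality equations hold at every joint state and not only those reachable from $\bar{\vect{s}}_1$.
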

\subsection{Special case of $H=1$}
\label{appdx:complexity}

In this portion of the appendix, we prove that when $H=1$, the problem of finding optimal policies in MARLS is NP-hard by reduction to submodular maximization under partition matroid constraints. In particular, we have the following lemma.

\begin{lemma}[Equivalence to Partition Matroid Submodular Maximization]
\label{lem:partition_matroid_equivalence}
Let $f: 2^{\mathcal{S} \times \mathcal{A}} \to [0,1]$ be monotone and submodular. The problem of finding the optimal policy when $H=1$,
\begin{align*}
    \max_{a^1,\ldots,a^K} f\bigl(\{ (\bar{s}^1, a^1), \ldots, (\bar{s}^K, a^K) \} \bigr),
\end{align*}
is equivalent to monotone submodular maximization subject to a partition matroid constraint. 
\end{lemma}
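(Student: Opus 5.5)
The plan is to exhibit an explicit, solution-preserving correspondence between joint action vectors and independent sets of a partition matroid, and to show it preserves objective values in both directions. Fix the initial joint state $\bar{\vect{s}}_1=(\bar s^1,\dots,\bar s^K)$. I will work over a ground set in which each element carries an agent index,
\[
E=\{(i,a): i\in[K],\ a\in\mA\},
\]
partitioned into blocks $E_i=\{(i,a):a\in\mA\}$. The matroid is $\mathcal{I}=\{X\subseteq E: |X\cap E_i|\le 1\ \forall i\}$. On $E$ I define $g(X)=f(\{(\bar s^i,a):(i,a)\in X\})$. Then $g$ is monotone and submodular, since it is the composition of $f$ with the map $\phi(i,a)=(\bar s^i,a)$, and composing a monotone submodular function with an element map preserves both properties. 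The cleanest way to see this is through the diminishing-returns inequality. If $\phi(x)$ already lies in $\phi(B)$, the marginal gain at $B$ is zero, which is at most the (nonnegative, by monotonicity) marginal at $A$. Otherwise I apply Assumption 1 to $\phi(A)\subseteq\phi(B)$.

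Next I establish the correspondence. A vector $(a^1,\dots,a^K)$ maps to the basis $X=\{(i,a^i)\}_{i\in[K]}$, and $g(X)$ equals the MARLS objective. Conversely, every $X\in\mathcal{I}$ can be extended to a basis without decreasing $g$, by monotonicity, and every basis corresponds to an action vector. Hence the two optima coincide, and optimal solutions map to optimal solutions in both directions. This shows that MARLS with $H=1$ is a special case of monotone submodular maximization over a partition matroid.

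For the reverse direction, which is what NP-hardness needs, I take an arbitrary instance: a monotone submodular $g$ on a ground set $E$ partitioned into $E_1,\dots,E_K$. I will embed it in MARLS as follows. Set $\mA=E$ and let $\mS=\{1,\dots,K\}$, with agent $i$ starting in state $\bar s^i=i$. The reward is $f(\{(s^j,a^j)\})=g(\{a^j: a^j\in E_{s^j}\})$, so that actions outside an agent's own block are ignored. This $f$ is the composition of $g$ with a filtering map that sends out-of-block pairs to nothing. Filtering preserves monotonicity, and it preserves submodularity because an ignored element has zero marginal everywhere. Any action profile then yields a feasible independent set with the same value, and any independent set is achieved by some profile. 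Finally I normalize $g$ into $[0,1]$ by scaling.

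The main obstacle is the multiset/collision issue. In the paper's reward, two agents sharing both a state and an action contribute the same set element, so the map from action vectors to sets is not injective. That is why I index elements by agent (or give agents distinct states) and verify submodularity of the composed function through the case analysis above, rather than assuming it is immediate.
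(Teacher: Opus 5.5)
Your forward direction is the paper's proof. You use the same agent-indexed ground set $\mathcal{U}=\bigcup_i\mathcal{A}_i$, the same one-per-block matroid, and the same $F(X)=f(\{(\bar s^i,a):(i,a)\in X\})$. You also use the same ``extend to a basis by monotonicity'' step to equate the two optima.

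You differ in two places, both to your advantage. First, when verifying submodularity, the paper asserts that $e=(j,a)\notin Y$ implies $(\bar s^j,a)\notin S_Y$. This fails when two agents share an initial state: take $\bar s^1=\bar s^2$, $Y=\{(2,a)\}$, $e=(1,a)$. Your case split closes that step: in the collision case the marginal at $Y$ is $0$, which is at most the nonnegative marginal at $X$.

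Second, the paper only shows that the $H=1$ problem is an instance of monotone submodular maximization over a partition matroid. That direction alone does not yield the NP-hardness claimed in Section~\ref{sec:computational_challenges}, which requires hard instances to embed into MARLS. Your reverse construction supplies that direction, using $\mathcal{S}=[K]$, $\bar s^i=i$, $\mathcal{A}=E$, and filtering out out-of-block actions. Because each $a\in E$ lies in a unique block, the retained pairs map injectively, so no collisions arise there either.

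The only looseness is that realizing an independent set that misses block $i$ needs an action outside $E_i$, which does not exist if $E_i=E$. As in your forward direction, monotonicity makes this irrelevant to the optimum. Your argument is therefore a more complete and more careful version of the paper's.
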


\begin{proof}
We establish the equivalence by constructing a partition matroid and a corresponding monotone submodular function. For each agent $i \in [K]$, define $\mathcal{A}_i = \{(i,a) : a \in \mathcal{A}\}$ as a labeled copy of the action space. Let the ground set be $\mathcal{U} = \bigcup_{i=1}^K \mathcal{A}_i$,
then $\{A_i\}_{i\in [K]}$ form a partition of $\mathcal{U}$, as $A_i$ and $A_j$ are disjoint for any $i\neq j$.
We define the partition matroid $\mathcal{M} = (\mathcal{U}, \mathcal{I})$ with independent sets
\begin{align*}
    \mathcal{I} = \{X \subseteq \mathcal{U} : |X \cap \mathcal{A}_i| \leq 1 \text{ for all } i \in [K]\}.
\end{align*}
This constraint ensures that at most one action from each agent's action space can be selected.

For any $X \subseteq \mathcal{U}$, let $X_i = X \cap \mathcal{A}_i$ denote the elements in $X$ corresponding to agent $i$. Define $F: 2^{\mathcal{U}} \to [0,1]$ by
\begin{align*}
    F(X) = f\bigl(\{(\bar{s}^i, a) : (i,a) \in X\}\bigr).
\end{align*}
If $X_i = \emptyset$, agent $i$ contributes no state-action pair to the argument of $f$.

We first verify that $F$ is monotone submodular. For monotonicity, if $X \subseteq Y \subseteq \mathcal{U}$, then $\{(\bar{s}^i, a) : (i,a) \in X\} \subseteq \{(\bar{s}^i, a) : (i,a) \in Y\}$, so $F(X) \leq F(Y)$ by monotonicity of $f$. For submodularity, consider $X \subseteq Y \subseteq \mathcal{U}$ and $e = (j,a) \in \mathcal{U} \setminus Y$. Define $S_X = \{(\bar{s}^i, a') : (i,a') \in X\}$ and $S_Y = \{(\bar{s}^i, a') : (i,a') \in Y\}$. Then $S_X \subseteq S_Y$ and $(\bar{s}^j, a) \notin S_Y$. By submodularity of $f$,
\begin{align*}
    F(X \cup \{e\}) - F(X) &= f(S_X \cup \{(\bar{s}^j, a)\}) - f(S_X) \\
    &\geq f(S_Y \cup \{(\bar{s}^j, a)\}) - f(S_Y) = F(Y \cup \{e\}) - F(Y).
\end{align*}

 Next, we prove the equivalence. Since $F$ is monotone, there exists at least one optimal solution $X^*$ of the problem $\max_{X \in \mathcal{I}} F(X)$ with the form $X^* = \{(1,a^1), \ldots, (K,a^K)\}$, i.e., containing exactly one element from each partition $\mathcal{A}_i$. Therefore, 
\begin{align*}
\max_{X \in \mathcal{I}} F(X) = \max_{X\subseteq \mathcal{U}: |X\cap\mathcal{A}_i|=1 \text{ for all } i}F(X).
\end{align*}
Since the optimization problem $\max_{a^1,\ldots,a^K \in \mathcal{A}} f\bigl(\{(\bar{s}^i, a^i) : i \in [K]\}\bigr)$ is equivalent to $\max_{X\subseteq \mathcal{U}: |X\cap\mathcal{A}_i|=1 \text{ for all } i}F(X)$, we conclude that
\begin{align*}
    \max_{a^1,\ldots,a^K \in \mathcal{A}} f\bigl(\{(\bar{s}^i, a^i) : i \in [K]\}\bigr) = \max_{X \in \mathcal{I}} F(X),
\end{align*}
which is monotone submodular maximization subject to a partition matroid constraint.
\end{proof}

\subsection{Proof of Lemma \ref{lem:bellman_marginal_gain}}
\label{appdx:Bellman_marginal_gain_proof}
In this section, we prove the omitted proof sketch of Lemma \ref{lem:bellman_marginal_gain}.
\begin{proof}[Proof sketch]
Given fixed $\pi_{[i-1]}$, 
for the first part, by definition of $Q_h^{\pi_i}$ and the tower property of expectation,
\begin{align*}
    Q_h^{\pi_i}(s,a;\pi_{[i-1]}) &= \mathbb{E}_{\pi_i}[\Delta r_i(\mathbf{s}_h,\mathbf{a}_h) + \sum_{t=h+1}^H \Delta r_i(\mathbf{s}_t,\mathbf{a}_t) \mid s_h^i=s, a_h^i=a, \pi_{[i-1]}]\\
    &= R_{i,h}(s,a|\pi_{[i-1]}) + \mathbb{E}_{s_{h+1}^i \sim P_{i,h}(\cdot|s,a)}[V_{h+1}^{\pi_i}(s_{h+1}^i;\pi_{[i-1]})]\\
    &= R_{i,h}(s,a|\pi_{[i-1]}) + \sum_{s'\in\mathcal{S}}P_{i,h}(s'|s,a)V_{h+1}^{\pi_i}(s';\pi_{[i-1]}).
\end{align*}
The equation for $V_h^{\pi_i}$ follows from the law of total expectation. 

Let us define an augmented MDP ${M}_i$ for agent $i$ with state space $\mathcal{S}$, action space $\mathcal{A}$, transition kernel $P_{i,h}$, and time-varying reward $R_{i,h}(\cdot,\cdot|\pi_{[i-1]})$. The key observation is that $R_{i,h}(s,a|\pi_{[i-1]})$ is deterministic (as a function of $s,a,h$) because the expectation in \eqref{eq:marginal_reward} is taken over the fixed stochastic policy $\pi_{[i-1]}$ and known transition dynamics. It then follows from the result above that $Q_h^{\pi_i}(s,a;\pi_{[i-1]})$ and $V_h^{\pi_i}(s;\pi_{[i-1]})$ are the value functions for $M_i$ given policy $\pi_i$. By definition of $\widetilde{\pi}_i$, we know that $\widetilde{\pi}_i$ is the optimal policy for MDP ${M}_i$. By the Bellman optimality principle, the optimal value function satisfies $V_h^{\widetilde{\pi}_i}(s;\pi_{[i-1]}) = \max_a Q_h^{\widetilde{\pi}_i}(s,a;\pi_{[i-1]})$, and the optimal Q-function satisfies the Bellman equation. This can be verified by backward induction from $h=H$ to $h=1$.
\end{proof}

\section{Appendix for Section \ref{sec:known_P}}
\label{appdx:known_P}
In this section, we present the proof of the main result in Theorem \ref{thm:knownP}. 

First of all, we present the following lemma that guarantees the high probability event during the execution of \GPO.

    \begin{lemma}
      \label{lem:knownP_high_prob}
          Let us define the event  
\begin{align*}
    \mathcal{E}&:=\left\{\forall(s,a)\in\mathcal{S}\times\mathcal{A}\text{, }\forall i\in[K], h\in[H]: |R_{i,h}(s,a|\pi_{[i-1]})-\widehat{R}_{i,h}(s,a|\pi_{[i-1]})|\leq\epsilon\right\}.
\end{align*} Then $\mathcal{E}$ holds with probability $1-\delta$.
      \end{lemma}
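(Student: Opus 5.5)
The plan is Hoeffding's inequality for each fixed tuple $(i,h,s,a)$, followed by a union bound over all $K|\mS||\mA|H$ tuples, with the dependence on earlier random choices handled by conditioning.

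Fix an agent index $i\geq 2$ and condition on the randomness used to construct $\pi_{[i-1]}$. This includes the samples drawn in earlier outer iterations, which determined the estimates $\widehat{R}_{j,h}$ for $j<i$ and hence the deterministic greedy policies $\pi_1,\dots,\pi_{i-1}$. Conditionally, $\pi_{[i-1]}$ is a fixed decomposable policy. Since transitions are independent across agents, the $l$-th sample $\{(s^j_{h,l},a^j_{h,l})\}_{j\in[i-1]}$ at step $h$ is distributed exactly as the state-action pairs of the first $i-1$ agents at step $h$ under $\pi_{[i-1]}$. This holds because each agent's trajectory $\tau_{j,l}$ was sampled from $P_j$ under $\pi_j$, and trajectories of different agents are independent. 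Across $l=1,\dots,N$ the samples are i.i.d.

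For fixed $(h,s,a)$, the terms
\[
X_l=f\bigl(\{(s^j_{h,l},a^j_{h,l})\}_{j\in[i-1]}\cup\{(s,a)\}\bigr)-f\bigl(\{(s^j_{h,l},a^j_{h,l})\}_{j\in[i-1]}\bigr)
\]
are i.i.d. and lie in $[0,1]$, by monotonicity and $f\in[0,1]$. Their mean is $R_{i,h}(s,a\mid\pi_{[i-1]})$. The reason is that the marginal gain $\Delta r_i(\vect{s}_h,\vect{a}_h)$ depends only on the first $i$ agents' state-action pairs, and conditioning on $s_h^i=s,a_h^i=a$ does not change the distribution of agents $1,\dots,i-1$, again by independence of the agents' dynamics. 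Hoeffding's inequality gives
\[
\Pr\bigl(|\widehat{R}_{i,h}-R_{i,h}|>\epsilon \,\big|\, \pi_{[i-1]}\bigr)\leq 2e^{-2N\epsilon^2}=\frac{\delta}{K|\mS||\mA|H}
\]
for the stated $N$. Taking expectation removes the conditioning. For $i=1$ we have $\widehat{R}_{1,h}=R_{1,h}$ exactly, because $\Delta r_1(\vect{s},\vect{a})=f(\{(s,a)\})$.

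A union bound over at most $K|\mS||\mA|H$ tuples then shows $\mathcal{E}$ fails with probability at most $\delta$. The main subtlety is that the policies $\pi_{[i-1]}$ are themselves random and are built from earlier samples. The conditioning argument handles this: the samples used for agent $i$'s estimates are fresh draws made after $\pi_{i-1}$ is fixed. One also needs that the sampled trajectories of agents $1,\dots,i-1$ are jointly distributed as under $\pi_{[i-1]}$. I would verify this using the product form of $P_h$ and the decomposability of the policy.
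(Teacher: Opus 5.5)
Your argument is the paper's argument: Hoeffding for each fixed $(i,h,s,a)$ on the $[0,1]$-valued marginal gains, the check $2e^{-2N\epsilon^2}=\delta/(K|\mS||\mA|H)$ for the stated $N$, exactness at $i=1$, and a union bound. The problem is the extra step you add to handle adaptivity, which fails for the algorithm as written.

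In Algorithm~\ref{alg:greedypo}, each agent's trajectories $\tau_{j,1},\dots,\tau_{j,N}$ are drawn once, right after $\pi_j$ is fixed, and then reused. $\widehat{R}_{i,h}$ averages over the tuples $(\tau_{1,l},\dots,\tau_{i-1,l})$. Meanwhile, each $\pi_j$ with $2\le j\le i-1$ was chosen through $\widehat{R}_{j,h}$, so it is a function of $\tau_{1,\cdot},\dots,\tau_{j-1,\cdot}$.

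Your claim that ``the samples used for agent $i$'s estimates are fresh draws made after $\pi_{i-1}$ is fixed'' therefore holds only for $\tau_{i-1,\cdot}$. Conditioning on the value of $\pi_{[i-1]}$ does not rescue it, since that event is itself a function of the earlier samples and biases their law. If instead you condition on all randomness that built $\pi_{[i-1]}$, the trajectories $\tau_{1,\cdot},\dots,\tau_{i-2,\cdot}$ are frozen. The $X_l$ are then not identically distributed, and their conditional mean is an empirical average over the frozen trajectories rather than $R_{i,h}(s,a\mid\pi_{[i-1]})$.

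Your claim that trajectories of different agents are independent fails for the same reason: the policy generating $\tau_{j',\cdot}$ depends on $\tau_{j,\cdot}$ for $j<j'$. So for $i\ge 3$ you are applying Hoeffding to a quantity whose definition, through the data-dependent $\pi_{[i-1]}$, was selected using the very samples being averaged. That is a genuine adaptive-reuse gap.

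The paper's proof has the same hole. It asserts $\mE[\Delta f(\cdot)]=R_{i,h}(s,a\mid\pi_{[i-1]})$ and applies Hoeffding without discussing the dependence. Your write-up is therefore no weaker than the paper's, but the justification you offer is false. There are two ways to close the gap.

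\emph{Resample.} At iteration $i$, draw $N$ fresh joint trajectories of agents $1,\dots,i-1$ under $\pi_{[i-1]}$. Your conditioning argument then goes through essentially verbatim, at the cost of order $K$ times more sampling.

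\emph{Uniform bound.} Keep the algorithm and pre-draw independent next states $\xi_{j,l,h}(s,a)\sim P_{j,h}(\cdot\mid s,a)$. For every fixed tuple $\sigma$ of deterministic policies, the induced samples are then i.i.d.\ under $\sigma$, and they coincide with the algorithm's samples when $\sigma=\pi_{[i-1]}$. Add a union bound over all $|\mA|^{|\mS|H(i-1)}$ such tuples. This adds roughly $K|\mS|H\log|\mA|$ inside the logarithm defining $N$, so by this route the lemma does not follow with exactly the stated $N$.
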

      \begin{proof}
          For any $i\in[K]$, $h\in[H]$, and $l\in[N]$, we consider the following random variable obtained from the $l$-th trajectory $\{(s_{h,l}^j, a_{h,l}^j)\}_{h\in[H],j\in[K]}$.
\begin{align*}
    f(\{(s_{h,l}^j, a_{h,l}^j)\}_{j\in[i-1]}\cup\{(s,a)\})-f(\{(s_{h,l}^j, a_{h,l}^j)\}_{j\in[i-1]})= \Delta f(\{(s_{h,l}^j, a_{h,l}^j)\}_{j\in[i-1]},(s,a))
\end{align*}
Then it follows that 
\begin{align*}
    \mE [f(\{(s_{h,l}^j, a_{h,l}^j)\}_{j\in[i-1]}\cup\{(s,a)\})-f(\{(s_{h,l}^j, a_{h,l}^j)\}_{j\in[i-1]})]&= \mE [\Delta f(\{(s_{h,l}^j, a_{h,l}^j)\}_{j\in[i-1]},(s,a))]\\
    &= R_{i,h}(s,a|\pi_{[i-1]})
\end{align*}
By Hoeffding's inequality, we have that for any fixed $(s,a)\in\mS\times\mA$, and $h\in[H]$, the following inequality holds with probability at least $1-\frac{\delta}{K|\mS||\mA|H}$:
\begin{align*}
    |R_{i,h}(s,a|\pi_{[i-1]})-\widehat{R}_{i,h}(s,a|\pi_{[i-1]})|\leq\epsilon
\end{align*}
By taking the union bound over all $i\in[K]$, $h\in[H]$, $(s,a)\in\mathcal{S}\times\mathcal{A}$, we have that the event $\mathcal{E}$ holds with probability at least $1-\delta$.
      \end{proof}

Next, we provide the major proofs of Theorem \ref{thm:knownP}.

  \begin{proof}
      Suppose we have two groups of $K$ agents, the first group of $K$ agents follows the policy output by Algorithm \ref{alg:greedypo}. We denote the trajectory by $\{(s_h^i,a_h^i)\}_{h\in[H],{i\in[K]}}$. The second group of $K$ agents follows the optimal policy $\pi^*$, and we denote the trajectory of the second group of agents by $\{(\tilde{s}_h^i,\tilde{a}_h^i)\}_{h\in[H],i\in[K]}$.
      For notation simplicity, we define $$G_i = \mE[\sum_{h=1}^H\Delta f(\{(s^j_h,a^j_h)\}_{j\in[K]}\cup \{(\tilde{s}^m_h,\tilde{a}^m_h)\}_{m\in[i-1]}, (\tilde{s}^i_h,\tilde{a}^i_h))|(\vect{s}, \vect{a})\sim \pi, (\tilde{\vect{s}},\tilde{\vect{a}})\sim\pi^*],$$
      where $(\vect{s}, \vect{a})\sim \pi, (\tilde{\vect{s}},\tilde{\vect{a}})\sim\pi^*$ means that $\{(s_h^i,a_h^i)\}_{h\in[H],{i\in[K]}}$ is sampled according to policy $\pi$, and $\{(\tilde{s}_h^i,\tilde{a}_h^i)\}_{h\in[H],i\in[K]}$ is sampled based on optimal policy $\pi^*$. Then by definition, we have that
      \begin{align*}
         G_i = \mE[\sum_{h=1}^H f(\{(s^j_h,a^j_h)\}_{j\in[K]}\cup \{(\tilde{s}^m_h,\tilde{a}^m_h)\}_{m\in[i]})-f(\{(s^j_h,a^j_h)\}_{j\in[K]}\cup \{(\tilde{s}^m_h,\tilde{a}^m_h)\}_{m\in[i-1]})|(\vect{s}, \vect{a})\sim \pi, (\tilde{\vect{s}},\tilde{\vect{a}})\sim\pi^*]. 
      \end{align*}
      It then follows that 
      \begin{align}
      \label{eqn:lower_bound_G}
          \sum_{i=1}^K G_i &= \mE[\sum_{h=1}^H f(\{(s^j_h,a^j_h)\}_{j\in[K]}\cup \{(\tilde{s}^m_h,\tilde{a}^m_h)\}_{m\in[K]})-f(\{(s^j_h,a^j_h)\}_{j\in[K]})|(\vect{s}, \vect{a})\sim \pi, (\tilde{\vect{s}},\tilde{\vect{a}})\sim\pi^*]\notag\\
          &\geq \mE[\sum_{h=1}^H f( \{(\tilde{s}^m_h,\tilde{a}^m_h)\}_{m\in[K]})-f(\{(s^j_h,a^j_h)\}_{j\in[K]})|(\vect{s}, \vect{a})\sim \pi, (\tilde{\vect{s}},\tilde{\vect{a}})\sim\pi^*]\notag\\
          &= \mE[\sum_{h=1}^H f( \{(\tilde{s}^m_h,\tilde{a}^m_h)\}_{m\in[K]})| (\tilde{\vect{s}},\tilde{\vect{a}})\sim\pi^*]-\mE[\sum_{h=1}^H f(\{(s^j_h,a^j_h)\}_{j\in[K]})|(\vect{s}, \vect{a})\sim \pi]\notag\\
          &= V_1^*(\bar{\vect{s}}_1)- V_1^\pi(\bar{\vect{s}}_1),
      \end{align}

      where the inequality follows from the fact that $f$ is monotone.
      Next, we bound $G_i$ for each $i$. Since $\{(s^j_h,a^j_h)\}_{j\in[i-1]}\subseteq \{(s^j_h,a^j_h)\}_{j\in[K]}\cup \{(\tilde{s}^m_h,\tilde{a}^m_h)\}_{m\in[i]}$ , by submodularity, we have that 
      \begin{align}
      \label{eqn:upper_Gi_with_submodularity}
          G_i\leq \mE[\sum_{h=1}^H\Delta f(\{(s^j_h,a^j_h)\}_{j\in[i-1]}, (\tilde{s}^i_h,\tilde{a}^i_h))|(\vect{s}, \vect{a})\sim \pi, (\tilde{\vect{s}},\tilde{\vect{a}})\sim\pi^*].
      \end{align}

      Notice that the optimal policy is not guaranteed to be decomposable. Let us denote the policy $\pi^*_{i,h}$ to be the marginal policy of the $i$-th agent at time step $h$ while the global policy is $\pi^*$. Then the marginal policy of the agent $i$ should be computed via integration of the state-action pairs on the other agents. In particular, 
      \[
\pi_{i,h}^*(s,a) 
= \sum_{\substack{\mathbf{s},\mathbf{a}\\ s^i=s,\, a^i=a}} \pi_h^*(\mathbf{s},\mathbf{a}).
\]

For notational simplicity, here we also use $\pi_i^*=\{\pi_{i,h}^*\}_{h\in[H]}$ to denote the marginal policy of agent $i$ throughout the entire episode. Therefore, 
\begin{align*}
 &\mE[\sum_{h=1}^H\Delta f(\{(s^j_h,a^j_h)\}_{j\in[i-1]}, (\tilde{s}^i_h,\tilde{a}^i_h))|(\vect{s}, \vect{a})\sim \pi, (\tilde{\vect{s}},\tilde{\vect{a}})\sim\pi^*]\\
 =&\mE[\sum_{h=1}^H\Delta f(\{(s^j_h,a^j_h)\}_{j\in[i-1]}, (\tilde{s}^i_h,\tilde{a}^i_h))|(\vect{s}, \vect{a})\sim \pi, (\tilde{s}^i,\tilde{a}^i)\sim\pi^*_{i}]\\
 =& V_1^{\pi_i^*}(\bar{s}_1;\pi_{[i-1]}) .  
\end{align*}
Combine the above inequality with \eqref{eqn:upper_Gi_with_submodularity}, it then follows that 
\begin{align}
\label{eqn:upper_bound_Gi_knownP}
    G_i\leq V_1^{\pi_i^*}(\bar{s}_1;\pi_{[i-1]}).
\end{align}

Next, we introduce and explain the following Lemma \ref{lem:value_f_approx_knownP}. The proof of the lemma is deferred later in this section.
\begin{lemma}
    \label{lem:value_f_approx_knownP}
    With probability at least $1-\delta$, we have that for any time step $h\in[H]$, and any agent $i\in[K]$,
    \begin{align*}
    V_h^{\pi_i^*}(s;\pi_{[i-1]})&\leq \widehat{V}_h^{i}(s)+\epsilon (H-h+1)\leq V_h^{\pi_i}(s;\pi_{[i-1]})+ 2\epsilon (H-h+1)\\
    Q_h^{\pi_i^*}(s,a;\pi_{[i-1]})&\leq \widehat{Q}_h^{i}(s,a)+\epsilon (H-h+1)\leq Q_h^{\pi_i}(s,a;\pi_{[i-1]})+ 2\epsilon (H-h+1).
\end{align*}
\end{lemma}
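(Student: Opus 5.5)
We work on the event $\mathcal{E}$ of Lemma~\ref{lem:knownP_high_prob}, which holds with probability at least $1-\delta$. On $\mathcal{E}$ we have $|R_{i,h}(s,a|\pi_{[i-1]})-\widehat{R}_{i,h}(s,a|\pi_{[i-1]})|\le\epsilon$ for every $(i,h,s,a)$. Everything else is deterministic. Fix an agent $i$; the policies $\pi_{[i-1]}$ are then fixed, so $R_{i,h}(\cdot,\cdot|\pi_{[i-1]})$ is a fixed reward. By Lemma~\ref{lem:bellman_marginal_gain}, for any single-agent policy $\pi_i'$ the functions $V_h^{\pi_i'}(\cdot;\pi_{[i-1]})$ and $Q_h^{\pi_i'}(\cdot,\cdot;\pi_{[i-1]})$ satisfy the Bellman equations of the single-agent MDP $M_i$ with kernel $P_{i,h}$ and reward $R_{i,h}$.

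This applies in particular to the marginal policy $\pi_i^*$. Agent $i$'s own transitions are $P_{i,h}$ regardless of the others, so $\pi_i^*$ can be treated as a (possibly time-varying, Markov) policy in $M_i$. If the marginal of a non-decomposable $\pi^*$ is not Markov in $s^i$ alone, one can instead compare against the optimal value of $M_i$, which dominates it. The argument is a backward induction on $h$ from $H+1$ down to $1$, with base case $V_{H+1}\equiv\widehat V_{H+1}^i\equiv0$. We prove both chains of inequalities together: first the $Q$ statements at step $h$ from the $V$ statements at step $h+1$, then the $V$ statements at step $h$ from the $Q$ statements at step $h$.

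\textbf{$Q$-step (left inequality).} By Lemma~\ref{lem:bellman_marginal_gain} and line~\ref{line:Q_update},
\[
Q_h^{\pi_i^*}(s,a)-\widehat Q_h^i(s,a)=\bigl(R_{i,h}-\widehat R_{i,h}\bigr)(s,a)+\sum_{s'}P_{i,h}(s'|s,a)\bigl(V_{h+1}^{\pi_i^*}(s')-\widehat V_{h+1}^i(s')\bigr).
\]
The first term is at most $\epsilon$ on $\mathcal{E}$. The second is at most $\epsilon(H-h)$ by the induction hypothesis, since $P_{i,h}(\cdot|s,a)$ is a probability vector. Together these give the left $Q$ inequality with constant $\epsilon(H-h+1)$.

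\textbf{$Q$-step (right inequality).} Similarly,
\[
\widehat Q_h^i(s,a)-Q_h^{\pi_i}(s,a)\le\epsilon+\sum_{s'}P_{i,h}(s'|s,a)\bigl(\widehat V^i_{h+1}(s')-V^{\pi_i}_{h+1}(s')\bigr)\le\epsilon+\epsilon(H-h).
\]
Adding $\epsilon(H-h+1)$ to both sides yields the stated right inequality with $2\epsilon(H-h+1)$.

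\textbf{$V$-step (left inequality).} Using $V_h^{\pi_i^*}(s)=\sum_a\pi^*_{i,h}(s,a)Q_h^{\pi_i^*}(s,a)$ and $\widehat V_h^i(s)=\max_a\widehat Q_h^i(s,a)$, we get $V_h^{\pi_i^*}(s)\le\max_aQ_h^{\pi_i^*}(s,a)\le\max_a\widehat Q_h^i(s,a)+\epsilon(H-h+1)$.

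\textbf{$V$-step (right inequality).} Since $\pi_{i,h}(s)=\arg\max_a\widehat Q_h^i(s,a)$ is deterministic, $\widehat V_h^i(s)=\widehat Q_h^i(s,\pi_{i,h}(s))$. The $Q$ bound then gives $\widehat V_h^i(s)\le Q_h^{\pi_i}(s,\pi_{i,h}(s))+\epsilon(H-h+1)=V_h^{\pi_i}(s)+\epsilon(H-h+1)$.

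\textbf{Main obstacle.} The delicate point is justifying that the marginal $\pi_i^*$ behaves as a legitimate policy in $M_i$, so that the Bellman equation of Lemma~\ref{lem:bellman_marginal_gain} applies with reward $R_{i,h}$. This requires agent $i$'s marginal trajectory under $\pi^*$ to be independent of the first group's trajectory under $\pi$, which holds because the two groups are sampled independently.

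If $\pi_i^*$ is history-dependent, I would first try to sidestep this by bounding $V^{\pi_i^*}$ by the optimal value $V^{\widetilde\pi_i}$ of $M_i$ from Lemma~\ref{lem:bellman_marginal_gain}. The optimal value dominates every history-dependent policy in a finite-horizon MDP, and running the same induction with $\max_a$ in place of the average over actions gives the left inequalities. The $\epsilon$-accumulation is routine.
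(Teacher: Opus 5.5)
Your proposal is correct and follows essentially the same proof as the paper. Both work on the event $\mathcal{E}$ and run a backward induction on $h$, getting the $Q$-bounds from the Bellman equation of Lemma~\ref{lem:bellman_marginal_gain} plus the $\epsilon$ reward error, and the $V$-bounds from $\sum_a\pi^*_{i,h}(s,a)Q\le\max_a Q$ and the greedy identity $\widehat V_h^i(s)=\widehat Q_h^i(s,\pi_{i,h}(s))$; starting at $h=H+1$ instead of $h=H$ is cosmetic. Your extra remark that the marginal $\pi_i^*$ may not be Markov in $s^i$ raises a point the paper passes over, and bounding by the optimal value of $M_i$ is a sound fix, at least for the $h=1$ bound that Theorem~\ref{thm:knownP} actually uses.
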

This lemma implies that the estimate of value function $\widehat{V}_h^{i}(s)$ and $\widehat{Q}_h^{i}(s,a)$ indeed lie in between the true value function of the optimal marginal policy $\pi_i^*$ and $\pi_i$, and that the output policy $\pi_i$ is approximately better than $\pi_i^*$.
 Combine the result in the lemma with (\ref{eqn:upper_bound_Gi_knownP}), we would have that 
\begin{align*}
    G_i\leq V_1^{\pi_i}(\bar{s}_1^i;\pi_{[i-1]})+2\epsilon H
\end{align*}
Since $\sum_{i=1}^K G_i \geq V_1^*(\bar{\vect{s}}_1)- V_1^\pi(\bar{\vect{s}}_1)$ from \eqref{eqn:lower_bound_G}, we have that 
\begin{align*}
    V_1^*(\bar{\vect{s}}_1)- V_1^\pi(\bar{\vect{s}}_1)&\leq \sum_{i=1}^K \{V_1^{\pi_i}(\bar{s}_1^i;\pi_{[i-1]})+2\epsilon H\}\\
    &\leq V_1^\pi(\bar{\vect{s}}_1) + 2\epsilon KH.
\end{align*}
By rearranging the inequality above, we can prove the result in the theorem.

  \end{proof}

Next, we present the omitted proof of Lemma \ref{lem:value_f_approx_knownP}, which is restated as follows.

\textbf{Lemma \ref{lem:value_f_approx_knownP}. }\textit{
 With probability at least $1-\delta$, we have that for any time step $h\in[H]$, and any agent $i\in[K]$,
    \begin{align*}
    V_h^{\pi_i^*}(s;\pi_{[i-1]})&\leq \widehat{V}_h^i(s)+\epsilon (H-h+1)\leq V_h^{\pi_i}(s;\pi_{[i-1]})+ 2\epsilon (H-h+1)\\
    Q_h^{\pi_i^*}(s,a;\pi_{[i-1]})&\leq \widehat{Q}_h^i(s,a)+\epsilon (H-h+1)\leq Q_h^{\pi_i}(s,a;\pi_{[i-1]})+ 2\epsilon (H-h+1)\\
\end{align*}
}
\begin{proof}
    

We prove the lemma by induction. 
First of all, when $h=H$, by definition, we have that 
\begin{align*}
    Q_H^{\pi_i^*}(s,a;\pi_{[i-1]})=Q_H^{\pi_i}(s,a;\pi_{[i-1]})= R_{i,H}(s,a|\pi_{[i-1]}),
\end{align*}
and
\begin{align*}
    \widehat{Q}_H^{i}(s,a)=\widehat{R}_{i,H}(s,a|\pi_{[i-1]})
\end{align*}
From Lemma \ref{lem:knownP_high_prob}, we have that with probability at least $1-\delta$, it holds that
\begin{align}
\label{eqn:Q_for_H}
    Q_H^{\pi_i^*}(s,a;\pi_{[i-1]})\leq \widehat{Q}_H^{i}(s,a)+\epsilon\leq Q_H^{\pi_i}(s,a;\pi_{[i-1]})+2\epsilon.
\end{align}
By the Bellman's equation in Lemma \ref{lem:bellman_marginal_gain}, we have that $V_H^{\pi_i^*}(s;\pi_{[i-1]}) = \sum_{a\in\mA}\pi_{i,H}^*(s,a)Q_H^{\pi_i^*}(s,a;\pi_{[i-1]})$. Since $\sum_{a\in\mA}\pi_{i,h}^*(s,a)=1$, we have
\begin{align*}
    V_H^{\pi_i^*}(s;\pi_{[i-1]}) 
    &\leq \max_{a\in\mA} Q_H^{\pi_i^*}(s,a;\pi_{[i-1]})\\
    &\leq \max_{a\in\mA}\widehat{Q}_H^{i}(s,a)+\epsilon\\
    &= \widehat{V}_H^{i}(s)+\epsilon, 
\end{align*}
where the second inequality follows from the inequality in \eqref{eqn:Q_for_H}, and the last equation follows from the definition of $\widehat{V}_H^{i}(s)$. 
By definition of the output policy $\pi$, we have that $\widehat{V}_{H}^i(s)=\widehat{Q}_H^i(s,\pi_{i,H}(s))$, and $V_H^{\pi_i}(s;\pi_{[i-1]})=Q_H^{\pi_i}(s,\pi_{i,H}(s);\pi_{[i-1]})$.  It then follows that
\begin{align*}
  \widehat{V}_H^{i}(s)&=  \widehat{Q}_H^i(s,\pi_{i,h}(s))\\
  &\leq  Q_H^{\pi_i}(s,\pi_{i,h}(s);\pi_{[i-1]})+\epsilon\\
  &=  V_H^{\pi_i}(s;\pi_{[i-1]})+\epsilon
\end{align*}
where the last inequality follows from Bellman's equation in Lemma \ref{lem:bellman_marginal_gain}.
From the result in (\ref{eqn:Q_for_H}), we have that the lemma holds for $h=H$.

Suppose the lemma holds for the case of the step $h+1$. Then for time step $h$. By the Bellman's equation in Lemma \ref{lem:bellman_marginal_gain}, we have that 
\begin{align*}
    Q_h^{\pi_i^*}(s,a;\pi_{[i-1]}) &= R_{i,h}(s,a|\pi_{[i-1]})+\sum_{s'\in\mS} P_{i,h}(s'|s,a)V_{h+1}^{\pi_i^*}(s';\pi_{[i-1]})\\
    &\leq R_{i,h}(s,a|\pi_{[i-1]})+\sum_{s'\in\mS} P_{i,h}(s'|s,a)\{\widehat{V}_{h+1}^i(s')+\epsilon(H-h)\}\\
    &\leq \widehat{R}_{i,h}(s,a|\pi_{[i-1]})+\epsilon+\sum_{s'\in\mS} P_{i,h}(s'|s,a)\{\widehat{V}_{h+1}^i(s')+\epsilon(H-h)\}\\
    &\leq \widehat{Q}_{h}^i(s,a)+\epsilon(H-h+1)\\
\end{align*}
where the first equation follows from the Bellman's equation in Lemma \ref{lem:bellman_marginal_gain}, with the policy of the $i$-th agent being $\pi_i^*$. The second inequality follows from the event $\mathcal{E}$, and the last inequality follows from the fact that $\sum_{s'\in\mS} P_{i,h}(s'|s,a) = 1$ for any $(s,a)\in\mS\times\mA$, $h\in[H]$ and $i\in[K]$. Besides, from the algorithm description in Line \ref{line:Q_update} in Algorithm \ref{alg:greedypo}, we have that $$\widehat{Q}_h^i(s,a)=\widehat{R}_{i,h}(s,a|\pi_{[i-1]})+\sum_{s'\in\mS} P_{i,h}(s'|s,a)\widehat{V}_{h+1}^i(s').$$ Since by Lemma \ref{lem:bellman_marginal_gain} 
we have that $Q_h^{\pi_i}(s,a;\pi_{[i-1]}) = R_{i,h}(s,a|\pi_{[i-1]})+\sum_{s'\in\mS} P_{i,h}(s'|s,a)V_{h+1}^{\pi_i}(s';\pi_{[i-1]})$. By the assumption that the results holds for $h+1$, we have that  
\begin{align*}
    V_{h+1}^{\pi_i^*}(s;\pi_{[i-1]})&\leq \widehat{V}_{h+1}^i(s)+\epsilon (H-h)\leq V_{h+1}^{\pi_i}(s;\pi_{[i-1]})+ 2\epsilon (H-h).
\end{align*}
Then
\begin{align*}
    \widehat{Q}_{h}^i(s,a) &=\widehat{R}_{i,h}(s,a|\pi_{[i-1]})+\sum_{s'\in\mS} P_{i,h}(s'|s,a)\widehat{V}_{h+1}^i(s')\\
    &\leq R_{i,h}(s,a|\pi_{[i-1]})+\sum_{s'\in\mS} P_{i,h}(s'|s,a)\{V_{h+1}^{\pi_i}(s';\pi_{[i-1]})+\epsilon(H-h)\}+\epsilon\\
    &\leq Q_h^{\pi_i}(s,a;\pi_{[i-1]})+\epsilon(H-h+1).
\end{align*} 
Next, by Lemma \ref{lem:bellman_marginal_gain}, we have 
\begin{align*}
    V_h^{\pi_i^*}(s;\pi_{[i-1]}) &= \sum_{a\in\mA}\pi_{i,h}^*(s,a)Q_h^{\pi_i^*}(s,a;\pi_{[i-1]})\\
    &\leq \max_{a\in\mA} Q_h^{\pi_i^*}(s,a;\pi_{[i-1]})\\
    &\leq \max_{a\in\mA}\widehat{Q}_h^i(s,a)+\epsilon(H-h+1)\\
    &= \widehat{V}_h^i(s,a)+\epsilon(H-h+1).
\end{align*}
Since 
\begin{align*}
    \widehat{V}_h^i(s)&=\widehat{Q}_h^i(s,\pi_{i,h}(s))\\
    &\leq Q_h^{\pi_i}(s,\pi_{i,h}(s);\pi_{[i-1]})+\epsilon(H-h+1)\\
    &=V_h^{\pi_i}(s;\pi_{[i-1]})+\epsilon(H-h+1)
\end{align*}
Thus, we can prove the lemma.
\end{proof}

\section{Appendix for Section \ref{sec:unknown_P}}
\label{appdx:unknown_P}
In this section, we present the omitted proof and discussion for Section \ref{sec:unknown_P}. 
In particular, we present the omitted proof of Theorem \ref{thm:UCbalg}. First of all, we define and prove the high probability events.

\subsection{High Probability Event}
Before we prove the theorem, we introduce several definitions.
First of all, let us define the policy executed in time step $h$ for the $k$-th episode for agent $i$ as $\pi_{i,h}^k$. Then we denote the full policy of agent $i$ at episode $k$ as $\pi_i^k:=\{\pi_{i,h}^k\}_{h\in[H]}$. Furthermore, we denote the empirical estimate of the transition matrix as $\widehat{P}_{i,h}^k$. Let $\widehat{R}^k_{i,h}(s,a|\pi_{[i-1]}^k)$ denote the estimated marginal reward computed at Line~\ref{line:estimateR_UCB} in episode $k$ of Algorithm \ref{alg:greedyUCB}. This estimate is obtained by sampling trajectories from policy $\pi^k$ using the empirical transition model $\widehat{P}^k$. Define $R^k_{i,h}(s,a|\pi_{[i-1]}^k)=\mathbb{E}_{\widehat{P}^k}[\Delta r_i(\mathbf{s}_h,\mathbf{a}_h)|s_h^i=s,a_h^i = a,\pi^k_{[i-1]}]$ as the expected marginal reward under $\widehat{P}^k$ and $\pi^k_{[i-1]}$. By construction, $\mathbb{E}[\widehat{R}^k_{i,h}(s,a|\pi_{[i-1]}^k)] = R^k_{i,h}(s,a|\pi_{[i-1]}^k)$, where the expectation is over the trajectory samples used to compute $\widehat{R}^k_{i,h}$.

We define the \emph{high probability event} $\mathcal{E}:=\mathcal{E}_1\cap\mathcal{E}_2$ with $\iota:=\log(6S^2ATHK/\delta)$:
\begin{align}
    %
    %
    \mathcal{E}_1 := \Bigg\{&\forall(s,a)\in\mathcal{S}\times\mathcal{A}, \forall i\in[K], \forall h\in[H] \text{ with } k\in[T]: \notag\\
    &|\widehat{R}^k_{i,h}(s,a|\pi_{[i-1]}^k)-R^k_{i,h}(s,a|\pi_{[i-1]}^k)|\leq\frac{\epsilon}{KH}\Bigg\} \label{eqn:event3}\\
    \mathcal{E}_2 := \Bigg\{&\forall(s,a,s')\in\mathcal{S}\times\mathcal{A}\times\mathcal{S}, \forall i\in[K], \forall h\in[H] \text{ with } N_{i,h}^k(s,a)\geq1: \notag\\
    &|P_{i,h}(s'|s,a)-\widehat{P}_{i,h}^k(s'|s,a)|\leq\sqrt{\frac{2P_{i,h}(s'|s,a)\iota}{N_{i,h}^k(s,a)}}+\frac{3\iota}{N_{i,h}^k(s,a)}\Bigg\} \label{eqn:event2}
\end{align}

\begin{lemma}\label{lem:high_probability}
The event $\mathcal{E}$ holds with probability $1-\frac{2\delta}{3}$.
\end{lemma}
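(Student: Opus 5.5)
I will bound the failure probability of each event separately, showing $\Pr(\mathcal{E}_1^c)\le\delta/3$ and $\Pr(\mathcal{E}_2^c)\le\delta/3$; a union bound then gives $\Pr(\mathcal{E})\ge 1-2\delta/3$.

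\textbf{Event $\mathcal{E}_1$.} Fix an episode $k$, an agent $i$, a step $h$ and a pair $(s,a)$. Condition on the history up to the start of episode $k$ and on the policies $\pi^k_{[i-1]}$ already computed in that episode. The $N$ simulated trajectories used on Line~\ref{line:estimateR_UCB} are drawn i.i.d.\ under $\widehat{P}^k$ and $\pi^k_{[i-1]}$. Each summand $\Delta f(\cdot,(s,a))$ lies in $[0,1]$ by monotonicity and the range of $f$, and has conditional mean $R^k_{i,h}(s,a|\pi^k_{[i-1]})$. With $N=\frac{K^2H^2}{2\epsilon^2}\log\frac{6KSAH}{\delta}$, Hoeffding's inequality gives a deviation of at most $\epsilon/(KH)$ with probability at least $1-\frac{\delta}{3KSAH}$, exactly as in Lemma~\ref{lem:knownP_high_prob}. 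A union bound over $(i,h,s,a)$ then yields failure probability at most $\delta/3$ per episode.

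The main obstacle is that $\mathcal{E}_1$ quantifies over all $k\in[T]$, while the prescribed $N$ carries no $\log T$. I would first try to absorb the extra $T$ factor into the union bound by using $\log(6KSAHT/\delta)\le\iota$. If the stated $N$ is taken literally, this step requires either enlarging $N$ by a $\log T$ factor, or reading $\mathcal{E}_1$ as a per-episode event. The constant is only logarithmic, so the choice does not change the regret. I would state explicitly which convention is used.

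\textbf{Event $\mathcal{E}_2$.} For fixed $(i,h,s,a,s')$, I use the standard device of a pre-drawn sequence of i.i.d.\ next-state samples from $P_{i,h}(\cdot|s,a)$, one for each visit. Then $\widehat{P}^k_{i,h}(s'|s,a)$ is the empirical mean of the first $n=N^k_{i,h}(s,a)$ Bernoulli$(p)$ variables, with $p=P_{i,h}(s'|s,a)$. Bernstein's inequality, with variance at most $p(1-p)\le p$ and range $1$, gives
\[
|\widehat{p}_n-p|\le \sqrt{2p\iota/n}+\tfrac{2\iota}{3n}\le\sqrt{2p\iota/n}+\tfrac{3\iota}{n}
\]
with probability at least $1-2e^{-\iota}$ for each fixed $n$. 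I then take a union bound over $n\in[T]$ (the count never exceeds $T$) and over all $(i,h,s,a,s')$, which are $KHS^2A$ tuples. The total failure probability is at most $2KHS^2AT e^{-\iota}=2KHS^2AT\cdot\frac{\delta}{6S^2ATHK}=\delta/3$.

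Combining the two bounds gives $\Pr(\mathcal{E}^c)\le 2\delta/3$, which proves Lemma~\ref{lem:high_probability}.
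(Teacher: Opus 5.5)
Your route is the paper's: Hoeffding plus a union bound for $\mathcal{E}_1$, Bernstein plus a union bound for $\mathcal{E}_2$, then a final union bound. The obstacle you flag is real, and you were right to flag it, because the paper's own proof does not resolve it. The paper unions only over $(i,h,s,a)$ to reach $\delta/3$, even though $\mathcal{E}_1$ ranges over all $k\in[T]$ and fresh simulated trajectories are drawn in every episode. So with $N=\frac{K^2H^2}{2\epsilon^2}\log\frac{6KSAH}{\delta}$, the lemma as literally stated is established by neither argument. It can in fact fail for large $T$: each episode uses new samples, and in nondegenerate instances each episode fails with probability bounded away from zero. Defining $N$ with $\iota$ in place of that logarithm repairs this, since each tuple $(k,i,h,s,a)$ then fails with probability at most $2e^{-\iota}\le\frac{\delta}{3KSAHT}$, and $N$ does not enter the regret bound. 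Your treatment of $\mathcal{E}_2$ (pre-drawn next-state samples, union over $n\le T$ rather than over $k$) is a cleaner version of the paper's and properly handles the random sample count. The count $2KHS^2ATe^{-\iota}=\delta/3$ is correct.

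The step that does fail as written, and that the paper's proof also glosses over, is your claim about conditioning. You assert that, conditioned on $\pi^k_{[i-1]}$, the trajectories entering $\widehat{R}^k_{i,h}$ are i.i.d.\ with conditional mean $R^k_{i,h}(s,a|\pi^k_{[i-1]})$. In Algorithm~\ref{alg:greedyUCB}, agent $j$'s trajectories $\tau_{j,1},\dots,\tau_{j,N}$ are sampled once, right after $\pi^k_j$ is computed. They are then reused in $\widehat{R}^k_{i,h}$ for every later agent $i>j$. Hence $\pi^k_{j+1},\dots,\pi^k_{i-1}$ are functions of the samples of agents $1,\dots,i-2$, and conditioning on these policies distorts the law of those samples. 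For $i\ge 3$, the target $R^k_{i,h}(s,a|\pi^k_{[i-1]})$ is therefore chosen adaptively from the same data that forms the empirical average, so a fixed-mean Hoeffding bound does not apply. There are two standard repairs. First, you can draw fresh trajectories of agents $1,\dots,i-1$ under $\pi^k_{[i-1]}$ before computing $\widehat{R}^k_{i,\cdot}$; your conditional i.i.d.\ claim then holds and your computation goes through unchanged. Second, keeping the algorithm as is, you can couple it with $N$ pre-drawn trajectories for every deterministic policy of every agent and union bound over all tuples $(\pi_2,\dots,\pi_{i-1})$. There are at most $A^{SH(K-2)}$ such tuples, so this adds $O(KSH\log A)$ to the logarithm in $N$.
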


\begin{proof}
For any $i\in[K]$, $h\in[H]$, and $l\in[N]$, we consider the following random variable obtained from the $l$-th trajectory $\{(s_{h,l}^j, a_{h,l}^j)\}_{h\in[H],j\in[K]}$ for episode $k$.
\begin{align*}
    f(\{(s_{h,l}^j, a_{h,l}^j)\}_{j\in[i-1]}\cup\{(s,a)\})-f(\{(s_{h,l}^j, a_{h,l}^j)\}_{j\in[i-1]})= \Delta f(\{(s_{h,l}^j, a_{h,l}^j)\}_{j\in[i-1]},(s,a))
\end{align*}
Then it follows that 
\begin{align*}
    \mE [f(\{(s_{h,l}^j, a_{h,l}^j)\}_{j\in[i-1]}\cup\{(s,a)\})-f(\{(s_{h,l}^j, a_{h,l}^j)\}_{j\in[i-1]})]&= \mE [\Delta f(\{(s_{h,l}^j, a_{h,l}^j)\}_{j\in[i-1]},(s,a))]\\
    &= R_{i,h}^k(s,a|\pi_{[i-1]})
\end{align*}
By Hoeffding's inequality, we have that with probability at least $1-\frac{\delta}{K|\mS||\mA|H}$, for any fixed $(s,a)\in\mS\times\mA$, and $h\in[H]$, it follows that 
\begin{align*}
    |R_{i,h}^k(s,a|\pi_{[i-1]})-\widehat{R}^k_{i,h}(s,a|\pi_{[i-1]})|\leq\frac{\epsilon}{KH}
\end{align*}
By taking the union bound over all $i\in[K]$, $h\in[H]$, $(s,a)\in\mathcal{S}\times\mathcal{A}$, we have that the event $\mathcal{E}_1$ holds with probability at least $1-\delta/3$.
    
Next, since 
\begin{align*}
\widehat{P}_{i,h}^k(\cdot|s,a) 
&=\frac{\sum_{e=1}^k\mathbb{I}(s_{h,e}^i=s,a_{h,e}^i=a, s_{h+1,e}^i=s')}{\sum_{e=1}^k\mathbb{I}(s_{h,e}^i=s,a_{h,e}^i=a)}\\
&=\frac{\sum_{e=1}^k\mathbb{I}(s_{h,e}^i=s,a_{h,e}^i=a)\cdot \mathbb{I}( s_{h+1,e}^i=s')}{\sum_{e=1}^k\mathbb{I}(s_{h,e}^i=s,a_{h,e}^i=a)}
\end{align*}
Note that conditioned on visiting $(s,a)$, the variable $\mathbb{I}( s_{h+1,e}^i=s')$ is a binary random variable with $\mE[\mathbb{I}( s_{h+1,e}^i=s')|s_{h,e}^i=s,a_{h,e}^i=a] = P_{i,h}(s'|s,a)$ and $Var [\mathbb{I}( s_{h+1,e}^i=s')|s_{h,e}^i=s,a_{h,e}^i=a] = P_{i,h}(s'|s,a)(1-P_{i,h}(s'|s,a))\leq P_{i,h}(s'|s,a)$.
By applying the standard Bernstein's inequality in
Lemma~\ref{lem:bernstein_ineq}, and taking union bound over $(s,a,s')$, $i\in[K]$, $k\in[T]$, $h\in[H]$ we can get that event $\mathcal{E}_2$ holds with probability $1-\frac{\delta}{3}$. 

Since $\mathcal{E}$ is the intersection of the above events, we can prove the lemma by taking a union bound again over all of the two events.
     
\end{proof}

\subsection{Key Lemmas}

For a given policy $\pi$, we define the value function under the estimated transition dynamics $\widehat{P}^k$ for episode $k$ as $\bar{V}^k$:
\begin{align}
\label{eqn:def_of_barV}
    \bar{V}_h^{\pi,k}(\mathbf{s})=\mathbb{E}_{\widehat{P}^k,\pi}\left[\sum_{t=h}^H \left(r(\mathbf{s}_t,\mathbf{a}_t)+\sum_{i=1}^K b(N^k_{i,t}(s_t^i,a_{t}^i))\right) \bigg|\mathbf{s}_h=\mathbf{s}\right]
\end{align}
and 
\begin{align*}
    \bar{Q}_h^{\pi,k}(\mathbf{s},\mathbf{a})=\mathbb{E}_{\widehat{P}^k,\pi}\left[\sum_{t=h}^H \left(r(\mathbf{s}_t,\mathbf{a}_t)+\sum_{i=1}^K b(N^k_{i,t}(s_t^i,a_{t}^i))\right) \bigg|\mathbf{s}_h=\mathbf{s}, \mathbf{a}_h=\mathbf{a}\right],
\end{align*}
where $b(N)=\sqrt{\frac{2SH^2\iota}{N}}+\frac{3SH\iota}{N}$ is the bonus term. In the case where the joint policy is decomposable with the policy of agent $i$ being denoted as $\pi_i$, we can define the following marginal value function under $\widehat{P}^k$ as
\begin{align*}
    \bar{V}_h^{\pi_i,k}(s;\pi_{[i-1]}^k)&=\mathbb{E}_{\widehat{P}^k}\left[\sum_{t=h}^H\Delta r_i(\mathbf{s}_t,\mathbf{a}_t)+b(N^k_{i,t}(s_t^i,a_{t}^i))\bigg|(\mathbf{s}^{[i-1]},\mathbf{a}^{[i-1]})\sim \pi_{[i-1]}^k, s^i_h = s \right]\\
    \bar{Q}_h^{\pi_i,k}(s,a;\pi_{[i-1]}^k)&=\mathbb{E}_{\widehat{P}^k}\left[\sum_{t=h}^H\Delta r_i(\mathbf{s}_t,\mathbf{a}_t)+b(N^k_{i,t}(s_t^i,a_{t}^i))\bigg|(\mathbf{s}^{[i-1]},\mathbf{a}^{[i-1]})\sim \pi_{[i-1]}^k, s^i_h = s, a_h^i=a \right]. 
\end{align*}

By telescoping, we have that 
\begin{align}
\label{eqn:value_decompose}
    \bar{V}_1^{\pi,k}(\bar{\mathbf{s}}_1)=\sum_{i=1}^K\bar{V}_1^{\pi_i,k}(\bar{s}_1^i;\pi_{[i-1]}^k).
\end{align}
Let $\widehat{V}_{i,h}^k(s)$ denote the estimated value function $\widehat{V}_{i,h}(s)$ for agent $i$ at step $h$ in episode $k$ in Algorithm \ref{alg:greedyUCB}, and let $\pi^*_i$ denote the marginal policy of agent $i$ under the global optimal policy $\pi^*$.

\begin{lemma}\label{lem:value_bounds_unknown_P}
Conditioned on the event $\mathcal{E}$ in Lemma \ref{lem:high_probability}, for all $i\in[K]$, $h\in[H]$, $s\in\mathcal{S}$, $a\in\mathcal{A}$:
$$\bar{V}_h^{\pi_i^*,k}(s;\pi_{[i-1]}^k)\leq\widehat{V}^k_{i,h}(s)\leq \bar{V}_h^{\pi_i^k,k}(s;\pi_{[i-1]}^k)+\frac{2\epsilon(H-h+1)}{KH}$$ 
    
    and $$\bar{Q}_h^{\pi_i^*,k}(s,a;\pi_{[i-1]}^k)\leq\widehat{Q}^k_{i,h}(s,a)\leq \bar{Q}_h^{\pi_i^k,k}(s,a;\pi_{[i-1]}^k)+\frac{2\epsilon(H-h+1)}{KH}.$$
\end{lemma}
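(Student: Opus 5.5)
We argue by backward induction on $h$ from $H$ down to $1$, following the proof of Lemma~\ref{lem:value_f_approx_knownP}, with two changes. The true model $P_{i,h}$ is replaced by the empirical model $\widehat{P}^k_{i,h}$. The reward $R_{i,h}$ is replaced by the augmented marginal reward $R^k_{i,h}(s,a|\pi^k_{[i-1]})+b(N^k_{i,h}(s,a))$, where $R^k_{i,h}$ is the expectation under $\widehat{P}^k$.

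The first step is a Bellman equation for $\bar{Q}^{\pi_i,k}_h(\cdot;\pi^k_{[i-1]})$ analogous to Lemma~\ref{lem:bellman_marginal_gain}. Because transitions factorize, agent $i$'s own trajectory under $\widehat{P}^k_i$ is independent of the first $i-1$ agents given their policies. Then, for both $\pi_i=\pi_i^*$ (the marginal of $\pi^*$) and $\pi_i=\pi_i^k$,
\[
\bar{Q}_h^{\pi_i,k}(s,a;\pi_{[i-1]}^k)=R^k_{i,h}(s,a|\pi^k_{[i-1]})+b(N^k_{i,h}(s,a))+\sum_{s'}\widehat{P}^k_{i,h}(s'|s,a)\bar{V}^{\pi_i,k}_{h+1}(s';\pi^k_{[i-1]}),
\]
\[
\bar{V}_h^{\pi_i,k}(s;\pi^k_{[i-1]})=\sum_a\pi_{i,h}(s,a)\bar{Q}_h^{\pi_i,k}(s,a;\pi^k_{[i-1]}).
\]
Here I would check that using the marginal $\pi^*_{i,h}$ is legitimate. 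This is the same step used in the known-$P$ proof to pass from $\pi^*$ to $\pi^*_i$ inside the expectation: the marginal reward at step $h$ depends only on agent $i$'s state-action pair and on the independent trajectory of agents $[i-1]$.

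For the induction step at $(s,a)$ with $N^k_{i,h}(s,a)>0$, the algorithm gives
\[
\widehat{Q}^k_{i,h}=\widehat{R}^k_{i,h}+\sum_{s'}\widehat{P}^k_{i,h}\widehat{V}^k_{i,h+1}+b+\tfrac{\epsilon}{KH}.
\]
On $\mathcal{E}_1$ we have $\widehat{R}^k_{i,h}+\tfrac{\epsilon}{KH}\in[R^k_{i,h},R^k_{i,h}+\tfrac{2\epsilon}{KH}]$. Since $\widehat{P}^k_{i,h}(\cdot|s,a)$ is a probability vector, the induction hypothesis at $h+1$ gives
\[
\bar{Q}^{\pi_i^*,k}_h\le\widehat{Q}^k_{i,h}\le\bar{Q}^{\pi^k_i,k}_h+\tfrac{2\epsilon}{KH}+\tfrac{2\epsilon(H-h)}{KH}.
\]
This is the $Q$ claim. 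The $V$ claim then follows as in the known-$P$ proof. For the lower bound, average the $Q$ inequality over $\pi^*_{i,h}(s,\cdot)$ and bound by $\max_a\widehat{Q}^k_{i,h}=\widehat{V}^k_{i,h}$. For the upper bound, evaluate at $a=\pi^k_{i,h}(s)$, the greedy action. The base case $h=H$ is the same computation with $\widehat{V}_{H+1}\equiv 0$. It is worth noting why the extra $\tfrac{\epsilon}{KH}$ bonus appears: it makes the lower bound exact, with no slack, which is the statement's form.

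The main obstacle is the unvisited case $N^k_{i,h}(s,a)=0$, where $\widehat{Q}^k_{i,h}=H$ and the bonus $b(0)$ is undefined. For the lower bound, I would argue that each per-step contribution $\Delta r_i\in[0,1]$ and the bonus are capped. Concretely, I would interpret $\bar{V}$ with truncated bonuses, or restrict to $H$ being at least the true augmented value; otherwise the claim needs $b$ clipped at $H$. So $\bar{Q}^{\pi^*_i,k}_h\le H$ under the convention that $b$ is truncated so the augmented value never exceeds $H$. For the upper bound in this case, I would use that a zero count makes the bonus (understood as $+\infty$, or clipped to $H$) force $\bar{Q}^{\pi^k_i,k}_h\ge H$. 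I expect the write-up to need an explicit convention here, namely $\min\{\cdot,H\}$ clipping in both $\widehat{Q}$ and $\bar{Q}$, and I would state it before running the induction.
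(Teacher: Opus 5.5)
Your backward induction is the paper's proof. It uses the same $\widehat{P}^k$-Bellman equation for the marginal values, the same use of $\mathcal{E}_1$ to place $\widehat{R}^k_{i,h}+\frac{\epsilon}{KH}$ between $R^k_{i,h}$ and $R^k_{i,h}+\frac{2\epsilon}{KH}$, the same averaging over $\pi^*_{i,h}(s,\cdot)$ for the lower bound, and the same evaluation at the greedy action for the upper bound.

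Your concern about $N^k_{i,h}(s,a)=0$ is a real gap in the paper, which only treats the visited branch and never addresses $\widehat{Q}^k_{i,h}(s,a)=H$. There the lower bound fails in general with an unclipped or infinite $b(0)$. Your $\min\{\cdot,H\}$ convention in both recursions does repair this lemma, via $\min\{x+c,H\}\le\min\{x,H\}+c$ for $c\ge 0$. However, it modifies Algorithm~\ref{alg:greedyUCB} and replaces the expectation definition of $\bar{V}$ on which \eqref{eqn:value_decompose} and Lemma~\ref{lem:approx_ratio} rely, so the fix is local to this lemma.
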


\begin{proof}
By applying Bellman's equation (Lemma~\ref{lem:bellman_marginal_gain}) with transition dynamics $\widehat{P}^k$, we have
\begin{align*}
    \bar{V}_h^{\pi_i,k}(s;\pi_{[i-1]}^k) &= \sum_{a\in\mathcal{A}}\pi_{i,h}(s,a)\bar{Q}_h^{\pi_i,k}(s,a;\pi_{[i-1]}^k)\\
    \bar{Q}_h^{\pi_i,k}(s,a;\pi_{[i-1]}^k) &= R_{i,h}^k(s,a|\pi_{[i-1]}^k)+b(N^k_{i,h}(s,a))+\sum_{s'\in\mathcal{S}}\widehat{P}_{i,h}^k(s'|s,a)\bar{V}_{h+1}^{\pi_i,k}(s';\pi_{[i-1]}^k).
\end{align*}

\textbf{Part 1: Proof by induction on $h$ (backward from $H$ to $1$).}

\emph{Base case ($h=H$):} By definition,
\begin{align*}
    \bar{Q}_H^{\pi_i^*,k}(s,a;\pi_{[i-1]}^k)=\bar{Q}_H^{\pi_i^k,k}(s,a;\pi_{[i-1]}^k)= R_{i,H}^k(s,a|\pi_{[i-1]}^k)+b(N^k_{i,H}(s,a))
\end{align*}
and
\begin{align*}
    \widehat{Q}_{i,H}^k(s,a)=\widehat{R}_{i,H}^k(s,a|\pi_{[i-1]}^k)+b(N^k_{i,H}(s,a))+\frac{\epsilon}{KH}.
\end{align*}
Conditioned on the event $\mathcal{E}_1$ in Lemma \ref{lem:high_probability}, we have
\begin{align}
\label{eqn:Q_for_H_unknown_P}
    \bar{Q}_H^{\pi_i^*,k}(s,a;\pi_{[i-1]}^k)\leq \widehat{Q}_{i,H}^k(s,a)\leq \bar{Q}_H^{\pi_i^k,k}(s,a;\pi_{[i-1]}^k)+\frac{2\epsilon}{KH}.
\end{align}

For the value function, by Bellman's equation we have $\bar{V}_H^{\pi_i^*,k}(s;\pi_{[i-1]}^k) = \sum_{a\in\mathcal{A}}\pi_{i,H}^*(s,a)\bar{Q}_H^{\pi_i^*,k}(s,a;\pi_{[i-1]}^k)$. Since $\sum_{a\in\mathcal{A}}\pi_{i,H}^*(s,a)=1$,
\begin{align*}
    \bar{V}_H^{\pi_i^*,k}(s;\pi_{[i-1]}^k) 
    &\leq \max_{a\in\mathcal{A}} \bar{Q}_H^{\pi_i^*,k}(s,a;\pi_{[i-1]}^k)\\
    &\leq \max_{a\in\mathcal{A}}\widehat{Q}_{i,H}^k(s,a) \tag{by \eqref{eqn:Q_for_H_unknown_P}}\\
    &= \widehat{V}_{i,H}^k(s). \tag{by definition}
\end{align*}

By definition of the output policy, $\widehat{V}_{i,H}^k(s)=\widehat{Q}_{i,H}^k(s,\pi_{i,H}^k(s))$ and $\bar{V}_H^{\pi_i^k,k}(s;\pi_{[i-1]}^k)=\bar{Q}_H^{\pi_i^k,k}(s,\pi_{i,H}^k(s);\pi_{[i-1]}^k)$. Thus
\begin{align*}
  \widehat{V}_{i,H}^k(s)&=  \widehat{Q}_{i,H}^k(s,\pi_{i,H}^k(s))\\
  &\leq  \bar{Q}_H^{\pi_i^k,k}(s,\pi_{i,H}^k(s);\pi_{[i-1]}^k)+\frac{2\epsilon}{KH} \tag{by \eqref{eqn:Q_for_H_unknown_P}}\\
  &=  \bar{V}_H^{\pi_i^k,k}(s;\pi_{[i-1]}^k)+\frac{2\epsilon}{KH}.
\end{align*}
This establishes the base case.

\emph{Inductive step:} Assume the result holds for step $h+1$. For step $h$, by Bellman's equation,
\begin{align*}
    \bar{Q}_h^{\pi_i^*,k}(s,a;\pi_{[i-1]}^k) &= R_{i,h}^k(s,a|\pi_{[i-1]}^k)+b(N^k_{i,h}(s,a))+\sum_{s'\in\mathcal{S}}\widehat{P}_{i,h}^k(s'|s,a)\bar{V}_{h+1}^{\pi_i^*,k}(s';\pi_{[i-1]}^k)\\
    &\leq R_{i,h}^k(s,a|\pi_{[i-1]}^k)+b(N^k_{i,h}(s,a))+\sum_{s'\in\mathcal{S}}\widehat{P}_{i,h}^k(s'|s,a)\widehat{V}_{i,h+1}^{k}(s') \\
    &\leq \widehat{R}_{i,h}^k(s,a|\pi_{[i-1]}^k)+b(N^k_{i,h}(s,a))+\frac{\epsilon}{KH}+\sum_{s'\in\mathcal{S}}\widehat{P}_{i,h}^k(s'|s,a)\widehat{V}_{i,h+1}^{k}(s') \tag{by $\mathcal{E}_1$}\\
    &= \widehat{Q}_{i,h}^{k}(s,a), \tag{by definition}
\end{align*}
where the first inequality follows from the assumption that the result holds for $h+1$. For the upper bound, by the inductive hypothesis and Bellman's equation,
\begin{align*}
   \widehat{Q}_{i,h}^k(s,a) &=\widehat{R}_{i,h}^k(s,a|\pi_{[i-1]}^k)+b(N^k_{i,h}(s,a))+\frac{\epsilon}{KH}+\sum_{s'\in\mathcal{S}}\widehat{P}_{i,h}^k(s'|s,a)\widehat{V}_{i,h+1}^{k}(s')\\
    &\leq R_{i,h}^k(s,a|\pi_{[i-1]}^k)+b(N^k_{i,h}(s,a))+\frac{2\epsilon}{KH}\\
    &\quad+\sum_{s'\in\mathcal{S}}\widehat{P}_{i,h}^k(s'|s,a)\left\{\bar{V}_{h+1}^{\pi_i^k,k}(s';\pi_{[i-1]}^k)+\frac{2\epsilon(H-h)}{KH}\right\} \tag{by IH and $\mathcal{E}_1$}\\
    &= \bar{Q}_h^{\pi_i^k,k}(s,a;\pi_{[i-1]}^k)+\frac{2\epsilon}{KH}+\frac{2\epsilon(H-h)}{KH}\\
    &= \bar{Q}_h^{\pi_i^k,k}(s,a;\pi_{[i-1]}^k)+\frac{2\epsilon(H-h+1)}{KH}.
\end{align*}

For the value functions, by Bellman's equation,
\begin{align*}
    \bar{V}_h^{\pi_i^*,k}(s;\pi_{[i-1]}^k) &= \sum_{a\in\mathcal{A}}\pi_{i,h}^*(s,a)\bar{Q}_h^{\pi_i^*,k}(s,a;\pi_{[i-1]}^k)\\
    &\leq \max_{a\in\mathcal{A}} \bar{Q}_h^{\pi_i^*,k}(s,a;\pi_{[i-1]}^k)\\
    &\leq \max_{a\in\mathcal{A}}\widehat{Q}_{i,h}^k(s,a) = \widehat{V}_{i,h}^k(s).
\end{align*}

Since $\widehat{V}_{i,h}^k(s)=\widehat{Q}_{i,h}^k(s,\pi_{i,h}^k(s))$,
\begin{align*}
    \widehat{V}_{i,h}^k(s)&=\widehat{Q}_{i,h}^k(s,\pi_{i,h}^k(s))\\
    &\leq \bar{Q}_h^{\pi_i^k,k}(s,\pi_{i,h}^k(s);\pi_{[i-1]}^k)+\frac{2\epsilon(H-h+1)}{KH}\\
    &=\bar{V}_h^{\pi_i^k,k}(s;\pi_{[i-1]}^k)+\frac{2\epsilon(H-h+1)}{KH}.
\end{align*}

This completes the proof.
\end{proof}

Next, we prove the following lemma, which guarantees that the output policy achieves a $1/2$-approximation ratio with respect to the empirical transition dynamics approximately. 

\begin{lemma}
\label{lem:approx_ratio}
Throughout the execution of Algorithm \ref{alg:greedyUCB}, we have that for any $k\in[T]$, 
  $\bar{V}_1^{\pi^k,k}(\bar{\vect{s}}_1)\geq\frac{\bar{V}_1^{\pi^*,k}(\vect{s}_1)}{2}-\epsilon.$
\end{lemma}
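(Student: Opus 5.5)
The plan is to mirror the two-group argument from the proof of Theorem~\ref{thm:knownP}, now carried out entirely inside the auxiliary model with transitions $\widehat{P}^k$ and augmented per-step reward
\[
\bar r_h(\vect{s},\vect{a})=r(\vect{s},\vect{a})+\sum_{i=1}^K b(N^k_{i,h}(s^i,a^i)).
\]
This augmented reward is the set function $\bar f(X)=f(X)+\sum_{x\in X}\beta(x)$, with $\beta\ge 0$ a modular term. Note that the bonus depends on the agent index through $N^k_{i,h}$, so I treat ground elements as labeled triples $(i,s,a)$. Since $f$ lifts to a monotone submodular function on labeled elements, $\bar f$ is a monotone submodular function plus a nonnegative modular function. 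It is therefore still monotone submodular, and the two-group argument applies to it.

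\textbf{Lower bound on $\sum_i G_i^k$.} Run a greedy group following $\pi^k$ and an optimal group following $\pi^*$, both under $\widehat{P}^k$. Define
\[
G_i^k=\mE\Big[\sum_{h=1}^H \Delta\bar f\bigl(\{x_h^j\}_{j\in[K]}\cup\{\tilde x_h^m\}_{m\in[i-1]},\,\tilde x_h^i\bigr)\Big],
\]
where $x_h^j$ and $\tilde x_h^m$ are the labeled state-action elements of the two groups at step $h$. Telescoping over $i$ and then using monotonicity of $\bar f$ gives
\[
\sum_i G_i^k\ge \bar V_1^{\pi^*,k}(\bar{\vect{s}}_1)-\bar V_1^{\pi^k,k}(\bar{\vect{s}}_1).
\]
Here the two groups' labeled elements are distinct even when they share the same $(s,a)$. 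I would handle this by simply treating the greedy and optimal copies as distinct ground elements, exactly as the proof of Theorem~\ref{thm:knownP} implicitly does.

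\textbf{Upper bound on each $G_i^k$.} By submodularity, shrinking the base set to $\{x_h^j\}_{j\in[i-1]}$ bounds $G_i^k$ by the expected marginal gain of the $i$-th optimal agent over the first $i-1$ greedy agents. That gain is the $f$-marginal plus that agent's own bonus $b(N^k_{i,h})$. Since the bonus for agent $i$ depends only on its own $(s,a)$, the resulting expectation depends on $\pi^*$ only through the marginal policy $\pi_i^*$. It therefore equals $\bar V_1^{\pi_i^*,k}(\bar s_1^i;\pi^k_{[i-1]})$. The first inequality of Lemma~\ref{lem:value_bounds_unknown_P} bounds this by $\widehat V^k_{i,1}(\bar s_1^i)$, and the second inequality then gives
\[
G_i^k\le \bar V_1^{\pi_i^k,k}(\bar s_1^i;\pi^k_{[i-1]})+\frac{2\epsilon}{K}.
\]

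\textbf{Combining.} Summing over $i$ and applying the decomposition \eqref{eqn:value_decompose} yields
\[
\bar V_1^{\pi^*,k}-\bar V_1^{\pi^k,k}\le \bar V_1^{\pi^k,k}+2\epsilon.
\]
Rearranging gives $\bar V_1^{\pi^k,k}\ge \bar V_1^{\pi^*,k}/2-\epsilon$, which is the claim.

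\textbf{Main obstacle.} The delicate step is the identification in the upper bound on $G_i^k$. I must check that the marginal value for a non-decomposable $\pi^*$ is determined by agent $i$'s marginal policy alone, and that it fits the Bellman recursion used in Lemma~\ref{lem:value_bounds_unknown_P}. In general $\pi^*_i$ depends on the joint state and is not Markov in $s^i$. The fix I would try is to replace it by the induced Markov policy $\pi_{i,h}^*(a\mid s)=\Pr_{\pi^*,\widehat P^k}(a_h^i=a\mid s_h^i=s)$. Because agent $i$'s transitions are independent of the other agents, this policy produces the same per-step marginal state-action distribution for agent $i$. Because the two groups are independent, the conditional distribution of the first $i-1$ greedy agents is unaffected. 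The value therefore coincides, and Lemma~\ref{lem:value_bounds_unknown_P} applies to this Markov policy.
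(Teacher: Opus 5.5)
Your proposal is correct and follows the paper's proof essentially step for step. It uses the same two-group construction with $G_i^k$ defined through the bonus-augmented set function, telescoping plus monotonicity for the lower bound, submodularity plus Lemma~\ref{lem:value_bounds_unknown_P} for $G_i^k\le \bar V_1^{\pi_i^k,k}(\bar s_1^i;\pi^k_{[i-1]})+2\epsilon/K$, and the decomposition \eqref{eqn:value_decompose} to finish. Your two refinements are valid and repair points the paper glosses over: labeling ground elements by agent and group so the bonuses $b(N^k_{i,h})$ are well defined, and using the occupancy-matching Markov policy $\Pr_{\pi^*,\widehat P^k}(a_h^i=a\mid s_h^i=s)$, since the paper's $\pi^*_{i,h}(s,a)=\sum_{\mathbf{s},\mathbf{a}:\,s^i=s,a^i=a}\pi^*_h(\mathbf{s},\mathbf{a})$ is not even a normalized policy.
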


\begin{proof}
      Suppose we have two groups of $K$ agents, the first group of $K$ agents follows the policy $\pi^k$ executed in  by Algorithm \ref{alg:greedyUCB} at episode $k$ and transition dynamics $\hat{P}^k$. We denote the trajectory by $\{(s_h^i,a_h^i)\}_{h\in[H],{i\in[K]}}$. The second group of $K$ agents follows the optimal policy $\pi^*$ and transition dynamics $\hat{P}^k$, and we denote the trajectory of the second group of agents by $\{(\tilde{s}_h^i,\tilde{a}_h^i)\}_{h\in[H],i\in[K]}$. In addition, we define the function $\widehat{f}^k:2^{\mS\times\mA}\rightarrow\mathbb{R}$ as $\widehat{f}^k(S)=f(S)+\sum_{(s,a)\in S} b(N_{i,h}^k(s,a))$. Since  $\sum_{(s,a)\in S} b(N_{i,h}^k(s,a))$ is a linear function, and b(N) is positive for each $N\geq0$, $\widehat{f}^k(S)$ is also monotone and submodular.
      For notation simplicity, we define 
      \begin{align}
      \label{eqn:define_G}
          G_i^k = \mE_{\hat{P}^k}[\sum_{h=1}^H\Delta \widehat{f}^k(\{(s^j_h,a^j_h)\}_{j\in[K]}\cup \{(\tilde{s}^m_h,\tilde{a}^m_h)\}_{m\in[i-1]}, (\tilde{s}^i_h,\tilde{a}^i_h))|(\vect{s}, \vect{a})\sim \pi^k, (\tilde{\vect{s}},\tilde{\vect{a}})\sim\pi^*],
      \end{align}
      where $(\vect{s}, \vect{a})\sim \pi^k, (\tilde{\vect{s}},\tilde{\vect{a}})\sim\pi^*$ means that $\{(s_h^i,a_h^i)\}_{h\in[H],{i\in[K]}}$ is sampled according to policy $\pi^k$, and $\{(\tilde{s}_h^i,\tilde{a}_h^i)\}_{h\in[H],i\in[K]}$ is sampled based on optimal policy $\pi^*$. Then by definition, we have that
      \begin{align*}
         G_i^k = \mE_{\hat{P}^k}[\sum_{h=1}^H \widehat{f}^k\{(s^j_h,a^j_h)\}_{j\in[K]}\cup \{(\tilde{s}^m_h,\tilde{a}^m_h)\}_{m\in[i]})-\widehat{f}^k\{(s^j_h,a^j_h)\}_{j\in[K]}\cup \{(\tilde{s}^m_h,\tilde{a}^m_h)\}_{m\in[i-1]})|(\vect{s}, \vect{a})\sim \pi^k, (\tilde{\vect{s}},\tilde{\vect{a}})\sim\pi^*]. 
      \end{align*}
      It then follows that 
      \begin{align}
      \label{eqn:lower_bound_G_unknownP}
          \sum_{i=1}^K G_i^k &= \mE_{\hat{P}^k}[\sum_{h=1}^H \widehat{f}^k\{(s^j_h,a^j_h)\}_{j\in[K]}\cup \{(\tilde{s}^m_h,\tilde{a}^m_h)\}_{m\in[K]})-\widehat{f}^k\{(s^j_h,a^j_h)\}_{j\in[K]})|(\vect{s}, \vect{a})\sim \pi^k, (\tilde{\vect{s}},\tilde{\vect{a}})\sim\pi^*]\notag\\
          &\geq \mE_{\hat{P}^k}[\sum_{h=1}^H \widehat{f}^k \{(\tilde{s}^m_h,\tilde{a}^m_h)\}_{m\in[K]})-\widehat{f}^k\{(s^j_h,a^j_h)\}_{j\in[K]})|(\vect{s}, \vect{a})\sim \pi^k, (\tilde{\vect{s}},\tilde{\vect{a}})\sim\pi^*]\notag\\
          &= \mE_{\hat{P}^k}[\sum_{h=1}^H \widehat{f}^k \{(\tilde{s}^m_h,\tilde{a}^m_h)\}_{m\in[K]})| (\tilde{\vect{s}},\tilde{\vect{a}})\sim\pi^*]-\mE_{\hat{P}^k}[\sum_{h=1}^H \widehat{f}^k\{(s^j_h,a^j_h)\}_{j\in[K]})|(\vect{s}, \vect{a})\sim \pi^k]\notag\\
          &= \bar{V}_1^{\pi^*,k}(\bar{\vect{s}}_1)- \bar{V}_1^{\pi^k,k}(\bar{\vect{s}}_1),
      \end{align}

      where the inequality follows from the fact that $f$ is monotone, and the last inequality follows from definition of $\bar{V}_1^{\pi,k}(\bar{\vect{s}}_1)$ as presented in \eqref{eqn:def_of_barV}.
      Next, we bound $G_i^k$ for each $i$. Since $\{(s^l_h,a^l_h)\}_{l\in[i-1]}\subseteq \{(s^j_h,a^j_h)\}_{j\in[K]}\cup \{(\tilde{s}^m_h,\tilde{a}^m_h)\}_{m\in[i]}$ , by submodularity, we have that 
      \begin{align}
      \label{eqn:upper_Gi_with_submodularity_unknownP}
          G_i^k\leq \mE[\sum_{h=1}^H\Delta \widehat{f}^k(\{(s^l_h,a^l_h)\}_{l\in[i-1]}, (\tilde{s}^i_h,\tilde{a}^i_h))|(\vect{s}, \vect{a})\sim \pi^k, (\tilde{\vect{s}},\tilde{\vect{a}})\sim\pi^*].
      \end{align}

     Let us denote the policy $\pi^*_{i,h}$ to be the marginal policy of the $i$-th agent at time step $h$ while the global policy is $\pi^*$. For notational simplicity, here we also use $\pi_i^*=\{\pi_{i,h}^*\}_{h\in[H]}$ to denote the marginal policy of agent $i$ throughout the entire episode. Therefore, 
\begin{align*}
 &\mE[\sum_{h=1}^H\Delta \widehat{f}^k(\{(s^l_h,a^l_h)\}_{l\in[i-1]}, (\tilde{s}^i_h,\tilde{a}^i_h))|(\vect{s}, \vect{a})\sim \pi^k, (\tilde{\vect{s}},\tilde{\vect{a}})\sim\pi^*]\\
 =&\mE[\sum_{h=1}^H\Delta \widehat{f}^k(\{(s^l_h,a^l_h)\}_{l\in[i-1]}, (\tilde{s}^i_h,\tilde{a}^i_h))|(\vect{s}, \vect{a})\sim \pi^k, (\tilde{s}^i,\tilde{a}^i)\sim\pi^*_{i}]\\
 =& \bar{V}_1^{\pi_i^*,k}(\bar{s}_1;\pi_{[i-1]}) .  
\end{align*}
Combine the above inequality with \eqref{eqn:upper_Gi_with_submodularity_unknownP}, it then follows that 
\begin{align}
\label{eqn:upper_bound_Gi}
    G_i^k\leq \bar{V}_1^{\pi_i^*,k}(\bar{s}_1;\pi_{[i-1]}).
\end{align}

Next,
 from the result in Lemma \ref{lem:value_bounds_unknown_P}, we would have that 
\begin{align*}
    G_i^k\leq \bar{V}_1^{\pi_i^k,k}(\bar{s}_1^i;\pi_{[i-1]})+2\frac{\epsilon}{K}
\end{align*}
Since $\sum_{i=1}^K G_i^k \geq\bar{V}_1^{\pi^*,k}(\bar{\vect{s}}_1)-\bar{V}_1^{\pi^k,k}(\bar{\vect{s}}_1)$ from \eqref{eqn:lower_bound_G_unknownP}, we have that 
\begin{align*}
   \bar{V}_1^{\pi^*,k}(\bar{\vect{s}}_1)-\bar{V}_1^{\pi^k,k}(\bar{\vect{s}}_1)&\leq \sum_{i=1}^K \{\bar{V}_1^{\pi_i^k,k}(\bar{s}_1^i;\pi_{[i-1]})+\frac{2\epsilon}{K}\}\\
    &=\bar{V}_1^{\pi^k,k}(\bar{\vect{s}}_1) + 2\epsilon .
\end{align*}
where the last equality follows from the value decomposition in \eqref{eqn:value_decompose}. By rearranging the inequality above, we can prove the result in the theorem.
  \end{proof}

Next, we prove the following decomposition lemma.
\begin{lemma}
\label{lem:PhatV-PV_tight}
    For any joint state-action pair $(\vect{s},\vect{a})\in\mathcal{S}^K\times\mathcal{A}^K$ and any function $G:\mathcal{S}^K\rightarrow \mathbb{R}$ satisfying $0\leq G(\vect{s})\leq H$ for all $\vect{s}\in\mathcal{S}^K$, we have
    \begin{align*}
        \sum_{\mathbf{s}'\in\mathcal{S}^K}[\widehat{\mathbf{P}}_{h}^{k}(\vect{s}'|{\vect{s}},{\vect{a}})-\mathbf{P}_{h}(\vect{s}'|{\vect{s}},{\vect{a}})]G(\mathbf{s}')\leq \frac{1}{H}\sum_{\mathbf{s}'\in\mathcal{S}^K}\mathbf{P}_{h}(\vect{s}'|{\vect{s}},{\vect{a}})G(\mathbf{s}')+\sum_{i=1}^K\frac{10SH^2K\iota}{N_{i,h}^k(s^i,a^i)}.
    \end{align*}
\end{lemma}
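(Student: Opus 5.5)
We prove the bound by telescoping over agents, replacing one agent's true kernel by its empirical kernel at a time. This uses the product structure $\mathbf{P}_h=\prod_{i}P_{i,h}$ and $\widehat{\mathbf{P}}^k_h=\prod_i\widehat{P}^k_{i,h}$, and we work on the event $\mathcal{E}_2$ of Lemma~\ref{lem:high_probability}.

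\textbf{Hybrid kernels and telescoping.} For $i=0,\dots,K$ define the hybrid kernel $\mathbf{Q}_i(\vect{s}'):=\prod_{j\le i}\widehat{P}^k_{j,h}(s'^j|s^j,a^j)\prod_{j>i}P_{j,h}(s'^j|s^j,a^j)$, so that $\mathbf{Q}_0=\mathbf{P}_h$ and $\mathbf{Q}_K=\widehat{\mathbf{P}}^k_h$. Set $Y_i:=\sum_{\vect{s}'}\mathbf{Q}_i(\vect{s}')G(\vect{s}')\ge 0$. The left-hand side equals $Y_K-Y_0=\sum_{i}(Y_i-Y_{i-1})$. Each increment isolates a single agent:
\[
Y_i-Y_{i-1}=\sum_{s'^i}\bigl(\widehat{P}^k_{i,h}-P_{i,h}\bigr)(s'^i|s^i,a^i)\,g_i(s'^i),
\]
where
\[
g_i(s'^i)=\sum_{\vect{s}'^{-i}}\prod_{j<i}\widehat{P}^k_{j,h}\prod_{j>i}P_{j,h}\,G(\vect{s}').
\]
Since $0\le G\le H$, we have $0\le g_i\le H$. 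Moreover, $\sum_{s'^i}P_{i,h}(s'^i|\cdot)g_i(s'^i)=Y_{i-1}$.

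\textbf{Per-agent Bernstein step.} On $\mathcal{E}_2$, write $N_i:=N^k_{i,h}(s^i,a^i)$. Then
\[
Y_i-Y_{i-1}\le \sum_{s'}\sqrt{\tfrac{2P_{i,h}(s')\iota}{N_i}}\,g_i(s')+\tfrac{3SH\iota}{N_i}.
\]
Use $\sqrt{xy}\le \frac{x}{2\lambda}+\frac{\lambda y}{2}$ with $x=P_{i,h}(s')g_i(s')$ and $y=2\iota g_i(s')/N_i\le 2H\iota/N_i$, and choose $\lambda=2HK$. This gives the coarse recursion
\[
Y_i-Y_{i-1}\le \frac{Y_{i-1}}{2HK}+\frac{2SH^2K\iota+3SH\iota}{N_i}\le \frac{Y_{i-1}}{2HK}+\frac{5SH^2K\iota}{N_i}.
\]

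\textbf{Unrolling the recursion (the main obstacle).} The delicate point is that the multiplicative term involves $Y_{i-1}$, a hybrid quantity, rather than $Y_0=\mathbf{P}_hG$. To handle it, set $c:=1+\frac{1}{2HK}$ and unroll:
\[
Y_{i}\le c^{i}Y_0+\sum_{j\le i}c^{i-j}\frac{5SH^2K\iota}{N_j}.
\]
Because $c^K\le e^{1/(2H)}\le 2$ and $c^K-1\le \frac{1}{2H}e^{1/(2H)}\le \frac1H$ (using $e^x-1\le xe^x$), we obtain
\[
Y_K-Y_0\le (c^K-1)Y_0+2\sum_j \frac{5SH^2K\iota}{N_j}\le \frac{1}{H}Y_0+\sum_j\frac{10SH^2K\iota}{N_j}.
\]
This matches the statement.

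Two remaining concerns need care. First, the bonus constant: tracking the constants above shows that $\lambda=2HK$ keeps the exponential factor bounded, so the constant $10$ should fit. Second, agents with $N_i=0$ are not covered by $\mathcal{E}_2$; there the right-hand side is infinite or vacuous under the convention $1/0=\infty$, so the bound holds trivially.
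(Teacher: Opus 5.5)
Your proof is correct, and it uses the paper's skeleton: the same hybrid quantities $Y_i$ (the paper's $Z_i(s')$ is your $g_i(s')$), the same telescoping $Y_K-Y_0=\sum_i(Y_i-Y_{i-1})$, and the same Bernstein-plus-AM--GM step on $\mathcal{E}_2$. That step gives the coarse recursion $Y_i-Y_{i-1}\le \frac{Y_{i-1}}{2HK}+O(SH^2K\iota/N_i)$.

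You differ from the paper in how you convert the hybrid $Y_{i-1}$ back to $Y_0$. The paper sums the coarse increments, which leaves a term $\frac{1}{2HK}\sum_i Y_{i-1}$. It then runs a second, differently calibrated AM--GM to get the looser recursion $Y_i\le(1+\frac1K)Y_{i-1}+\frac{4SHK\iota}{N_i}$. Unrolling that gives $\sum_i Y_{i-1}\le K(e-1)Y_0+\sum_m \frac{4eSHK^2\iota}{N_m}$, and hence $\frac{e-1}{2H}Y_0$ plus the $10SH^2K\iota/N_i$ terms. You instead unroll the coarse recursion itself with factor $c=1+\frac{1}{2HK}$. Since $c^K\le e^{1/(2H)}$ already gives both $c^K-1\le\frac1H$ and $c^K\le 2$, the paper's Step 3 becomes unnecessary. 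The final constants, $\frac1H$ and $10$, are the same. So your route shows the ``coarse-then-refined'' two-recursion structure in the technical overview is not needed for this lemma.

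Two small remarks. First, with $\lambda=2HK$ your AM--GM actually yields $\frac{Y_{i-1}}{4HK}+\frac{2SH^2K\iota}{N_i}$. Your stated $\frac{Y_{i-1}}{2HK}$ is weaker than this, so the inequality as written still holds. Second, your hedge about the constant is unnecessary: your own computation gives exactly $2\cdot 5=10$. Your handling of $N_i=0$, which the paper leaves implicit, is fine.
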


\begin{proof}
The proof exploits the product structure of multi-agent transition dynamics through a telescoping argument over agents, combined with carefully calibrated applications of concentration inequalities.

\paragraph{Step 1: Telescoping decomposition.}
By the independence of agent transitions, the joint transition probability factors as:
\begin{align*}
    \widehat{\mathbf{P}}_{h}^{k}(\mathbf{s}'|\mathbf{s},\mathbf{a}) = \prod_{i=1}^K\widehat{P}_{i,h}^{k}(s'^i|s^i,a^i), \quad \mathbf{P}_{h}(\mathbf{s}'|\mathbf{s},\mathbf{a}) = \prod_{i=1}^K P_{i,h}(s'^i|s^i,a^i).
\end{align*}
Our target quantity becomes:
\begin{align}
\label{eq:target_expanded}
    &\sum_{\mathbf{s}'\in\mathcal{S}^K}\widehat{\mathbf{P}}_{h}^{k}(\mathbf{s}'|\mathbf{s},\mathbf{a})G(\vect{s}')-\sum_{\mathbf{s}'\in\mathcal{S}^K}{\mathbf{P}}_{h}(\mathbf{s}'|\mathbf{s},\mathbf{a})G(\vect{s}')\notag\\
    &= \sum_{\mathbf{s}'\in\mathcal{S}^K}\left[\prod_{i=1}^K\widehat{P}_{i,h}^{k}(s'^i|s^i,a^i)-\prod_{i=1}^K{P}_{i,h}(s'^i|s^i,a^i)\right]G(\vect{s}').
\end{align}

To decompose this difference of products, we introduce a sequence of intermediate terms that gradually transitions from using all true probabilities to using all empirical probabilities. For $i\in\{0,1,\ldots,K\}$, define:
\begin{align}
\label{eq:Y_def}
    Y_i := \sum_{\mathbf{s}'\in\mathcal{S}^K}\left[\prod_{l=1}^{i}\widehat{P}_{l,h}^{k}(s'^l|s^l,a^l)\right]\left[\prod_{m=i+1}^K{P}_{m,h}(s'^m|s^m,a^m)\right]G(\mathbf{s}'),
\end{align}
where we adopt the convention that empty products equal 1. This construction allows us to bridge the gap between empirical and true expectations by changing one agent's transition at a time. 
By telescoping, we can write the total difference as a sum of incremental changes:

\begin{align}
\label{eqn:bound_PhatV-PV_tight}
    \sum_{\mathbf{s}'\in\mathcal{S}^K}\widehat{\mathbf{P}}_{h}^{k}(\mathbf{s}'|\mathbf{s},\mathbf{a})G(\vect{s}')-\sum_{\mathbf{s}'\in\mathcal{S}^K}{\mathbf{P}}_{h}(\mathbf{s}'|\mathbf{s},\mathbf{a})G(\vect{s}')
    = Y_K - Y_0 = \sum_{i=1}^K (Y_i-Y_{i-1}).
\end{align}

Each difference $Y_i - Y_{i-1}$ captures the effect of replacing agent $i$'s true transition probability with its empirical estimate, while keeping all other agents' transitions fixed. This isolates individual agent errors and enables us to apply concentration inequalities separately for each agent.
\paragraph{Step 2: Bounding individual agent contributions.}

By definition \eqref{eq:Y_def}, the terms $Y_i$ and $Y_{i-1}$ differ only in the transition probability for agent $i$:

\begin{align*}
    Y_i-Y_{i-1} &= \sum_{\mathbf{s}'\in\mathcal{S}^K}\prod_{l=1}^{i-1}\widehat{P}_{l,h}^{k}(s'^l|s^l,a^l)\prod_{m=i+1}^K{P}_{m,h}(s'^m|s^m,a^m)G(\vect{s}')(\widehat{P}_{i,h}^{k}(s'^i|s^i,a^i)-{P}_{i,h}(s'^i|s^i,a^i))\\
    &\leq \sum_{\mathbf{s}'\in\mathcal{S}^K}\prod_{l=1}^{i-1}\widehat{P}_{l,h}^{k}(s'^l|s^l,a^l)\prod_{m=i+1}^K{P}_{m,h}(s'^m|s^m,a^m)G(\vect{s}')|\widehat{P}_{i,h}^{k}(s'^i|s^i,a^i)-{P}_{i,h}(s'^i|s^i,a^i)|\\
    &= \sum_{s'\in\mathcal{S}}|\widehat{P}_{i,h}^{k}(s'|s^i,a^i)-{P}_{i,h}(s'|s^i,a^i)|\sum_{\mathbf{s}': s'^i = s'}\prod_{l=1}^{i-1}\widehat{P}_{l,h}^{k}(s'^l|s^l,a^l)\prod_{m=i+1}^K{P}_{m,h}(s'^m|s^m,a^m)G(\vect{s}').
\end{align*}

In addition, let us denote $Z_i(s')=\sum_{\mathbf{s}': s'^i = s'}\prod_{l=1}^{i-1}\widehat{P}_{l,h}^{k}(s'^l|s^l,a^l)\prod_{m=i+1}^K{P}_{m,h}(s'^m|s^m,a^m)G(\vect{s}')$. Then
\begin{align*}
    Y_i-Y_{i-1} &\leq
    \sum_{s'\in\mathcal{S}}|\widehat{P}_{i,h}^{k}(s'|s^i,a^i)-{P}_{i,h}(s'|s^i,a^i)|Z_i(s').
\end{align*}
By event $\mathcal{E}_2$ in Lemma \ref{lem:high_probability}, we have that 
\begin{align}
\label{eq:l1_bound_refined}
\sum_{s'\in\mathcal{S}}|\widehat{P}_{i,h}^{k}(s'|s^i,a^i)-{P}_{i,h}(s'|s^i,a^i)|Z_i(s')&\leq \sum_{s'\in\mathcal{S}}\left(\sqrt{\frac{2P_{i,h}(s'|s^i,a^i)\iota}{N_{i,h}^k(s^i,a^i)}}+\frac{3\iota}{N_{i,h}^k(s^i,a^i)}\right)Z_i(s')\notag\\
&\leq \sum_{s'\in\mathcal{S}}\left(\frac{P_{i,h}(s'|s^i,a^i)}{2HK}+\frac{HK\iota}{N_{i,h}^k(s^i,a^i)}+\frac{3\iota}{N_{i,h}^k(s^i,a^i)}\right)Z_i(s')\notag\\
&\leq\frac{\sum_{s'\in\mathcal{S}}P_{i,h}(s'|s^i,a^i)Z_i(s')}{2HK}+\frac{4HK\iota}{N_{i,h}^k(s^i,a^i)}\sum_{s'\in\mathcal{S}}Z_i(s'),
\end{align}

where the second inequality follows from $\sqrt{ab}\leq\frac{a+b}{2}$ with $a=\frac{P_{i,h}(s'|s^i,a^i)}{2HK}$ and $b=\frac{2HK\iota}{N_{i,h}^k(s^i,a^i)}$.

Since $G(\vect{s}')\leq H$ by assumption, we can bound the $Z_i(s')$ by factoring out $H$:
\begin{align*}
    Z_i(s')
    \leq H\sum_{\mathbf{s}': s'^i = s'}\prod_{l=1}^{i-1}\widehat{P}_{l,h}^{k}(s'^l|s^l,a^l)\prod_{m=i+1}^K{P}_{m,h}(s'^m|s^m,a^m)=H,
\end{align*}
By definition, we have that
\begin{align}
    \sum_{s'\in\mathcal{S}}P_{i,h}(s'|s^i,a^i)Z_i(s')
    &= \sum_{\mathbf{s}'\in\mathcal{S}^K}\left[\prod_{l=1}^{i-1}\widehat{P}_{l,h}^{k}(s'^l|s^l,a^l)\right]\left[\sum_{s'^i\in\mathcal{S}} P_{i,h}(s'^i|s^i,a^i)\right]\left[\prod_{m=i+1}^K{P}_{m,h}(s'^m|s^m,a^m)\right]G(\vect{s}')\notag\\
    &= \sum_{\mathbf{s}'\in\mathcal{S}^K}\left[\prod_{l=1}^{i-1}\widehat{P}_{l,h}^{k}(s'^l|s^l,a^l)\right]\left[\prod_{m=i}^K{P}_{m,h}(s'^m|s^m,a^m)\right]G(\vect{s}') = Y_{i-1},
\end{align}

Combining with \eqref{eq:l1_bound_refined}, we can get
\begin{align}
\label{eq:Yi_recursive_coarse}
    Y_i-Y_{i-1} &\leq\frac{Y_{i-1}}{2HK}+\frac{4HK\iota}{N_{i,h}^k(s^i,a^i)}\sum_{s'\in\mathcal{S}}Z_i(s')\notag\\ &\leq\frac{Y_{i-1}}{2HK}+\frac{4H^2SK\iota}{N_{i,h}^k(s^i,a^i)}.
\end{align}

From \eqref{eqn:bound_PhatV-PV_tight} and \eqref{eq:Yi_recursive_coarse}, we get
\begin{align}
\label{eq:sum_first_bound}
    \sum_{\mathbf{s}'}\widehat{\mathbf{P}}_{h}^{k}(\mathbf{s}'|\mathbf{s},\mathbf{a})G(\vect{s}')-\sum_{\mathbf{s}'}{\mathbf{P}}_{h}(\mathbf{s}'|\mathbf{s},\mathbf{a})G(\vect{s}')
    &\leq\frac{1}{2HK}\sum_{i=1}^K Y_{i-1}+\sum_{i=1}^K\frac{4SH^2K\iota}{N_{i,h}^k(s^i,a^i)}.
\end{align}

To bound $\sum_{i=1}^K Y_{i-1}$, we require an upper bound on each $Y_i$.

\paragraph{Step 3: Refined recursive bound on $Y_i$.}
Next, we bound $Y_{i}$ recursively. Using a similar argument with $a=\frac{P_{i,h}(s'|s^i,a^i)}{K}$ and $b=\frac{K\iota}{N_{i,h}^k(s^i,a^i)}$, we obtain
\begin{align*}
\sum_{s'\in\mathcal{S}}|\widehat{P}_{i,h}^{k}(s'|s^i,a^i)-{P}_{i,h}(s'|s^i,a^i)|
&\leq\frac{\sum_{s'\in\mathcal{S}}P_{i,h}(s'|s^i,a^i)}{K}+\frac{4K\iota}{N_{i,h}^k(s^i,a^i)}.
\end{align*}
Similarly as in Step 2, we can show
\begin{align*}
    Y_i-Y_{i-1}&\leq
    \sum_{s'\in\mathcal{S}}|\widehat{P}_{i,h}^{k}(s'|s^i,a^i)-{P}_{i,h}(s'|s^i,a^i)|Z_i(s')\\
    &\leq\frac{\sum_{s'\in\mathcal{S}}P_{i,h}(s'|s^i,a^i)Z_i(s')}{K}+\frac{4HK\iota}{N_{i,h}^k(s^i,a^i)}\sum_{s'\in\mathcal{S}}Z_i(s')\\
    &\leq\frac{1}{K}Y_{i-1}+\frac{4HSK\iota}{N_{i,h}^k(s^i,a^i)},
\end{align*}
which gives
\begin{align*}
    Y_i\leq(1+\frac{1}{K})Y_{i-1}+\frac{4HSK\iota}{N_{i,h}^k(s^i,a^i)}.
\end{align*}

By recursion, we obtain
\begin{align*}
    Y_i\leq (1+\frac{1}{K})^i Y_0+\sum_{m=1}^i \frac{4SHK\iota}{N_{m,h}^k(s^m,a^m)}(1+\frac{1}{K})^{i-m}. 
\end{align*}

Therefore, 
\begin{align}
\label{eq:sum_Yi_bound}
    \sum_{i=1}^K Y_{i-1}
    &\leq \sum_{i=1}^K (1+\frac{1}{K})^{i-1} Y_0+\sum_{i=2}^K\sum_{m=1}^{i-1} \frac{4SHK\iota}{N_{m,h}^k(s^m,a^m)}(1+\frac{1}{K})^{i-1-m}\notag\\
    &\leq\frac{(1+\frac{1}{K})^K-1}{\frac{1}{K}}Y_0+\sum_{m=1}^{K-1}\sum_{i=m+1}^{K} \frac{4SHK\iota}{N_{m,h}^k(s^m,a^m)}(1+\frac{1}{K})^{i-1-m}\notag\\
    &\leq K[(1+\frac{1}{K})^K-1]Y_0+\sum_{m=1}^{K-1} \frac{4SHK\iota}{N_{m,h}^k(s^m,a^m)}\sum_{j=0}^{K-m-1}(1+\frac{1}{K})^{j}\notag\\
    &\leq K(e-1)Y_0+\sum_{m=1}^{K-1} \frac{4SHK^2\iota}{N_{m,h}^k(s^m,a^m)}e,
\end{align}
where we used $(1+\frac{1}{K})^K\leq e$ and $\sum_{j=0}^{K}(1+\frac{1}{K})^j\leq K\Big((1+\frac{1}{K})^K-1\Big)$.
\paragraph{Step 4: Final Bound.}
Substituting \eqref{eq:sum_Yi_bound} into \eqref{eq:sum_first_bound}:
\begin{align*}
    &\sum_{\mathbf{s}'}\widehat{\mathbf{P}}_{h}^{k}(\mathbf{s}'|\mathbf{s},\mathbf{a})G(\vect{s}')-\sum_{\mathbf{s}'}{\mathbf{P}}_{h}(\mathbf{s}'|\mathbf{s},\mathbf{a})G(\vect{s}')\\
    &\leq \frac{e-1}{2H}Y_0 + 2SKe\iota\sum_{m=1}^{K-1}\frac{1}{N_{m,h}^k(s^m,a^m)}+\sum_{i=1}^K\frac{4SH^2K\iota}{N_{i,h}^k(s^i,a^i)}.
\end{align*}

Since $e < 3$, we have $\frac{e-1}{2H} < \frac{1}{H}$ and $2SKe\iota < 6SK\iota$. Extending the first sum to include $i=K$ and combining:
\begin{align*}
    &\sum_{\mathbf{s}'}\widehat{\mathbf{P}}_{h}^{k}(\mathbf{s}'|\mathbf{s},\mathbf{a})G(\vect{s}')-\sum_{\mathbf{s}'}{\mathbf{P}}_{h}(\mathbf{s}'|\mathbf{s},\mathbf{a})G(\vect{s}')\\
    &\leq \frac{1}{H}Y_0 + \sum_{i=1}^{K}\frac{6SK\iota + 4SH^2K\iota}{N_{i,h}^k(s^i,a^i)}\\
    &\leq \frac{1}{H}Y_0 + \sum_{i=1}^{K}\frac{10SH^2K\iota}{N_{i,h}^k(s^i,a^i)}.
\end{align*}

Thus we complete the proof.
\end{proof}

  \subsection{Optimistic Estimate}
 Let us recall the definition of $\bar{V}_h^{\pi,k}(\mathbf{s})$, which is \begin{align*}
    \bar{V}_h^{\pi,k}(\mathbf{s})=\mathbb{E}_{\widehat{P}^k,\pi}\left[\sum_{t=h}^H \left(r(\mathbf{s}_t,\mathbf{a}_t)+\sum_{i=1}^K b(N^k_{i,t}(s_t^i,a_{t}^i))\right) \bigg|\mathbf{s}_h=\mathbf{s}\right].
\end{align*} 
In this section, we prove that $\bar{V}_h^{\pi,k}(\mathbf{s})$ is indeed an optimistic estimate of the true value function $V_h^\pi$ for any policy $\pi$. First of all, we prove the following lemma. 

\begin{lemma}
    \label{lem:bound_P_hatV-PV}
    Conditioned on event $\mathcal{E}$ (Lemma~\ref{lem:high_probability}), for any $\mathbf{s}\in\mathcal{S}^K$, $\mathbf{a}\in \mathcal{A}^K$, $h\in[H]$, $k\in[T]$, and any function $V:\mathcal{S}^K\to\mathbb{R}$ with $V(\mathbf{s})\in[0,H]$,
    \begin{align*}
    \left|\sum_{\mathbf{s}'}\widehat{\mathbf{P}}_{h}^{k}(\mathbf{s}'|\mathbf{s},\mathbf{a})V(\mathbf{s}')-\sum_{\mathbf{s}'}{\mathbf{P}}_{h}(\mathbf{s}'|\mathbf{s},\mathbf{a})V(\mathbf{s}')\right|\leq\sum_{i=1}^K b(N_{i,h}^k(s^i,a^i)),
    \end{align*}
    where $b(N) = H\sqrt{\frac{2S\iota}{N}}+\frac{3SH\iota}{N}$ and $\iota=\log(6S^2ATHK/\delta)$.
\end{lemma}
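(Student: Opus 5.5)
We reuse the hybrid telescoping from the proof of Lemma~\ref{lem:PhatV-PV_tight}. For $i\in\{0,\dots,K\}$ let $Y_i$ be defined as in \eqref{eq:Y_def} with $G$ replaced by $V$: the first $i$ agents use $\widehat{P}^k_{l,h}$ and the remaining agents use $P_{m,h}$. Then the quantity inside the absolute value equals $Y_K-Y_0=\sum_{i=1}^K (Y_i-Y_{i-1})$. By the triangle inequality it suffices to show $|Y_i-Y_{i-1}|\le b(N^k_{i,h}(s^i,a^i))$ for each $i$.

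Fix $i$. As in Step 2 of that proof, $Y_i$ and $Y_{i-1}$ differ only in agent $i$'s kernel. We get
\begin{align*}
Y_i-Y_{i-1}=\sum_{s'\in\mathcal{S}}\bigl(\widehat{P}^k_{i,h}(s'|s^i,a^i)-P_{i,h}(s'|s^i,a^i)\bigr)Z_i(s'),
\end{align*}
where $Z_i(s')$ averages $V$ over the other agents' next states, drawn from the product of the remaining empirical and true kernels. Both are probability distributions, so $0\le Z_i(s')\le H$ for every $s'$. Hence
\begin{align*}
|Y_i-Y_{i-1}|\le H\sum_{s'}\bigl|\widehat{P}^k_{i,h}(s'|s^i,a^i)-P_{i,h}(s'|s^i,a^i)\bigr|.
\end{align*}
This bound only requires $N^k_{i,h}(s^i,a^i)\ge 1$. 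If $N^k_{i,h}=0$ we take $b=\infty$ by convention, which matches the algorithm's optimistic initialization and makes the claim trivial.

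Next we apply event $\mathcal{E}_2$ entrywise:
\begin{align*}
\sum_{s'}\bigl|\widehat{P}^k_{i,h}-P_{i,h}\bigr|\le \sum_{s'}\sqrt{\frac{2P_{i,h}(s'|s^i,a^i)\iota}{N}}+\frac{3S\iota}{N}.
\end{align*}
By Cauchy--Schwarz, $\sum_{s'}\sqrt{P_{i,h}(s'|\cdot)}\le\sqrt{S\sum_{s'}P_{i,h}(s'|\cdot)}=\sqrt S$. Therefore the $\ell_1$ deviation is at most $\sqrt{2S\iota/N}+3S\iota/N$. Multiplying by $H$ gives exactly $b(N)=H\sqrt{2S\iota/N}+3SH\iota/N$. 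Summing over $i$ yields the lemma.

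The only delicate point is the uniform bound $0\le Z_i\le H$, which uses that both $\widehat{P}^k_{l,h}$ and $P_{m,h}$ are probability distributions. Because the bound never compares $Z_i$ to any earlier $Y$, no multiplicative error accumulates across agents. In particular, no $(1+1/K)^K$ factor is needed here, unlike in Lemma~\ref{lem:PhatV-PV_tight}. The price is a $\sqrt S$ factor instead of a variance-adaptive term, and this is what the bonus $b$ is calibrated to absorb.
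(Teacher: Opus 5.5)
Your proposal is correct and matches the paper's proof essentially step for step: the same hybrid telescoping over agents (the paper's $X_i$ and $W_i$ are your $Y_i$ and $Z_i$), the uniform bound $0\le Z_i\le H$, and then $\mathcal{E}_2$ with Cauchy--Schwarz to bound the $\ell_1$ deviation by $\sqrt{2S\iota/N}+3S\iota/N$. Your treatment of unvisited pairs via $b(0)=\infty$ is a small addition that the paper leaves implicit. One correction to your remark: the bound $Z_i\in[0,H]$ needs $\widehat{P}^k_{l,h}(\cdot|s^l,a^l)$ to be a distribution for every $l<i$, not only $N^k_{i,h}\ge 1$; that convention already covers this, since any zero count makes the right-hand side infinite.
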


\begin{proof}
The proof of the lemma is similar to that of Lemma \ref{lem:PhatV-PV_tight}. By independence of agent transitions, $\mathbf{P}_{h}(\mathbf{s}'|\mathbf{s},\mathbf{a}) = \prod_{i=1}^K P_{i,h}(s'^i|s^i,a^i)$ and similarly for $\widehat{\mathbf{P}}_{h}^{k}$. Define
\begin{align*}
    X_i := \sum_{\mathbf{s}'}\prod_{l=1}^i\widehat{P}_{l,h}^{k}(s'^l|s^l,a^l)\prod_{m=i+1}^K P_{m,h}(s'^m|s^m,a^m)V(\mathbf{s}'),\quad i\in\{0,\ldots,K\},
\end{align*}
where empty products equal 1. Note $X_0$ uses all true transitions and $X_K$ uses all empirical transitions. By telescoping,
\begin{align}
\label{eq:telescoping}
    \left|\sum_{\mathbf{s}'}\widehat{\mathbf{P}}_{h}^{k}V(\mathbf{s}')-\sum_{\mathbf{s}'}{\mathbf{P}}_{h}V(\mathbf{s}')\right|
    = |X_K - X_0| \leq \sum_{i=1}^K |X_i-X_{i-1}|.
\end{align}

For each $i\in[K]$, $X_i$ and $X_{i-1}$ differ only in agent $i$'s transition. Rearranging,
\begin{align*}
    X_i - X_{i-1} = \sum_{s'^i}\left(\widehat{P}_{i,h}^{k}(s'^i|s^i,a^i)-P_{i,h}(s'^i|s^i,a^i)\right)\cdot W_i(s'^i),
\end{align*}
where $W_i(s'^i) := \sum_{\mathbf{s}':\,s'^i\text{ fixed}}\prod_{l\neq i}\mathbb{P}_{l,h}(s'^l|s^l,a^l)V(\mathbf{s}')\in[0,H]$ and $\mathbb{P}_{l,h}$ denotes $\widehat{P}$ for $l<i$ and $P$ for $l>i$. Since $|W_i(s'^i)|\leq H$ (probabilities sum to 1),
\begin{align}
\label{eq:Xi_bound}
    |X_i-X_{i-1}| \leq H \sum_{s'\in\mathcal{S}}|\widehat{P}_{i,h}^{k}(s'|s^i,a^i)-P_{i,h}(s'|s^i,a^i)|.
\end{align}

By event $\mathcal{E}_2$ (Lemma~\ref{lem:high_probability}),
\begin{align*}
    |\widehat{P}_{i,h}^{k}(s'|s^i,a^i)-P_{i,h}(s'|s^i,a^i)|\leq\sqrt{\frac{2P_{i,h}(s'|s^i,a^i)\iota}{N_{i,h}^k(s^i,a^i)}}+\frac{3\iota}{N_{i,h}^k(s^i,a^i)}.
\end{align*}
Summing over $s'\in\mathcal{S}$ and applying Cauchy-Schwarz inequality,
\begin{align*}
\sum_{s'}|\widehat{P}_{i,h}^{k}(s'|s^i,a^i)-P_{i,h}(s'|s^i,a^i)|
&\leq \sqrt{\frac{2\iota}{N_{i,h}^k(s^i,a^i)}}\underbrace{\sum_{s'}\sqrt{P_{i,h}(s'|s^i,a^i)}}_{\leq\sqrt{S}}+\frac{3S\iota}{N_{i,h}^k(s^i,a^i)}\\
&\leq\sqrt{\frac{2S\iota}{N_{i,h}^k(s^i,a^i)}}+\frac{3S\iota}{N_{i,h}^k(s^i,a^i)}.
\end{align*}
Combining with \eqref{eq:Xi_bound} gives $|X_i-X_{i-1}|\leq b(N_{i,h}^k(s^i,a^i))$. The result follows from \eqref{eq:telescoping}.
\end{proof}

Next, we prove the following lemma.

 \begin{lemma}
 \label{lem:optimistic_estimate}
     Conditioned on the event $\mathcal{E}$ in Lemma \ref{lem:high_probability}, we have that throughout the Algorithm \ref{alg:greedyUCB}, it holds that for any $\vect{s}\in\mS$, $\vect{a}\in \mA^K$, and any policy $\pi$, it holds that
     \begin{align*}
         \bar{V}_h^{\pi,k}(\vect{s})\geq V_h^{\pi}(\vect{s})
     \end{align*}
     and
     \begin{align*}
         \bar{Q}_h^{\pi,k}(\vect{s},\vect{a})\geq Q_h^{\pi}(\vect{s},\vect{a})
     \end{align*}
 \end{lemma}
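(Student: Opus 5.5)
We argue by backward induction on $h$, from $h=H+1$ down to $h=1$, proving both inequalities simultaneously for an arbitrary fixed policy $\pi$ (stochastic and possibly non-decomposable) and a fixed episode $k$. The base case is $h=H+1$, where $\bar{V}_{H+1}^{\pi,k}\equiv V_{H+1}^{\pi}\equiv 0$. For the joint-state value functions we use the Bellman equations of Lemma~\ref{lemma:bellman}: under the true model, and, identically, under the auxiliary model with transitions $\widehat{\mathbf{P}}^k_h=\prod_i\widehat{P}^k_{i,h}$ and augmented reward $r(\vect{s},\vect{a})+\sum_{i=1}^K b(N^k_{i,h}(s^i,a^i))$. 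These give
\begin{align*}
\bar{Q}_h^{\pi,k}(\vect{s},\vect{a}) &= r(\vect{s},\vect{a})+\sum_{i=1}^K b(N^k_{i,h}(s^i,a^i))+\sum_{\vect{s}'}\widehat{\mathbf{P}}^k_h(\vect{s}'|\vect{s},\vect{a})\bar{V}_{h+1}^{\pi,k}(\vect{s}'),\\
Q_h^{\pi}(\vect{s},\vect{a}) &= r(\vect{s},\vect{a})+\sum_{\vect{s}'}\mathbf{P}_h(\vect{s}'|\vect{s},\vect{a})V_{h+1}^{\pi}(\vect{s}').
\end{align*}

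For the inductive step, assume $\bar{V}_{h+1}^{\pi,k}\ge V_{h+1}^{\pi}$ pointwise. Take the difference $\bar{Q}_h^{\pi,k}-Q_h^{\pi}$ and add and subtract $\sum_{\vect{s}'}\widehat{\mathbf{P}}^k_h V^\pi_{h+1}(\vect{s}')$. This yields
\begin{align*}
\bar{Q}_h^{\pi,k}-Q_h^{\pi} = \sum_i b(N^k_{i,h}(s^i,a^i)) + \sum_{\vect{s}'}\widehat{\mathbf{P}}^k_h(\bar{V}^{\pi,k}_{h+1}-V^\pi_{h+1})(\vect{s}') + \sum_{\vect{s}'}(\widehat{\mathbf{P}}^k_h-\mathbf{P}_h)V^\pi_{h+1}(\vect{s}').
\end{align*}
The middle term is nonnegative by the induction hypothesis, since $\widehat{\mathbf{P}}^k_h$ is a probability distribution. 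The last term is handled by Lemma~\ref{lem:bound_P_hatV-PV} applied to $V=V^\pi_{h+1}$, which takes values in $[0,H]$ because $r\in[0,1]$ and there are at most $H$ remaining steps. That lemma shows the term is at least $-\sum_i b(N^k_{i,h}(s^i,a^i))$, so it cancels the bonus. Hence $\bar{Q}_h^{\pi,k}\ge Q_h^\pi$. Averaging both sides over $\vect{a}\sim\pi_h(\vect{s},\cdot)$, using the first Bellman identity in each model, gives $\bar{V}_h^{\pi,k}\ge V_h^\pi$ and closes the induction.

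The main obstacle is a technicality: pairs with $N^k_{i,h}(s^i,a^i)=0$. There $\widehat{P}^k_{i,h}$ and $b$ are undefined, and Lemma~\ref{lem:bound_P_hatV-PV} (through event $\mathcal{E}_2$) only covers visited pairs. We would adopt the convention $b(0)=H$, or more generally any value making the unvisited case trivially optimistic, with $\widehat{P}^k_{i,h}(\cdot|s,a)$ an arbitrary distribution. This mirrors the algorithm's initialization $\widehat{Q}=H$. Then:
\begin{enumerate}
\item In the telescoping proof of Lemma~\ref{lem:bound_P_hatV-PV}, the term for such an agent $i$ is bounded by $H\cdot\|\widehat{P}^k_{i,h}-P_{i,h}\|_1/2\le H=b(0)$, since $W_i\in[0,H]$ and the difference of two distributions has total mass zero.
\item The lemma therefore still holds with this convention, and the induction goes through verbatim.
\end{enumerate}
All of this is conditioned on $\mathcal{E}$, which Lemma~\ref{lem:high_probability} guarantees with probability $1-2\delta/3$.
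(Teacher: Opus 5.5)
Your proof is correct and follows the paper's argument essentially step for step. Both use backward induction on the Bellman equations of the true MDP and of the bonus-augmented empirical MDP, invoke the induction hypothesis together with the fact that $\widehat{\mathbf{P}}^k_h$ is a distribution, and apply Lemma~\ref{lem:bound_P_hatV-PV} to $V^{\pi}_{h+1}\in[0,H]$ so that the bonus absorbs $(\widehat{\mathbf{P}}^k_h-\mathbf{P}_h)V^{\pi}_{h+1}$, before averaging over $\pi_h$.

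The differences are minor. Your base case at $h=H+1$ instead of $h=H$ is cosmetic. Your convention $b(0)=H$ for unvisited pairs validly patches a detail the paper leaves implicit, since $\bigl|\sum_{s'}(\widehat{P}-P)(s')W(s')\bigr|\le H$ for any $W\in[0,H]$.
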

\begin{proof}
First of all, consider the modified  Markov Decision Process $\widehat{M}$ under the transition dynamics  $\widehat{\vect{P}}^{k}$ with reward being $r(\vect{s},\vect{a})+\sum_{i=1}^K b(N_{i,h}^k(s^i,a^i))$, where $b(\cdot) \geq 0$ is the bonus function. By definition, $\bar{V}^{\pi,k}$ and $\bar{Q}^{\pi,k}$ are the value functions for policy $\pi$ in this modified MDP. By apply the Bellman's equation for $\widehat{M}$, we can get for any $\vect{s}=(s^1,s^2,...,s^K)\in\mS^K$ and $\vect{a}=(a^1,a^2,...,a^K)\in\mA^K$,
\begin{align}
\label{eqn:bellman_for_full_policy_estimate}
    \bar{V}_h^{\pi,k}(\vect{s})&=\sum_{\vect{a}\in\mA^K}\pi(\vect{s},\vect{a})\bar{Q}_h^{\pi,k}(\vect{s},\vect{a})\notag\\
    \bar{Q}_h^{\pi,k}(\vect{s},\vect{a})&=r(\vect{s},\vect{a})+\sum_{i=1}^K b(N_{i,h}^k(s^i,a^i))+\sum_{\vect{s'}\in\mS^K}\widehat{\vect{P}}_{h}^{k}(\vect{s}'|\vect{s},\vect{a})\bar{V}_{h+1}^{\pi,k}(\vect{s}')
\end{align}
where $\widehat{\vect{P}}_{h}^{k}(\vect{s}'|\vect{s},\vect{a})=\prod_{i=1}^K \widehat{P}_{i,h}^k(s'^i|s^i,a^i)$ is the joint transition probability from $(\vect{s},\vect{a})$ to $\vect{s}'$.

Similarly, for the true value functions:
\begin{align}
\label{eqn:bellman_for_full_policy_true}
    V_h^{\pi}(\vect{s})&=\sum_{\vect{a}\in\mA^K}\pi(\vect{s},\vect{a})Q_h^{\pi}(\vect{s},\vect{a})\notag\\
    Q_h^{\pi}(\vect{s},\vect{a})&=r(\vect{s},\vect{a})+\sum_{\vect{s'}\in\mS^K}\vect{P}_{h}(\vect{s}'|\vect{s},\vect{a})V_{h+1}^{\pi}(\vect{s}')
\end{align}
where $\vect{P}_{h}(\vect{s}'|\vect{s},\vect{a})=\prod_{i=1}^K P_{i,h}(s'^i|s^i,a^i)$.

    Next, we prove the result by induction. First of all, when $h=H$, we have that  \begin{align*}
\bar{Q}_H^{\pi,k}(\mathbf{s},\mathbf{a}) &= r(\mathbf{s},\mathbf{a}) + \sum_{i=1}^K b(N_{i,H}^k(s^i,a^i)) \\
&\geq r(\mathbf{s},\mathbf{a}) = Q_H^{\pi}(\mathbf{s},\mathbf{a}).
\end{align*} 
    Since $\bar{V}_h^{\pi,k}(\vect{s})=\sum_{\vect{a}\in\mA^K}\pi(\vect{s},\vect{a})\bar{Q}_h^{\pi,k}(\vect{s},\vect{a})$ 
    and $V_h^{\pi}(\vect{s})=\sum_{\vect{a}\in\mA^K}\pi(\vect{s},\vect{a})Q_h^{\pi}(\vect{s},\vect{a})$, we have that the results hold for $h=H$.
    
    Next, suppose results hold for $h+1$. 
    From the Bellman's equation in \eqref{eqn:bellman_for_full_policy_estimate}, we have that
    \begin{align*}
        \bar{Q}_h^{\pi,k}(\vect{s},\vect{a})&=r(\vect{s},\vect{a})+\sum_{\vect{s'}\in\mS^K}\widehat{\vect{P}}_{h}^{k}(\vect{s}'|\vect{s},\vect{a})\bar{V}_{h+1}^{\pi,k}(\vect{s}')+\sum_{i=1}^K b(N_{i,h}^k(s^i,a^i))\\
        &\geq r(\vect{s},\vect{a})+\sum_{\vect{s'}\in\mS^K}\widehat{\vect{P}}_{h}^{k}(\vect{s}'|\vect{s},\vect{a})V_{h+1}^{\pi}(\vect{s}')+\sum_{i=1}^K b(N_{i,h}^k(s^i,a^i)).
    \end{align*}
    
From Lemma \ref{lem:bound_P_hatV-PV}, we can get that
\begin{align*}
        \bar{Q}_h^{\pi,k}(\vect{s},\vect{a})
        &\geq r(\vect{s},\vect{a})+\sum_{\vect{s'}\in\mS^K}\vect{P}(\vect{s}'|\vect{s},\vect{a})V_{h+1}^{\pi}(\vect{s}')\\
        &\geq Q_h^{\pi}(\vect{s},\vect{a})
    \end{align*}
    Since $\bar{V}_h^{\pi,k}(\vect{s})=\sum_{\vect{a}\in\mA^K}\pi(\vect{s},\vect{a})\bar{Q}_h^{\pi,k}(\vect{s},\vect{a})\geq \sum_{\vect{a}\in\mA^K}\pi(\vect{s},\vect{a})Q_h^{\pi}(\vect{s},\vect{a})=V_h^{\pi}(\vect{s})$, the result holds for the case of $h$. Thus, we can conclude the proof.
\end{proof}

\subsection{Proof of Main Theorem}

We now establish our main regret bound. The proof proceeds by relating the regret to the gap between optimistic value estimates and true values, decomposing this gap via a Bellman-style recursion, and bounding the resulting terms using concentration inequalities.

\subsubsection{Relating Regret to Value Estimates}

By definition,
\begin{align*}
    \mathcal{R}_{T,\frac{1}{2}} =\sum_{k=1}^T\left[\frac{1}{2}V^{\pi^*}_1(\bar{\vect{s}}_1)-V^{\pi^k}_1(\bar{\vect{s}}_1)\right].
\end{align*}
From Lemma \ref{lem:optimistic_estimate}, we have $\bar{V}^{\pi^*,k}_1(\bar{\vect{s}}_1) \geq V^{\pi^*}_1(\bar{\vect{s}}_1)$ for all $k$, which gives
\begin{align}
\label{eq:regret_optimistic}
    \mathcal{R}_{T,\frac{1}{2}} \leq\sum_{k=1}^T\left[\frac{1}{2}\bar{V}^{\pi^*,k}_1(\bar{\vect{s}}_1)-V^{\pi^k}_1(\bar{\vect{s}}_1)\right].
\end{align}

From Lemma \ref{lem:approx_ratio}, we have $\bar{V}_1^{\pi^k,k}(\bar{\vect{s}}_1)\geq\frac{\bar{V}_1^{\pi^*,k}(\bar{\vect{s}}_1)}{2}-\epsilon$, which rearranges to $\frac{1}{2}\bar{V}_1^{\pi^*,k}(\bar{\vect{s}}_1) \leq \bar{V}_1^{\pi^k,k}(\bar{\vect{s}}_1) + \epsilon$. Substituting into \eqref{eq:regret_optimistic}:
\begin{align}
\label{eq:regret_decomposed}
    \mathcal{R}_{T,\frac{1}{2}} &\leq\sum_{k=1}^T\left[\bar{V}^{\pi^k,k}_1(\bar{\vect{s}}_1)-V^{\pi^k}_1(\bar{\vect{s}}_1)+\epsilon\right]\notag\\
    &=\sum_{k=1}^T\left[\bar{V}^{\pi^k,k}_1(\bar{\vect{s}}_1)-V^{\pi^k}_1(\bar{\vect{s}}_1)\right]+T\epsilon.
\end{align}
The remainder of the proof focuses on bounding the first term.

\subsubsection{Recursive Decomposition of the Value Gap}

For each step $h$ in episode $k$, since policy $\pi^k$ is deterministic and selects ${\vect{a}}_h^k$ at state ${\vect{s}}_h^k$, we have
\begin{align}
\label{eq:value_to_q}
    \bar{V}^{\pi^k,k}_h({\vect{s}}_h^k)-V^{\pi^k}_h({\vect{s}}_h^k)=\bar{Q}^{\pi^k,k}_h({\vect{s}}_h^k,{\vect{a}}_h^k)-Q^{\pi^k}_h({\vect{s}}_h^k,{\vect{a}}_h^k).
\end{align}

Applying the Bellman equations \eqref{eqn:bellman_for_full_policy_estimate} and \eqref{eqn:bellman_for_full_policy_true}, we have
\begin{align*}
    \bar{Q}^{\pi^k,k}_h(\vect{s},\vect{a}) &= r(\vect{s},\vect{a})+\sum_{i=1}^K b(N_{i,h}^k(s^i,a^i))+\sum_{\mathbf{s}'}\widehat{\mathbf{P}}_{h}^{k}(\vect{s}'|\vect{s},\vect{a})\bar{V}^{\pi^k,k}_{h+1}(\mathbf{s}'),\\
    Q^{\pi^k}_h(\vect{s},\vect{a}) &= r(\vect{s},\vect{a})+\sum_{\mathbf{s}'}\mathbf{P}_{h}(\vect{s}'|\vect{s},\vect{a})V_{h+1}^{\pi^k}(\mathbf{s}'),
\end{align*}
where $b(N) = H\sqrt{\frac{2S\iota}{N}}+\frac{3HS\iota}{N}$. Taking the difference at $({\vect{s}}_h^k,{\vect{a}}_h^k)$, we can get
\begin{align}
\label{eq:q_diff_initial}
    &\bar{Q}^{\pi^k,k}_h({\vect{s}}_h^k,{\vect{a}}_h^k)-Q^{\pi^k}_h({\vect{s}}_h^k,{\vect{a}}_h^k)\notag\\
    &=\sum_{\mathbf{s}'}\widehat{\mathbf{P}}_{h}^{k}(\vect{s}'|{\vect{s}}_h^k,{\vect{a}}_h^k)\bar{V}^{\pi^k,k}_{h+1}(\mathbf{s}')-\sum_{\mathbf{s}'}\mathbf{P}_{h}(\vect{s}'|{\vect{s}}_h^k,{\vect{a}}_h^k)V_{h+1}^{\pi^k}(\mathbf{s}')+\sum_{i=1}^K b(N_{i,h}^k(s^{i,k}_h,a^{i,k}_h))\notag\\
    &=\sum_{\mathbf{s}'}[\widehat{\mathbf{P}}_{h}^{k}-\mathbf{P}_{h}](\vect{s}'|{\vect{s}}_h^k,{\vect{a}}_h^k)\bar{V}^{\pi^k,k}_{h+1}(\mathbf{s}')+\sum_{\mathbf{s}'}\mathbf{P}_{h}(\vect{s}'|{\vect{s}}_h^k,{\vect{a}}_h^k)[\bar{V}^{\pi^k,k}_{h+1}(\mathbf{s}')-V_{h+1}^{\pi^k}(\mathbf{s}')]\notag\\
    &\quad+\sum_{i=1}^K b(N_{i,h}^k(s^{i,k}_h,a^{i,k}_h)).
\end{align}

We further refine the first term by adding and subtracting $\sum_{\mathbf{s}'}[\widehat{\mathbf{P}}_{h}^{k}-\mathbf{P}_{h}](\vect{s}'|{\vect{s}}_h^k,{\vect{a}}_h^k)V_{h+1}^{\pi^k}(\mathbf{s}')$:
\begin{align}
\label{eq:first_term_split}
    &\sum_{\mathbf{s}'}[\widehat{\mathbf{P}}_{h}^{k}-\mathbf{P}_{h}](\vect{s}'|{\vect{s}}_h^k,{\vect{a}}_h^k)\bar{V}^{\pi^k,k}_{h+1}(\mathbf{s}')\notag\\
    &= \sum_{\mathbf{s}'}[\widehat{\mathbf{P}}_{h}^{k}-\mathbf{P}_{h}](\vect{s}'|{\vect{s}}_h^k,{\vect{a}}_h^k)V_{h+1}^{\pi^k}(\mathbf{s}')+\sum_{\mathbf{s}'}[\widehat{\mathbf{P}}_{h}^{k}-\mathbf{P}_{h}](\vect{s}'|{\vect{s}}_h^k,{\vect{a}}_h^k)[\bar{V}^{\pi^k,k}_{h+1}(\mathbf{s}')-V_{h+1}^{\pi^k}(\mathbf{s}')].
\end{align}

\subsubsection{Bounding the Decomposed Terms}

Since $V_{h+1}^{\pi^k}(\mathbf{s}')\in[0,H]$ for all $\mathbf{s}'$, by Lemma \ref{lem:bound_P_hatV-PV}:
\begin{align}
\label{eq:term_1_bound}
    \left|\sum_{\mathbf{s}'}[\widehat{\mathbf{P}}_{h}^{k}-\mathbf{P}_{h}](\vect{s}'|{\vect{s}}_h^k,{\vect{a}}_h^k)V_{h+1}^{\pi^k}(\mathbf{s}')\right| \leq \sum_{i=1}^K b(N_{i,h}^k(s^{i,k}_h,a^{i,k}_h)).
\end{align}

From Lemma \ref{lem:optimistic_estimate}, we have $\bar{V}^{\pi^k,k}_{h+1}(\mathbf{s}')-V_{h+1}^{\pi^k}(\mathbf{s}')\geq 0$ for all $\vect{s}'$. Applying Lemma \ref{lem:PhatV-PV_tight}:
\begin{align}
\label{eq:term_2_bound}
    &\sum_{\mathbf{s}'}[\widehat{\mathbf{P}}_{h}^{k}-\mathbf{P}_{h}](\vect{s}'|{\vect{s}}_h^k,{\vect{a}}_h^k)[\bar{V}^{\pi^k,k}_{h+1}(\mathbf{s}')-V_{h+1}^{\pi^k}(\mathbf{s}')]\notag\\
    &\quad\leq\frac{1}{H}\sum_{\mathbf{s}'}\mathbf{P}_{h}(\vect{s}'|{\vect{s}}_h^k,{\vect{a}}_h^k)[\bar{V}^{\pi^k,k}_{h+1}(\mathbf{s}')-V_{h+1}^{\pi^k}(\mathbf{s}')] + \sum_{i=1}^K\frac{10SH^2K\iota}{N_{i,h}^k(s^{i,k}_h,a^{i,k}_h)}.
\end{align}

Combining \eqref{eq:term_1_bound}, \eqref{eq:term_2_bound}, \eqref{eq:q_diff_initial}, and \eqref{eq:first_term_split}:
\begin{align}
\label{eq:recursive_ineq}
    \bar{V}^{\pi^k,k}_h({\vect{s}}_h^k)-V^{\pi^k}_h({\vect{s}}_h^k)
    &\leq\left(1+\frac{1}{H}\right)\sum_{\mathbf{s}'}\mathbf{P}_{h}(\vect{s}'|{\vect{s}}_h^k,{\vect{a}}_h^k)[\bar{V}^{\pi^k,k}_{h+1}(\mathbf{s}')-V_{h+1}^{\pi^k}(\mathbf{s}')]\notag\\
    &\quad+2\sum_{i=1}^K b(N_{i,h}^k(s^{i,k}_h,a^{i,k}_h))+\sum_{i=1}^K\frac{10SH^2K\iota}{N_{i,h}^k(s^{i,k}_h,a^{i,k}_h)}.
\end{align}

\subsubsection{Introduction of Martingale Structure}

Define the difference term $\xi_h^k$ as
\begin{align*}
    \xi_h^k &:= \sum_{\mathbf{s}'}\mathbf{P}_{h}(\vect{s}'|{\vect{s}}_h^k,{\vect{a}}_h^k)[\bar{V}^{\pi^k,k}_{h+1}(\mathbf{s}')-V_{h+1}^{\pi^k}(\mathbf{s}')]-[\bar{V}^{\pi^k,k}_{h+1}(\bar{\vect{s}}_{h+1}^k)-V^{\pi^k}_{h+1}({\vect{s}}_{h+1}^k)],
\end{align*}
and the deterministic error term:
\begin{align*}
    \eta_h^k&:=2\sum_{i=1}^K b(N_{i,h}^k(s^{i,k}_h,a^{i,k}_h))+\sum_{i=1}^K\frac{10SH^2K\iota}{N_{i,h}^k(s^{i,k}_h,a^{i,k}_h)}.
\end{align*}

With these definitions, \eqref{eq:recursive_ineq} becomes:
\begin{align}
\label{eq:recursive_with_xi}
    \bar{V}^{\pi^k,k}_h({\vect{s}}_h^k)-V^{\pi^k}_h({\vect{s}}_h^k)
    &\leq\left(1+\frac{1}{H}\right)[\bar{V}^{\pi^k,k}_{h+1}(\bar{\vect{s}}_{h+1}^k)-V^{\pi^k}_{h+1}({\vect{s}}_{h+1}^k)]+\left(1+\frac{1}{H}\right)\xi_h^k+\eta_h^k.
\end{align}

\subsubsection{Recursive Expansion}

Applying \eqref{eq:recursive_with_xi} recursively from $h=1$ to $h=H$:
\begin{align}
\label{eq:telescope}
    &\bar{V}^{\pi^k,k}_1(\bar{\vect{s}}_1^k)-V^{\pi^k}_1(\bar{\vect{s}}_1^k)\notag\\
    &\leq\left(1+\frac{1}{H}\right)^H[\bar{V}^{\pi^k,k}_{H+1}(\bar{\vect{s}}_{H+1}^k)-V^{\pi^k}_{H+1}({\vect{s}}_{H+1}^k)]+\sum_{h=1}^H\left(1+\frac{1}{H}\right)^{H-h+1}\xi_h^k+\sum_{h=1}^H\left(1+\frac{1}{H}\right)^{H-h}\eta_h^k.
\end{align}

Since $\bar{V}^{\pi^k,k}_{H+1}(\bar{\vect{s}}_{H+1}^k)=V^{\pi^k}_{H+1}({\vect{s}}_{H+1}^k)=0$ and $(1+\frac{1}{H})^H\leq e$:
\begin{align}
\label{eq:one_episode_bound}
    \bar{V}^{\pi^k,k}_1(\bar{\vect{s}}_1^k)-V^{\pi^k}_1(\bar{\vect{s}}_1^k)
    &\leq e\sum_{h=1}^H\xi_h^k+e\sum_{h=1}^H\eta_h^k.
\end{align}

Summing over all $T$ episodes, we can get
\begin{align}
\label{eqn:regret_bound_1}
    \mathcal{R}_{T,\frac{1}{2}} &\leq\sum_{k=1}^T[\bar{V}^{\pi^k,k}_1(\bar{\vect{s}}_1)-V^{\pi^k}_1(\bar{\vect{s}}_1)]+T\epsilon\notag\\
    &\leq e\sum_{k=1}^T\sum_{h=1}^H\xi_h^k+e\sum_{k=1}^T\sum_{h=1}^H\eta_h^k+T\epsilon.
\end{align}

\subsubsection{Bounding the Deterministic Exploration Cost}

Expanding the definition of $\eta_h^k$ with $b(N)= H\sqrt{\frac{2S\iota}{N}}+\frac{3HS\iota}{N}$:
\begin{align*}
    \sum_{k=1}^T\sum_{h=1}^H\eta_h^k
    &\leq \sum_{k=1}^T\sum_{h=1}^H\left(\sum_{i=1}^K\frac{16SH^2K\iota}{N_{i,h}^k(s^{i,k}_h,a^{i,k}_h)}+2H\sum_{i=1}^K \sqrt{\frac{2S\iota}{N_{i,h}^k(s^{i,k}_h,a^{i,k}_h)}}\right),
\end{align*}
where we used $10SH^2K\iota + 2\cdot 3HS\iota \leq 16SH^2K\iota$.

For the first term, reordering summations and grouping by state-action pairs:
\begin{align*}
    &\sum_{k=1}^T\sum_{h=1}^H\sum_{i=1}^K\frac{16SH^2K\iota}{N_{i,h}^k(s^{i,k}_h,a^{i,k}_h)}
    =16SH^2K\iota\sum_{i=1}^K\sum_{h=1}^H\sum_{(s,a)}\sum_{\ell=1}^{N_{i,h}^T(s,a)}\frac{1}{\ell}\\
    &\leq 16SH^2K\iota\sum_{i=1}^K\sum_{h=1}^H\sum_{(s,a)}(1+\log N_{i,h}^T(s,a))
    \leq 32S^2AH^3K^2\iota\log T,
\end{align*}
where we used $\sum_{\ell=1}^n \frac{1}{\ell} \leq 1 + \log n$ and $N_{i,h}^T(s,a)\leq T$.

For the second term, using $\sum_{\ell=1}^n \frac{1}{\sqrt{\ell}} \leq 1 + 2\sqrt{n}$:
\begin{align*}
    &2H\sum_{k=1}^T\sum_{h=1}^H\sum_{i=1}^K \sqrt{\frac{2S\iota}{N_{i,h}^k(s^{i,k}_h,a^{i,k}_h)}}
    =2H\sqrt{2S\iota}\sum_{i=1}^K\sum_{h=1}^H\sum_{(s,a)}\sum_{\ell=1}^{N_{i,h}^T(s,a)}\frac{1}{\sqrt{\ell}}\\
    &\leq 2H\sqrt{2S\iota}\sum_{i=1}^K\sum_{h=1}^H\sum_{(s,a)}(1+2\sqrt{N_{i,h}^T(s,a)})\\
    &\leq 2\sqrt{2S\iota}H^2KSA+2H\sqrt{2S\iota}\sum_{i=1}^K\sum_{h=1}^H\sum_{(s,a)}2\sqrt{N_{i,h}^T(s,a)}\\
    &\leq 2\sqrt{2S\iota}H^2KSA+4H^2KS\sqrt{2AT\iota}.
\end{align*}

Combining:
\begin{align}
\label{eqn:regret_bound_2}
    \sum_{k=1}^T\sum_{h=1}^H\eta_h^k\leq 32S^2AH^3K^2\iota\log T +4H^2KS\sqrt{2AT\iota}+2\sqrt{2S\iota}H^2KSA.
\end{align}

\subsubsection{Bounding the Martingale Terms}

Let $\mathcal{F}_h^k$ denote the $\sigma$-algebra generated by all randomness up to step $h$ of episode $k$. The value gap $\bar{V}^{\pi^k,k}_{h+1}(\mathbf{s})-V^{\pi^k}_{h+1}(\mathbf{s})$ is $\mathcal{F}_h^k$-measurable for any $\mathbf{s}$ (since it depends only on data through episode $k-1$), while ${\vect{s}}_{h+1}^k$ is $\mathcal{F}_{h+1}^k$-measurable but not $\mathcal{F}_h^k$-measurable.

Computing the conditional expectation:
\begin{align*}
    \mathbb{E}[\xi_h^k|\mathcal{F}_h^k]
    &=\sum_{\mathbf{s}'}\mathbf{P}_{h}(\vect{s}'|{\vect{s}}_h^k,{\vect{a}}_h^k)[\bar{V}^{\pi^k,k}_{h+1}(\mathbf{s}')-V_{h+1}^{\pi^k}(\mathbf{s}')]-\mathbb{E}\left[\bar{V}^{\pi^k,k}_{h+1}(\bar{\vect{s}}_{h+1}^k)-V^{\pi^k}_{h+1}({\vect{s}}_{h+1}^k)\bigg|\mathcal{F}_h^k\right].
\end{align*}

Since ${\vect{s}}_{h+1}^k \sim \mathbf{P}_h(\cdot|{\vect{s}}_h^k,{\vect{a}}_h^k)$ given $\mathcal{F}_h^k$:
\begin{align*}
    \mathbb{E}\left[\bar{V}^{\pi^k,k}_{h+1}(\bar{\vect{s}}_{h+1}^k)-V^{\pi^k}_{h+1}({\vect{s}}_{h+1}^k)\bigg|\mathcal{F}_h^k\right]
    &=\sum_{\mathbf{s}'}\mathbf{P}_{h}(\vect{s}'|{\vect{s}}_h^k,{\vect{a}}_h^k)[\bar{V}^{\pi^k,k}_{h+1}(\mathbf{s}')-V_{h+1}^{\pi^k}(\mathbf{s}')].
\end{align*}

Therefore:
\begin{align}
\label{eq:martingale_property}
    \mathbb{E}[\xi_h^k|\mathcal{F}_h^k]=0,
\end{align}
confirming that $\{\xi_h^k\}$ is a martingale difference sequence.

Since $0\leq \bar{V}^{\pi^k,k}_{h+1}(\mathbf{s}'), V_{h+1}^{\pi^k}(\mathbf{s}') \leq H$:
\begin{align*}
    |\xi_h^k| &\leq \sum_{\mathbf{s}'}\mathbf{P}_{h}(\mathbf{s}'|{\vect{s}}_h^k,{\vect{a}}_h^k) |\bar{V}^{\pi^k,k}_{h+1}(\mathbf{s}')-V_{h+1}^{\pi^k}(\mathbf{s}')| + |\bar{V}^{\pi^k,k}_{h+1}(\bar{\vect{s}}_{h+1}^k)-V^{\pi^k}_{h+1}({\vect{s}}_{h+1}^k)|
    \leq 4H.
\end{align*}

Applying Azuma-Hoeffding with $TH$ martingale differences each bounded by $4H$, with probability at least $1-\frac{\delta}{3}$:
\begin{align*}
    \sum_{k=1}^T\sum_{h=1}^H\xi_h^k \leq \sqrt{2TH \cdot (4H)^2 \cdot \log(3/\delta)} = 4H\sqrt{2TH\log(3/\delta)}.
\end{align*}

Since $\iota = \log(10S^2AT/\delta) \geq \log(3/\delta)$:
\begin{align}
    \label{eqn:regret_bound_3}
    \sum_{k=1}^T\sum_{h=1}^H\xi_h^k\leq 4H\sqrt{2TH\iota}.
\end{align}

\subsubsection{Final Assembly}

Combining \eqref{eqn:regret_bound_1}, \eqref{eqn:regret_bound_2}, and \eqref{eqn:regret_bound_3}, and applying the union bound on the event $\mathcal{E}$ in Lemma \ref{lem:high_probability} and the Azuma's inequality used to bound $\sum_{k=1}^T\sum_{h=1}^H\xi_h^k$, we have that with probability at least $1-\delta$:
\begin{align*}
  \mathcal{R}_{T,\frac{1}{2}}
    &\leq  4eH\sqrt{2TH\iota}+e(32S^2AH^3K^2\iota\log T +4H^2KS\sqrt{2AT\iota}+2\sqrt{2S\iota}H^2KSA) + T\epsilon\\
    &\leq 4eH\sqrt{2TH\iota} + 96S^2AH^3K^2\iota\log T + 4eH^2KS\sqrt{2AT\iota} + T\epsilon.
\end{align*}

Choosing $\epsilon = O(1/T)$ and absorbing constants into $O$ notation:
\begin{align*}
    \mathcal{R}_{T,\frac{1}{2}} = O\left(S^2AH^3K^2\iota\log T + H^2KS\sqrt{AT\iota}\right).
\end{align*}
Since $\iota:=\log(6S^2ATHK/\delta)$, we can get 
\begin{align*}
    \mathcal{R}_{T,\frac{1}{2}} = O\left(S^2AH^3K^2\log(SATHK/\delta)\log T + H^2KS\sqrt{AT\log(SATHK/\delta)}\right).
\end{align*}
This establishes the desired $O(\sqrt{T})$ regret rate, completing the proof.

\section{Technical Lemmas}
   
\begin{lemma}[Bernstein’s Inequality]\label{lem:bernstein_ineq}
	Let $x_1,...,x_n$ be independent bounded random variables such that $\mE[x_i]=0$ and $|x_i|\leq \xi$ with probability $1$. Let $\sigma^2 = \frac{1}{n}\sum_{i=1}^n \mathrm{Var}[x_i]$, then with probability $1-\delta$ we have 
	$$
	\frac{1}{n}\sum_{i=1}^n x_i\leq \sqrt{\frac{2\sigma^2\cdot\log(1/\delta)}{n}}+\frac{2\xi}{3n}\log(1/\delta)
	$$
\end{lemma}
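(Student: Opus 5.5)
The plan is the Cramér--Chernoff (exponential moment) method, followed by inverting the resulting tail bound. Write $S_n=\sum_{i=1}^n x_i$ and $v=\sum_{i=1}^n\mathrm{Var}[x_i]=n\sigma^2$. For any $\lambda>0$, Markov's inequality applied to $e^{\lambda S_n}$ and independence give
\begin{align*}
\Pr[S_n\geq u]\leq e^{-\lambda u}\prod_{i=1}^n \mE[e^{\lambda x_i}].
\end{align*}
So the first task is a per-variable bound on the moment generating function.

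For the MGF bound I will use the function $\phi(y)=(e^y-1-y)/y^2$, extended by $\phi(0)=1/2$. This function is nondecreasing in $y$. Since $x_i\leq\xi$, we have $\lambda x_i\leq\lambda\xi$, and therefore
\begin{align*}
e^{\lambda x_i}=1+\lambda x_i+\lambda^2x_i^2\phi(\lambda x_i)\leq 1+\lambda x_i+\lambda^2x_i^2\phi(\lambda\xi).
\end{align*}
Taking expectations and using $\mE[x_i]=0$ gives $\mE[e^{\lambda x_i}]\leq 1+\lambda^2\mathrm{Var}[x_i]\phi(\lambda\xi)\leq\exp\big(\lambda^2\mathrm{Var}[x_i]\phi(\lambda\xi)\big)$.

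Next I will replace $\phi$ by a rational bound. Comparing Taylor coefficients via $k!\geq 2\cdot 3^{k-2}$ gives, for $0<\lambda\xi<3$,
\begin{align*}
\lambda^2\xi^2\phi(\lambda\xi)\leq \frac{\lambda^2\xi^2/2}{1-\lambda\xi/3}.
\end{align*}
Hence
\begin{align*}
\log\Pr[S_n\geq u]\leq -\lambda u+\frac{\lambda^2 v}{2(1-\lambda\xi/3)}.
\end{align*}
Choosing $\lambda=u/(v+\xi u/3)$ yields the classical Bernstein tail
\begin{align*}
\Pr[S_n\geq u]\leq \exp\Big(-\frac{u^2}{2(v+\xi u/3)}\Big).
\end{align*}

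The remaining step is the inversion. Set the right-hand side equal to $\delta$ and write $L=\log(1/\delta)$. Then $u$ solves $u^2-\tfrac{2\xi L}{3}u-2vL=0$, so
\begin{align*}
u=\tfrac{\xi L}{3}+\sqrt{\tfrac{\xi^2L^2}{9}+2vL}\leq \sqrt{2vL}+\tfrac{2\xi L}{3},
\end{align*}
using $\sqrt{a+b}\leq\sqrt a+\sqrt b$. Dividing by $n$ and substituting $v=n\sigma^2$ gives the stated bound $\sqrt{2\sigma^2L/n}+\tfrac{2\xi L}{3n}$.

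The main obstacle is the MGF step: the monotonicity of $\phi$ and the $1/(1-\lambda\xi/3)$ bound are the only non-routine parts. I will verify the monotonicity by writing $\phi(y)=\int_0^1(1-t)e^{ty}\,dt$, which is clearly increasing in $y$. The rest is bookkeeping. Note also that only the one-sided condition $x_i\leq\xi$ is used, so the hypothesis $|x_i|\leq\xi$ is more than enough.
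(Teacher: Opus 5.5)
Your proof is correct. The paper does not prove this lemma; it only quotes Bernstein's inequality among the technical lemmas as a known tool, so there is no competing argument to compare against. What you give is the standard Cram\'er--Chernoff derivation, which supplies exactly what the paper takes for granted.

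The individual steps all check out:
\begin{itemize}
\item the integral form $\phi(y)=\int_0^1(1-t)e^{ty}\,dt$, which gives monotonicity of $\phi$;
\item the coefficient comparison $k!\geq 2\cdot 3^{k-2}$ for $k\geq 2$;
\item the choice $\lambda=u/(v+\xi u/3)$, which yields the tail $\exp\bigl(-u^2/(2(v+\xi u/3))\bigr)$;
\item the quadratic inversion.
\end{itemize}
The last step is valid because $u_1:=\sqrt{2vL}+2\xi L/3$ dominates the root $u_0$, so $\{S_n>u_1\}\subseteq\{S_n\geq u_0\}$. Your remark that only the one-sided bound $x_i\leq\xi$ is needed is also right.

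Two degenerate cases deserve a sentence each:
\begin{itemize}
\item When $v=0$, your $\lambda$ gives $\lambda\xi=3$, which is outside the range where the rational bound on $\phi$ holds. In that case every $x_i=0$ almost surely and the claim is trivial. Alternatively, your MGF bound already gives $\mathbb{E}[e^{\lambda x_i}]\leq 1$ for every $\lambda>0$.
\item You implicitly need $\delta<1$, so that $L>0$ and hence $u_0>0$ and $\lambda>0$. For $\delta\geq 1$ the statement is vacuous.
\end{itemize}
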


\begin{lemma}[Hoeffding's Inequality]
    \label{hoeffding}
    Let $X_1,...,X_N$ be independent random variables such that $X_i$ is $R$-sub-Gaussian and $\mathbb{E}[X_i]=\mu$ for all $i$. Let $\overline{X}=\frac{1}{N}\sum_{i=1}^NX_i$. Then for any $t>0$,
    \begin{align*}
        P(|\overline{X}-\mu|\geq t)\leq 2\exp\{-\frac{Nt^2}{2R^2}\}.
    \end{align*}
\end{lemma}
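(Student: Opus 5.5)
The plan is the standard Chernoff (exponential moment) argument. I read ``$X_i$ is $R$-sub-Gaussian with mean $\mu$'' in its usual sense: for every $\lambda\in\mathbb{R}$,
\begin{align*}
\mathbb{E}\bigl[e^{\lambda(X_i-\mu)}\bigr]\leq e^{\lambda^2R^2/2}.
\end{align*}
For the bounded variables used earlier in the paper, namely marginal gains of $f$ taking values in $[0,1]$, this holds with $R=1/2$ by Hoeffding's lemma. I treat that as a standard fact and do not reprove it.

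First I show that $\overline{X}-\mu$ is $(R/\sqrt{N})$-sub-Gaussian. Since $\overline{X}-\mu=\frac{1}{N}\sum_{i=1}^N (X_i-\mu)$ and the $X_i$ are independent, the moment generating function factorizes:
\begin{align*}
\mathbb{E}\bigl[e^{\lambda(\overline{X}-\mu)}\bigr]=\prod_{i=1}^N \mathbb{E}\bigl[e^{(\lambda/N)(X_i-\mu)}\bigr]\leq \prod_{i=1}^N e^{\lambda^2R^2/(2N^2)}=e^{\lambda^2R^2/(2N)}.
\end{align*}

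Next I bound the upper tail. For any $\lambda>0$, Markov's inequality applied to $e^{\lambda(\overline{X}-\mu)}$ gives
\begin{align*}
P(\overline{X}-\mu\geq t)\leq e^{-\lambda t}\,\mathbb{E}\bigl[e^{\lambda(\overline{X}-\mu)}\bigr]\leq \exp\Bigl\{-\lambda t+\frac{\lambda^2R^2}{2N}\Bigr\}.
\end{align*}
The exponent is a quadratic in $\lambda$, minimized at $\lambda=Nt/R^2$. Substituting this value gives $P(\overline{X}-\mu\geq t)\leq \exp\{-Nt^2/(2R^2)\}$.

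The lower tail follows the same way. The sub-Gaussian condition holds for all real $\lambda$, so the variables $\mu-X_i$ satisfy the same bound, and repeating the argument gives $P(\mu-\overline{X}\geq t)\leq \exp\{-Nt^2/(2R^2)\}$. A union bound over the two one-sided events then yields the factor $2$ in the statement.

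No step here is a real obstacle. The only point needing care is the scaling $\lambda\mapsto\lambda/N$ when factorizing over independent terms, which is what produces the factor $N$ in the exponent. The instantiation used in Lemma~\ref{lem:knownP_high_prob} and Lemma~\ref{lem:high_probability}, for variables in $[0,1]$, also relies on Hoeffding's lemma to supply the sub-Gaussian constant.
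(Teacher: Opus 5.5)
Your Chernoff argument is correct and complete. Factorizing the moment generating function gives $\overline{X}-\mu$ variance proxy $R^2/N$, choosing $\lambda = Nt/R^2$ gives $\exp\{-Nt^2/(2R^2)\}$ for each tail, and the union bound supplies the factor $2$.

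The paper states this lemma without proof as a standard fact, so there is no competing route to compare. Your derivation, using the variance-proxy reading of $R$-sub-Gaussian, is the standard one the paper implicitly relies on. Your remark that Hoeffding's lemma gives $R=1/2$ for the $[0,1]$-valued marginal gains correctly recovers the $2e^{-2N\epsilon^2}$ bound used in Lemma~\ref{lem:knownP_high_prob}.
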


\end{document}